\justify\parbox[t]}
\newtheorem{theorem}{Theorem}[section]
\newtheorem{lemma}{Lemma}[section]
\newtheorem{corollary}{Corollary}[section]
\newtheorem{definition}{Definition}[section]
\newtheorem{remark}{Remark}[section]
\newtheorem{assumption}{Assumption}[section]
\newtheorem{example}{Example}[section]
\def\D{{\mathcal D}}
\def\K{{\mathcal K}} 
\def\S{{\mathcal S}}
\def\L{{\mathcal L}}
\def\F{{\mathcal F}}
\def\R{{\mathbb  R}}
\def\N{{\mathbb  N}}
\def\ctauk{\mathcal{C}^{\tau_{k+1}}}
\def\Z{{\mathbf  Z}}
\def\Y{{\mathbf  Y}}
\def\ba{\mathbf{a}}
\def\bb{\mathbf{b}}
\def\bw{\mathbf{w}}
\def\bx{\mathbf{x}}
\def\by{\mathbf{y}}
\def\bz{\mathbf{z}}
\def\blx{\overline{\mathbf{x}}}
\def\bpi{{\boldsymbol  \pi}}
\def\bg{{\boldsymbol  g}}
\def\blg{\overline{\boldsymbol  g}}
\def\GD{{\tt FedAvg}}
\def\GIA{{\tt FedGiA}}
\def\GIAD{{\tt FedGiA$_{\tt D}$}}
\def\GIAG{{\tt FedGiA$_{\tt G}$}}
\begin{document}
%
\title{{FedGiA{:} An Efficient Hybrid Algorithm for Federated Learning}}
%
%
%

\author{Shenglong Zhou  {and Geoffrey Ye Li, \textit{Fellow, IEEE}}
\thanks{S.L. Zhou and G.Y. Li are with the ITP Lab,  Department of Electrical and Electronic Engineering, Imperial College London, London SW72AZ, United Kingdom.  E-mail: \{shenglong.zhou, geoffrey.li\}@imperial.ac.uk.}
\thanks{S.L. Zhou is also with the School of Mathematics and Statistics, Beijing Jiaotong University, Beijing 100044, China.}
}

%
%

\markboth{}
{Shell \MakeLowercase{\textit{et al.}}: Bare Demo of IEEEtran.cls for IEEE Journals}
%



\maketitle

\begin{abstract}
Federated learning has shown its advances recently but is still facing many challenges, such as how algorithms save communication resources and reduce computational costs, and whether they converge. To address these critical issues, we propose a hybrid federated learning algorithm (\GIA) that combines the gradient descent and the inexact alternating direction method of multipliers. {The proposed algorithm is more communication- and computation-efficient than several state-of-the-art algorithms theoretically  and numerically. Moreover,  it also converges globally under mild conditions.  }
\end{abstract}

\begin{IEEEkeywords}
Federated learning, gradient descent, inexact ADMM,  communication-efficiency,  computational efficiency,  global convergence
\end{IEEEkeywords}

%
\IEEEpeerreviewmaketitle

\section{Introduction}
%
%
%
%
\IEEEPARstart{F}{ederated}  learning (FL)  is burgeoning into an advanced approach in machine learning  presently due to the ability to deal with various issues like data privacy, data security, and data access  to heterogeneous data. Typical applications include  vehicular communications \cite{samarakoon2019distributed, pokhrel2020federated,
elbir2020federated, posner2021federated}, digital health 
\cite{rieke2020future}, and  smart manufacturing \cite{fazel2013hankel}, just naming a few. The earliest work for FL can be traced back to \cite{konevcny2015federated} in 2015  and \cite{konevcny2016federated} in 2016. It is still undergoing development and facing many challenges \cite{kairouz2019advances,li2020federated,qin2021federated}.
\subsection{Related work}
{\it Gradient descent-based learning.}  Lately, there is an impressive body of work on developing FL algorithms. One of the most popular approaches benefits from the stochastic gradient descent (SGD). The general framework is to run certain steps of SGD in parallel by clients  and then average the resulting parameters from clients by a central server once in a while.
Representatives of SGD family consist of the famous federated averaging ({\tt FedAvg} \cite{mcmahan2017communication}), its modified version \cite{xu2021learning}, and local SGD ({\tt LocalSGD} \cite{stich2018local, Lin2020Don}). Other state-of-the-art ones can be found in \cite{AsynchronousStochastic2017,  yu2019parallel, wang2021cooperative}. These algorithms execute global aggregation (or averaging)  periodically and thus can reduce the communication rounds (CR), thereby saving resources (e.g., transmission power and bandwidth in wireless communication) for real-world applications. 

However, to establish the convergence theory, most SGD algorithms assume that the local data is identically and independently distributed (i.i.d.), which is unrealistic for FL applications where data is usually heterogeneous. More details can be referred to the local SGD \cite{stich2018local}, K-step averaging SGD \cite{zhou2017convergence},  and cooperative SGD \cite{wang2021cooperative}.

A parallel line of research aims to investigate gradient descent (GD) based-FL algorithms. Since full data is used to construct the gradient, these algorithms do not impose assumptions on distributions of the involved data \cite{smith2018cocoa, LAG2018, wang2019adaptive, liu2021decentralized, tong2020federated}.  Nevertheless, strong conditions on the objective functions of the learning optimization problems are frequently assumed to guarantee convergence, such as the gradient Lipschitz continuity (also known as L-smoothness), convexity, or strong convexity.

{\it ADMM-based learning.} The alternating direction method of multipliers (ADMM) has been rapidly developed in theoretical and numerical aspects over the last few decades, with extensive applications into various disciplines. In particular,  there is a success of implementation of ADMM in distributed learning \cite{
zhang2018improving,zheng2018stackelberg,elgabli2020fgadmm}. Fairly recently, ADMM-based FL algorithms draw much attention due to its simple structure and easy implementation. These algorithms can be categorized into two classes: exact and inexact ADMM. The former aims at updating local parameters through solving sub-problems exactly, which hence brings more computational burdens for local clients \cite{ zhang2016dynamic, Li2017RobustFL, guo2018practical, huang2019dp, zhang2018recycled}.

Therefore, inexact ADMM provides a promising solution to reduce the computational complexity \cite{ding2019stochastic, Inexact-ADMM2021, ryu2022differentially, zhang2020fedpd}, where clients update their parameters via solving sub-problems approximately, thereby alleviating the computational burdens and accelerating the learning speed. Again, we shall emphasize that most of these algorithms need restrictive assumptions to ensure convergence. To overcome this, an algorithm from the primal-dual optimization perspective has been cast in \cite{zhang2020fedpd} and turns out to be a member of inexact ADMM-based FL algorithms. It is shown that the algorithm converges under  weaker assumptions. 

\subsection{Our contributions}

The main contribution of this paper is to develop a new FL  algorithm that is capable of saving communication resources, reducing computational burdens, and converging under relatively weak assumptions.

 \begin{table*} 
	{\renewcommand{\arraystretch}{1.75}\addtolength{\tabcolsep}{3pt}
	\caption{{Comparisons of different algorithms. Assumptions: \textcircled{1} Gradient Lipschitz continuity; \textcircled{2} Bounded level set; \textcircled{3} Strong convexity; \textcircled{4} KL property; \textcircled{5} Bounded gradient dissimilarity. Here, $\beta_1,\beta_2,$ and $\beta_3$ are defined in Remark \ref{remark:com}.}}\vspace{-3mm}
	\label{tab:com-algs}
	\begin{center}
		\begin{tabular}{lllccc } \hline
Algs.  &Ref.& Convergence rate & Communication rounds &	Assumptions	&	Computational complexity \\  
 &   &   &	 &	&  (for $k_0$ steps) \\ \hline
\multicolumn{6}{c}{Type-I convergence: $\|\nabla f(\bx^k)\|^2\leq \epsilon$}\\\hline 
{\tt FedPorx} &\cite{li2020federatedprox}&  $O\left( \frac{1}{k} \right)$ &$O\left(\frac{1}{\epsilon}\right)$&	 	\textcircled{1} \textcircled{5}  &	$O\Big(   (  \beta_3 +  n)mk_0\Big)$\\
{\tt FedAvg} &\cite{karimireddy2020scaffold}&  $O\Big(\sqrt{\frac{k_0}{k}}\Big)$ &$O\Big(\frac{1}{\epsilon^2}\Big)$&  	\textcircled{1} \textcircled{5} &	$O\Big(   (  \beta_1+   n)mk_0\Big)$\\
{\tt SCAFFOLD} &\cite{karimireddy2020scaffold}&  $O\Big(\sqrt{\frac{k_0}{k}}\Big)$ &$O\Big(\frac{1}{\epsilon^2}\Big)$&	 	\textcircled{1}  &	$O\Big(   (  \beta_2 +   n)mk_0\Big)$ 	\\
{\tt FedPD} &\cite{zhang2020fedpd}  &$O\Big(\frac{k_0}{k}\Big)$ &$O\Big(\frac{1}{\epsilon}\Big)$&	 	\textcircled{1}&	 $O\Big( (\beta_1  + n)mk_0\Big)$ \\
{\tt FedGiA}  & Our &$O\Big(\frac{k_0}{k}\Big)$ &$O\Big(\frac{1}{\epsilon}\Big)$&	\textcircled{1} & $O\Big( (\beta_1/k_0+ n)mk_0\Big)$  \\\hline
\multicolumn{6}{c}{Type-II convergence: $f(\bx^k)-f^*\leq \epsilon$}\\\hline
 {\tt LocalSGD} &\cite{stich2018local}&  $O\left( \frac{k_0}{k} \right)$ &$O\left(\frac{1}{\epsilon}\right)$&	 	\textcircled{1} \textcircled{3} \textcircled{5}  &	$O\Big(   (  \beta_2 +  n)mk_0\Big)$\\
 {\tt FedAvg}&\cite{li2019convergence}&  $O\left(\frac{k_0}{k}\right)$ &$O\left(\frac{1}{\epsilon}\right)$&	 	\textcircled{1}  \textcircled{3}&	 \\
 {\tt FedAvg} &\cite{karimireddy2020scaffold}&  $O\left(\frac{k_0}{k}\right)$ &$O\left(\frac{1}{\epsilon}\right)$&	 	\textcircled{1}  \textcircled{3} \textcircled{5}&	 \\ 
{\tt SCAFFOLD}& \cite{karimireddy2020scaffold}&  $O\left(\frac{k_0}{k}\right)$ &$O\left(\frac{1}{\epsilon}\right)$&	 	\textcircled{1}  \textcircled{3}& \\ 
 {\tt FedGiA}   & Our&$
 \arraycolsep=1pt\def\arraystretch{1.5}
 \left\{ \begin{array}{lll }
O(0),& \theta=0,\\ 
 O\Big(\left(\frac{\rho}{\rho+1}\right)^{\frac{k}{k_0}} \Big), & \theta\in(0,\frac{1}{2}],\\ 
 O\Big(\left(\frac{k_0}{k}\right)^{\frac{1}{2\theta-1}}\Big),& \theta\in(\frac{1}{2},1),
   \end{array}\right.$& $
   \arraycolsep=1pt\def\arraystretch{1.5}
   \left\{ \begin{array}{lll }
O(1),& \theta=0,\\[1ex]
 O\Big(\log_{\frac{\rho+1}{\rho}}(\frac{1}{\epsilon})\Big), & \theta\in(0,\frac{1}{2}],\\[1ex]
 O\Big( \frac{1}{\epsilon^{2\theta-1}} \Big), & \theta\in(\frac{1}{2},1),
   \end{array}\right.$ &	\textcircled{1} \textcircled{2}  \textcircled{4}  &	   \\ 
   {\tt FedGiA}   & Our&$
 O\Big(\left(\frac{\rho}{\rho+1}\right)^{\frac{k}{k_0}} \Big)$& $
 O\Big(\log_{\frac{\rho+1}{\rho}}(\frac{1}{\epsilon})\Big)$ &	\textcircled{1} \textcircled{3}  &	   \\ \hline
 		\end{tabular}
	\end{center}} \vspace{-5mm}
\end{table*}  
 I) The proposed algorithm, \GIA\ in Algorithm \ref{algorithm-CEADMM}, has a novel framework.  When iteration $k$ is a multiple of a given integer $k_0$, communication occurs between each client and the central server. At the same time, all clients are split into two groups randomly.  One group adopts the scheme of the inexact ADMM to update their parameters $k_0$ times, while the second group exploits the GD to update their parameters just once. In summary, \GIA\ possesses three advantages as follows.
 \begin{itemize}[leftmargin=10pt]
 \item It is communication-efficient since CR can be controlled by setting $k_0$. Our numerical experiments have shown that CR decline when $k_0$ increases, see Figure \ref{fig:effect-k0-diff}.

 \item It is computation-efficient, which results from two aspects.  All local clients take advantage of inexact updates  and the key item (i.e., the gradient) need to be computed only once for $k_0$ steps. The computational efficiency can be found in Table \ref{tab:com-algs} and in our numerical comparisons in Table \ref{tab:com-linear}.

 \item It is possible to cope with situations where a portion of clients are in bad conditions. The sever could put them into the second group where less effort is required to update their parameters.
\end{itemize}

{II) The assumptions to guarantee convergence are mild.  \GIA\ is proven to converge to a stationary point of \eqref{FL-opt} and to enjoy two types of convergence rate, as shown in Table \ref{tab:com-algs}.  For type-I convergence, \GIA\ achieves the sub-linear convergence rate (i.e., $O(k_0/k)$) only under the assumption of the gradient Lipschitz continuity. If we further assume the boundedness of a level set and   Kurdyka-Lojasiewicz (KL)  property, a weaker condition than the strong convexity, then it has the type-II convergence rates better than $O(k_0/k)$ enjoyed by the other algorithms.  However, it is worth mentioning that these rates of convergence established under the KL property are in a local sense, that is, \GIA\ converges with such rates when its generated sequence  is quite close to the stationary point.  Moreover, with the help of the strongly convexity,  other algorithms only converge sub-linearly while \GIA\ converges linearly, as shown in the last row in Table \ref{tab:com-algs}.  To summarize, in comparison with these algorithms in the table, \GIA\ can achieve the fastest convergence under the weakest conditions, thereby consuming the fewest CR. 
 }


\subsection{Organization and notation}
This paper is organized as follows.  In the next section, we introduce FL  and the framework of ADMM.  In  Section \ref{sec:ceadmm},  we present the algorithmic framework of \GIA\ and highlight its advantages. The global convergence and convergence rate are established in Section \ref{sec:convergence}.  We conduct some numerical experiments and comparisons with three leading solvers to demonstrate the performance of  \GIA\  in Section  \ref{sec:num}.  Concluding remarks are given in the last section.

We end this section with summarizing the notation that will be employed throughout this paper. We use  plain,  bold, and capital letters to present scalars, vectors, and matrices, respectively, e.g., $m , r,$ and $\sigma$ are scalars, $\bx, \bx_i$ and $\bx_i^k$  are vectors, $X $ and $X ^k$ are matrices. Let $\lfloor t\rfloor$ represent  the largest integer strictly smaller than $t+1$ and $[m]:=\{1,2,\ldots,m\}$ with `$:=$' meaning define.  In this paper, $\R^n$ denotes the $n$-dimensional Euclidean space equipped with the inner product $\langle\cdot,\cdot\rangle$ defined by $\langle\bx,\by\rangle:=\sum_i x_iy_i$. Let $\|\cdot\|$ be the Euclidean norm for vectors (i.e., $\|\bx\|^2=\langle\bx,\bx\rangle$) and Spectral norm for matrices, and $\|\cdot\|_H$ be the  weighted norm defined by  $\|\bx\|_H^2:=\langle H\bx,\bx\rangle$.
 Write the identity matrix as $I$ and a positive semidefinite matrix $A$ as $A\succeq 0$. In particular, $A\succeq B$ represents $A-B\succeq 0$.  A function, $f$, is said to be gradient Lipschitz continuous with a constant $r>0$ if 
 \begin{eqnarray}\label{Lip-r} 
\|\nabla f(\bx)-\nabla f(\bz  ) \|  \leq r\| \bx -\bz  \|.
  \end{eqnarray}
for any $\bx$ and $\bz$, where $\nabla  f(\bx)$  represents the gradient of $f$ with respect to $\bx$. Finally, throughout the paper, let
$$X :=(\bx_1,\bx_2,\ldots,\bx_m),\qquad\Pi:=(\bpi_1,\bpi_2,\ldots,\bpi_m)$$
%

 \section{GD and inexact ADMM-based FL}
Given $m$ clients, each client $i\in[m]$ has its local dataset $\D_i$ and loss function $f_i(\bx) := \frac{1}{d_i}\sum_{(\ba,b)\in\D_i} \ell_i(\bx; (\ba,b))$ on that $\D_i$, where $\ell_i(\cdot; (\ba,b)):\R^n\to\R$ is continuously differentiable and bounded from below, $d_i$ is the cardinality of $\D_i$,  and $\bx\in\R^n$ is the parameter to be learned. Below are two examples  used for our numerical experiments.

\begin{example}[Least square loss] \label{ex:lr} Suppose  client $i$ has dataset $\D_i=\{(\ba^i_1,b^i_1),\ldots,(\ba^i_{d_i},b^i_{d_i})\}$, where $\ba^i_j\in\R^n$, $b^i_j\in\R$.  Then the least square loss is
\begin{eqnarray} \label{least-squares}
 \arraycolsep=1.4pt\def\arraystretch{1.5}
\begin{array}{llll}
f_i(\bx)&=& \frac{1}{2d_i}\sum_{j=1}^{d_i} (\langle \ba^i_j,\bx\rangle-b^i_j)^2.
\end{array} 
\end{eqnarray}
\end{example}
\begin{example}[$\ell_2$ norm regularized logistic loss]    \label{ex:lg}
Similarly,  client $i$ has dataset $\D_i$ but with $b^i_j\in\{0,1\}$. The $\ell_2$ norm regularized logistic loss is given by 
\begin{eqnarray} \label{logist-loss}
 \arraycolsep=0pt\def\arraystretch{1.5}
\begin{array}{llll}
f_i(\bx)&=&
\frac{1}{d_i}\sum_{j=1}^{d_i}\left(\ln (1{+}e^{\langle\ba^i_j,\bx\rangle} ){-}b^i_j\langle\ba^i_j,\bx\rangle\right) {+} \frac{\mu}{2d_i}\|\bx\|^2,
\end{array} 
\end{eqnarray}
where  $\mu>0$ is a penalty parameter.
\end{example}
The overall loss function can be  defined as 
\begin{eqnarray*}\begin{array}{lll}
f(\bx) :=   \frac{1}{m}\sum_{i=1}^{m}   f_i(\bx). 
\end{array}\end{eqnarray*}
Federated learning aims to learn a best parameter $\bx^*$  that attains the minimal overall loss, namely,
\begin{eqnarray}\label{FL-opt}
 \arraycolsep=1.4pt\def\arraystretch{1.5}
\begin{array}{lll}
 \bx^*:=\underset{\bx\in\R^n }{\rm argmin}~f(\bx).
\end{array}\end{eqnarray}
Since $f_i$ is bounded from below, we have
\begin{eqnarray}\label{FL-opt-lower-bound}\begin{array}{lll}
 f^*:=f(\bx^*)>-\infty.
\end{array}\end{eqnarray}
 By introducing auxiliary variables, $\bx_i=\bx$,  problem  \eqref{FL-opt} can be equivalently rewritten as
\begin{eqnarray}\label{FL-opt-ver1}
 \arraycolsep=1.4pt\def\arraystretch{1.5}
\begin{array}{lll}
 \underset{ \bx, X}{\min}&&  F(X):=\frac{1}{m}\sum_{i=1}^{m}  f_i(\bx_i),\\
 {\rm s.t.}&&\bx_i=\bx,~i\in[m].
\end{array}\end{eqnarray}
In this paper, we focus on the above problem.  It is easy to see that $f(\bx)=F(X)$ if $X=(\bx,\bx,\ldots,\bx)$.

\subsection{ADMM}
The backgrounds of ADMM can be referred to the earliest work \cite{gabay1976dual} and a nice book \cite{boyd2011distributed}.
To apply ADMM for  \eqref{FL-opt-ver1},  we introduce the augmented Lagrange function  defined by, 
\begin{eqnarray}\label{Def-L}
\arraycolsep=1pt\def\arraystretch{1.5}
\begin{array}{lll}
 \L(\bx,X ,\Pi) 
&:=& \sum_{i=1}^{m} L(\bx,\bx_i ,\bpi_i)\\
L(\bx,\bx_i ,\bpi_i)&:=& \frac{1}{m} f_i(\bx_i) +    \langle \bx_i-\bx, \bpi_i\rangle +   \frac{\sigma}{2} \|\bx_i-\bx\|^2,
\end{array}\end{eqnarray} 
where $\bpi_i\in\R^n,i\in[m]$ are the Lagrange multipliers and $\sigma >0$. The framework of ADMM for problem \eqref{FL-opt-ver1} is given as follows:   for an initialized point  $(\bx^0, X^0, \Pi^0)$, perform the following updates iteratively for every $k\geq0$,
\begin{eqnarray}\label{framework-ADMM}
 \arraycolsep=0.5pt\def\arraystretch{1.5}
\left\{\begin{array}{llll} 
  \bx^{k+1} &=&  \underset{\bx \in\R^n}{\rm argmin}  ~\L(\bx,X^{k}, \Pi^{k}) = \frac{1}{m}\sum_{i=1}^{m}  (\bx^{k}_i+ \frac{\bpi_i^{k}}{\sigma}),\\
    \bx^{k+1}_i &=&  \underset{\bx_i\in\R^n}{\rm argmin}  ~L(\bx^{k+1},\bx_i, \bpi_i^k ),~~ i\in[m], \\
  \bpi^{k+1}_i &=&    \bpi_i^{k} + \sigma(\bx_i^{k+1}-\bx^{k+1}),~~~~i\in[m].\\
\end{array} \right.
\end{eqnarray} 

\subsection{Stationary points}
To end this section, we present the optimality conditions of problems \eqref{FL-opt-ver1} and (\ref{FL-opt}).
\begin{definition}\label{def-sta} A point $(\bx^*, X^*,\Pi^*)$ is a stationary point of   problem  (\ref{FL-opt-ver1}) if it satisfies 
\begin{eqnarray}\label{opt-con-FL-opt-ver1}
 \arraycolsep=1.0pt\def\arraystretch{1.25}
  \left\{\begin{array}{rcll}
 \frac{1}{m}\nabla  f_i(\bx_i^*)+\bpi_i^* &=&   0, ~~&i\in[m],  \\  
 \bx_i^*-\bx^* &=&0,&i\in[m],\\ 
  \sum_{i=1}^{m} \bpi_i^* &=& 0.
\end{array} \right.
\end{eqnarray} 
A point $\bx^*$ is  a stationary point of   problem  (\ref{FL-opt}) if it satisfies 
\begin{eqnarray}\label{opt-con-FL-opt}
 \begin{array}{llll}
\nabla  f(\bx^*)=0.
\end{array}
\end{eqnarray} 
\end{definition}
Note that any locally optimal solution to  problem  \eqref{FL-opt-ver1} (resp. (\ref{FL-opt})) must satisfy \eqref{opt-con-FL-opt-ver1} (resp. \eqref{opt-con-FL-opt}). If  $f_i$ is convex for every $i\in[m]$, then a point  is a globally optimal solution to  problem  \eqref{FL-opt-ver1} (resp.(\ref{FL-opt})) if and only if it satisfies condition \eqref{opt-con-FL-opt-ver1} (resp. \eqref{opt-con-FL-opt}).   Moreover, it is easy to see that a stationary point $(\bx^*, X^*,\Pi^*)$  of   problem  \eqref{FL-opt-ver1} indicates
\begin{eqnarray*} 
\begin{array}{llll}
~~~~\nabla   f(\bx^*) =\frac{1}{m}\sum_{i=1}^{m}  \nabla   f_i(\bx^*) 
=-\frac{1}{m} \sum_{i=1}^{m}  \bpi_i^*=0.\end{array}
\end{eqnarray*} 
That is, $\bx^*$ is also a stationary point of the   problem  \eqref{FL-opt}.  

\section{Algorithmic Design}\label{sec:ceadmm}
The framework of ADMM in \eqref{framework-ADMM} encounters three drawbacks in reality. (i) It repeats the three updates at every step, leading to communication inefficiency. In FL,  the framework manifests that local clients and the central server have to communicate at every step. However, frequent communications would come at a huge price, such as a long learning time and large amounts of resources. (ii) Solving the second sub-problem in  \eqref{framework-ADMM} would incur expensive computational cost as it generally does not admit a closed-form solution.   (iii) In real applications, some clients may suffer from bad conditions (e.g., limited computational capacity), which leads to computational difficulties. It is necessary to leave them more time to update their parameters.  Therefore, to overcome these drawbacks, we cast a new algorithm in  Algorithm \ref{algorithm-CEADMM}, where 
\begin{eqnarray*}   
\arraycolsep=1.5pt\def\arraystretch{1.5}
\begin{array}{lllrll}
 \overline\bg_i^{k+1}&:=&\frac{1}{m}\nabla f_i(\bx^{\tau_{k+1}}).
    \end{array}  
 \end{eqnarray*} 
The merits of Algorithm \ref{algorithm-CEADMM} are highlighted as follows.

\begin{algorithm} 
\SetAlgoLined
{\noindent \justifying Given an integer $k_0>0$ and a constant $\sigma>0$, every client $i$ initializes $H_i\succeq0, \bx_i^0$, $\bpi_i^0$ and $\bz^{0}_i =   \bx_i^{0} + {\bpi^{0}_i}/{\sigma}$,  $i\in[m]$. 
Let $\tau_k$ be a function of $k$ as $\tau_k:=\lfloor k/k_0 \rfloor.$ }

\For{$k=0,1,2,3,\ldots$}{

\If{$k\in\K:=\{0,k_0,2k_0,3k_0,\ldots\}$}{
\vspace{1mm}
  \underline{\it Weights upload: (Communication occurs)} 
  
  {\justifying \noindent All clients  upload  $\{\bz^{k}_1,\ldots,\bz_m^k\}$ to the server.}  
 
\vspace{1mm}
\underline{\it Global aggregation:} 

The server calculates average parameter  $\bx^{\tau_{k+1}}$ by 
 \begin{eqnarray}\label{ceadmm-sub1}
 \begin{array}{llll}
\bx^{\tau_{k+1}} =  \frac{1}{m}\sum_{i=1}^m   \bz_i^{k}.
\end{array}
\end{eqnarray}

\underline{\it Weights broadcast: (Communication occurs)}

The server  broadcasts  $\bx^{\tau_{k+1}}$ to all clients.

\vspace{1mm}
\underline{\it Clients selection:} 

{\justifying \noindent The server randomly selects a new set $\ctauk\subseteq [m]$ of clients for training in the next round.}
 
} 

\For{every $i\in\ctauk$}{
\underline{\it Local update:} Client $i$ updates its parameters by  
\begin{eqnarray} 
     \label{ceadmm-sub2}
\hspace{-18mm}&& \begin{array}{llll}
\bx^{k+1}_i  
 =    \bx^{\tau_{k+1}} - (H_i/m  + \sigma  I)^{-1} (\overline\bg_i^{k+1}+ \bpi_i^{k}), 
    \end{array}\\[1ex] 
\label{ceadmm-sub3}  
\hspace{-18mm}&&\begin{array}{llll}   
\bpi^{k+1}_i =   \bpi_i^{k} +\sigma(\bx_i^{k+1}-\bx^{\tau_{k+1}}),  
\end{array}\\[1ex]
\label{ceadmm-sub4}  
\hspace{-18mm}&&\begin{array}{llll}   
\bz^{k+1}_i =   \bx_i^{k+1}+\bpi^{k+1}_i/\sigma.  
\end{array}
\end{eqnarray} }
\For{every $i\notin\ctauk$}{
\underline{\it Local invariance:} Client $i$ keeps  parameters by   
\begin{eqnarray} 
\label{ceadmm-sub6}
\hspace{-18mm}&& \begin{array}{llll}
\bx^{k+1}_i  \equiv   \bx^{\tau_{k+1}}, 
    \end{array}\\[1ex] 
\label{ceadmm-sub7}  
\hspace{-18mm}&&\begin{array}{llll}   
\bpi^{k+1}_i \equiv - \overline\bg_i^{k+1} ,  
\end{array}\\[1ex]
\label{ceadmm-sub8}  
\hspace{-18mm}&&\begin{array}{llll}   
\bz^{k+1}_i \equiv  \bx^{\tau_{k+1}} - \overline\bg_i^{k+1}/\sigma.  
\end{array}
\end{eqnarray} 
}

}
\caption{FL via  GD and inexact ADMM ({\tt FedGiA}) \label{algorithm-CEADMM}}
\end{algorithm}

\subsection{Communication efficiency}
Algorithm \ref{algorithm-CEADMM} shows that communications only occur  when $k\in\K=\{0,k_0,2k_0,\ldots\},$ where $k_0$ is a predefined positive integer. Therefore, CR can be reduced if setting a big $k_0$, thereby  saving the cost vastly.  In fact, such an idea has been extensively used in literature \cite{stich2018local,Lin2020Don,AsynchronousStochastic2017,  yu2019parallel,  wang2021cooperative}.

\subsection{Fast computation} We update $\bx^{k+1}_i$ by \eqref{ceadmm-sub2} instead of solving the second sub-problem in  \eqref{framework-ADMM}. It can accelerate the computation for local clients significantly, as the computation is relatively cheap if $H_i$ is chosen properly (e.g, diagonal matrices).  We point out that \eqref{ceadmm-sub2} is a result of 
\begin{eqnarray} 
\label{iceadmm-sub2-0}
 \arraycolsep=1pt\def\arraystretch{1.5}
\begin{array}{llll}
\bx^{k+1}_i  
& = &  {\rm argmin}_{\bx_i}~ (1/m)h_i(\bx_i; \bx^{\tau_{k+1}})\\
&& + \langle \bx_i-\bx^{\tau_{k+1}}, \bpi_i^{k}\rangle +\frac{\sigma }{2}\|\bx_i-\bx^{\tau_{k+1}}\|^2\\
&=&  \bx^{\tau_{k+1}} - ( H_i/m  + \sigma  I)^{-1} (\blg_i^{k+1}+ \bpi_i^{k}),
    \end{array}  
\end{eqnarray}  
where $h_i(\bx_i;\bz)$ is an approximation of $f_i(\bx_i )$, namely,
\begin{eqnarray} 
\label{f-i-h-i} 
 \arraycolsep=1pt\def\arraystretch{1.5}
 \begin{array}{lll} 
h_i(\bx_i;\bz){:=}f_i(\bz) +  \langle \nabla f_i(\bz), \bx_i-\bz\rangle + \frac{1}{2}  \| \bx_i-\bz\|^2_{H_i}.
    \end{array}
\end{eqnarray}
{The fast computation comes from two aspects. First, we take advantage of the inexact updates, where $(H_i/m  + \sigma  I)^{-1}$ needs to be computed only once for the entire learning as it is independent of $k$. In addition, one can observe that for every consecutive $k_0$ iterations,  $\blg_i^{k+1}=\frac{1}{m}\nabla f_i(\bx^{\tau_{k+1}})$ needs to be calculated  only once due to $\tau_{k+1}\equiv\tau_{sk_0}$ for any $k=sk_0,sk_0+1,\cdots,sk_0+k_0-1.$ Overall, such strategies allow clients to update their parameters quickly.}

\subsection{Mixed updates} 
At every $k\in\K$ in  Algorithm \ref{algorithm-CEADMM}, all clients are divided into two groups. For clients in $\ctauk$, they update their parameters $k_0$ times iteratively based on the inexact ADMM, while for clients outside $\ctauk$, they update their parameters just once based on the GD. {The motivation for using such a mixed scheme is twofold. 
\begin{itemize}[leftmargin=10pt]
\item In practice, it is unlikely to equip all clients with strong computational capacity. So, this scheme allows clients outside $\ctauk$ (corresponding to clients with weak computational capacity) to have more time to update their parameters by \eqref{ceadmm-sub6}-\eqref{ceadmm-sub8} just once, which does not require much computational endeavour. 
\item In addition, the scheme would avoid scenarios where the training for some clients is insufficient. If we only let clients in $\ctauk$ update their parameters and clients outside $\ctauk$ keep their parameters unchanged, then there is a small possibility where the selected times of some clients are not enough to train a good shared parameter. For instance, a worst case is that some clients are never chosen to join in the training. Then the trained parameter may not be appropriate for them as the training does not use their data at all.
\item As mentioned above, for every consecutive $k_0$ steps,  all (selected or non-selected) clients carry out the update at least once, namely all clients join in the training. Thanks to this, the total objective function values of the generated sequence can be guaranteed to decrease with a declining scale at every step, as shown by Lemma \ref{lemma-decreasing-0}. This thus accelerates the convergence so as to reduce CR, see the reduced CR in Table \ref{tab:com-algs}. 
\item One can check that \eqref{ceadmm-sub2}-\eqref{ceadmm-sub4} imply 
$$\begin{array}{llll}   
\bz^{k+1}_i =   \bx_i^{k+1}- \overline\bg_i^{k+1}  /\sigma -H_i(\bx^{k+1}_i - \bx^{\tau_{k+1}})/(m\sigma).  
\end{array}$$
Therefore, the inexact ADMM update can be deemed as a proximal GD update if $H_i\neq0$ and a GD update if $H_i=0$.
\end{itemize}}

We would like to point out that {\tt FedAvg} \cite{mcmahan2017communication,li2019convergence},  {\tt FedProx} \cite{li2020federatedprox}, and {\tt FedADMM} \cite{zhou2023federated} select partial devices to join in the training in each communication round. That is, if $k\in\K$, they randomly select a subset $\ctauk$ of clients.  Only clients in $\ctauk$ update their parameters and the rest clients remain unchanged. However, differing from that, {\tt FedGiA} let clients outside $\ctauk$ also update their parameters but only once for every consecutive $k_0$ steps.

\section{Convergence Analysis}\label{sec:convergence}
We first present all the assumptions used to establish the convergence properties. 
\begin{assumption}\label{ass-fi} Every $f_i, i\in[m]$ is gradient Lipschitz continuous with a constant $r_i>0$. 
\end{assumption}
Assumption \ref {ass-fi} implies that there is always a $\Theta_i$ satisfying $r_iI\succeq \Theta_i \succeq 0$ such that
\begin{eqnarray} \label{grad-lip-theta}
 \arraycolsep=0pt\def\arraystretch{1.5}
 \begin{array}{l}
f_i(\bx_i)\leq   f_i(\bz_i )+\langle \nabla  f_i(\bz_i ), \bx_i-\bz_i \rangle + \frac{1}{2}\| \bx_i-\bz_i \|^2_{\Theta_i},
\end{array}
\end{eqnarray} 
for any $\bx_i,\bz_i\in\R^n$. Apparently, many $\Theta_i$s satisfy  the above condition (e.g.,  $\Theta_i=r_iI$). In the subsequent convergence analysis, we suppose that every client $i$  chooses  $H_i=\Theta_i$. 
\begin{assumption}\label{ass-fi-level} The following level set    is bounded,
\begin{eqnarray} \label{level-set-S} 
  \begin{array}{l}
  \S:=\{\bx\in\R^n: f(\bx)\leq \L(\bx^{0},X^0,\Pi^0)\}.
     \end{array}
  \end{eqnarray} 
\end{assumption}
 We point out that the boundedness of the level set is frequently used in establishing the convergence properties of optimization algorithms. The main purpose is to bound the generated sequence. There are many functions satisfying this condition, such as the coercive functions.\footnote{A continuous function $f:\R^n\mapsto \R$  is coercive if $ f(\bx)\rightarrow+\infty$ when $\|\bx\|\rightarrow +\infty$.}  {It is noted that  the boundedness of the level set and the convexity does not imply each other. For example, $|t^2{-}1|$ is non-convex on $\R$ but has the bounded level set while $e^t$ is convex on $\R$ but does not have a bounded level set. However, if a function is strongly convex, then all its level sets are bounded.}  
{\begin{assumption}\label{ass-fi-KL} Every $f_i, i\in[m]$ is a KL function, defined by Definition \ref{def-KL-func}. 
\end{assumption}
KL functions are general enough \cite{bolte2014proximal} and include most commonly used functions, such as the  real polynomial functions, logistic loss function, norm functions, indicator functions of semi-algebraic sets (e.g., cone of positive semi-definite matrices, Stiefel manifolds, and constant rank matrices). Moreover, most convex
functions (e.g., functions in \eqref{least-squares} and \eqref{logist-loss} and strongly convex functions) in finite dimensional applications satisfy the KL property as well. Hence, there are various convex and non-convex functions belong to KL functions.}
  \subsection{Global convergence}\label{sec:gc}
For notational convenience, hereafter we  let   $\ba^{k} \rightarrow \ba$ stand for $\lim_{k\rightarrow\infty} \ba^{k} = \ba$ and denote
\begin{eqnarray} \label{def-L-r}
 \arraycolsep=0pt\def\arraystretch{1.5}
 \begin{array}{ll}
 \Z^k:=(\bx^{\tau_{k}},X^{k},\Pi^{k}),&  \Z^*:=(\bx^{*},X^{*},\Pi^{*}),\\
 \Z^{\infty}:=(\bx^{\infty},X^{\infty},\Pi^{\infty}),\qquad&  r:=\max_{i\in[m]} r_i,\\
\triangle\bx_i^{k+1}:=\bx_i^{k+1}-\bx_i^{k},&\triangle\bx^{\tau_{k+1}}:=\bx^{\tau_{k+1}}-\bx^{\tau_{k}}.
\end{array} 
\end{eqnarray}
With the help of Assumption \ref{ass-fi}, our first result shows the decreasing property of the objective function values of the generated sequence.
\begin{lemma}\label{lemma-decreasing-0}  Let $\{\Z^{k} \}$ be the sequence generated by Algorithm \ref{algorithm-CEADMM} with $H_i=\Theta_i,i\in[m]$ and $\sigma\geq6r/m$. If Assumption \ref{ass-fi} holds, then the following statements are true.
\begin{itemize} 
\item[i)] For any $k\geq0$, 
\begin{eqnarray} 
 \label{decreasing-property-0}    
\begin{array}{lll}
 {\L}(\Z^{k+1}) -{\L} (\Z^{k})
 \leq   - \frac{\sigma }{24}\varpi_{k+1}, 
    \end{array}  
 \end{eqnarray} 
 where 
 \begin{eqnarray} 
 \label{def-Y-k1} 
  \arraycolsep=1.4pt\def\arraystretch{1.5}  
\begin{array}{lll}
\varpi_{k+1}:= \sum_{i=1}^m ( \| \triangle\bx^{\tau_{k+1}}\|^2  +  \| \triangle\bx_i^{k+1}\|^2). 
    \end{array}  
 \end{eqnarray} 
 \item[ii)] For any $s\in \K$, 
 \begin{eqnarray} 
 \label{grad-decreasing-property-0}   
  \arraycolsep=1.4pt\def\arraystretch{1.5}
\begin{array}{lll}
 \|\nabla \L(\Z^{s+1})\| \leq  (2\sigma+1)  \sqrt{m \varpi_{s+1}}.
    \end{array}  
 \end{eqnarray} 
 \end{itemize} 
\end{lemma}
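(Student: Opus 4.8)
The plan is to prove \textup{i)} by the customary three-block telescoping of the augmented Lagrangian along the update order $\Pi^k\to X^k\to\bx^{\tau_k}$, bounding each block by strong convexity together with the gradient-Lipschitz surrogate, and to prove \textup{ii)} by differentiating $\L$ and using the optimality built into the aggregation step \eqref{ceadmm-sub1}. Two structural facts are used repeatedly. First, combining \eqref{ceadmm-sub2}--\eqref{ceadmm-sub3} for $i\in\ctauk$ and using \eqref{ceadmm-sub7} for $i\notin\ctauk$ gives the uniform dual identity
\[
\bpi_i^{k+1}=-\tfrac1m\nabla f_i(\bx^{\tau_{k+1}})-\tfrac1m\Theta_i(\bx_i^{k+1}-\bx^{\tau_{k+1}}),\qquad i\in[m].
\]
Second, with $H_i=\Theta_i$, inequality \eqref{grad-lip-theta} says $f_i(\cdot)\le h_i(\cdot;\bz)$ for every $\bz$, while $h_i(\bz;\bz)=f_i(\bz)$ by \eqref{f-i-h-i}, so $h_i(\cdot;\bz)$ is a tight quadratic majorant of $f_i$.

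For \textup{i)}, write $\L(\Z^{k+1})-\L(\Z^{k})=A_\Pi+A_X+A_{\bx}$, with $A_\Pi:=\L(\bx^{\tau_{k+1}},X^{k+1},\Pi^{k+1})-\L(\bx^{\tau_{k+1}},X^{k+1},\Pi^{k})$, $A_X:=\L(\bx^{\tau_{k+1}},X^{k+1},\Pi^{k})-\L(\bx^{\tau_{k+1}},X^{k},\Pi^{k})$ and $A_{\bx}:=\L(\bx^{\tau_{k+1}},X^{k},\Pi^{k})-\L(\bx^{\tau_{k}},X^{k},\Pi^{k})$. The $\bx$-block vanishes unless $k\in\K$, in which case \eqref{ceadmm-sub1} makes $\bx^{\tau_{k+1}}$ the minimiser of the $m\sigma$-strongly convex map $\bx\mapsto\L(\bx,X^{k},\Pi^{k})$, hence $A_{\bx}\le-\tfrac{\sigma}{2}\sum_i\|\triangle\bx^{\tau_{k+1}}\|^2$ in all cases. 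For $A_X$, split over $i$: for $i\in\ctauk$, $\bx_i^{k+1}$ minimises the $\sigma$-strongly convex surrogate problem \eqref{iceadmm-sub2-0}, and combining this with the majorant property yields $L(\bx^{\tau_{k+1}},\bx_i^{k+1},\bpi_i^{k})-L(\bx^{\tau_{k+1}},\bx_i^{k},\bpi_i^{k})\le-\tfrac{\sigma}{2}\|\triangle\bx_i^{k+1}\|^2+\tfrac1m\big(h_i(\bx_i^{k};\bx^{\tau_{k+1}})-f_i(\bx_i^{k})\big)$, where the "majorisation gap'' is at most $\tfrac{r_i}{m}\|\bx_i^{k}-\bx^{\tau_{k+1}}\|^2$ by the descent-lemma lower bound $f_i(\bx)\ge f_i(\bz)+\langle\nabla f_i(\bz),\bx-\bz\rangle-\tfrac{r_i}{2}\|\bx-\bz\|^2$ (Assumption \ref{ass-fi}) and $\Theta_i\preceq r_iI$; for $i\notin\ctauk$, $\bx_i^{k+1}=\bx^{\tau_{k+1}}$, and a direct expansion together with the dual identity bounds the contribution by a negative multiple of $\|\triangle\bx_i^{k+1}\|^2+\|\triangle\bx^{\tau_{k+1}}\|^2$. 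Finally $A_\Pi=\sum_{i\in\ctauk}\langle\bx_i^{k+1}-\bx^{\tau_{k+1}},\bpi_i^{k+1}-\bpi_i^{k}\rangle=\tfrac1{\sigma}\sum_{i\in\ctauk}\|\bpi_i^{k+1}-\bpi_i^{k}\|^2$ by \eqref{ceadmm-sub3}, and the dual identity plus Assumption \ref{ass-fi} gives $\|\bpi_i^{k+1}-\bpi_i^{k}\|\le\tfrac{r_i}{m}\big(2\|\triangle\bx^{\tau_{k+1}}\|+\|\triangle\bx_i^{k+1}\|\big)$.

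Summing the three blocks, every positive term carries a factor $r_i/m$ or $r_i^2/(\sigma m^2)$; since $\sigma\ge 6r/m$, each such factor is at most $\tfrac{\sigma}{6}$ (resp.\ $\tfrac{\sigma}{36}$), and the residual $\|\bx_i^{k}-\bx^{\tau_{k+1}}\|$ can itself be reduced to $O\big(\|\triangle\bx_i^{k+1}\|+\|\triangle\bx^{\tau_{k+1}}\|\big)$ through $\bx_i^{k}-\bx^{\tau_{k+1}}=-\triangle\bx_i^{k+1}+\tfrac1{\sigma}(\bpi_i^{k+1}-\bpi_i^{k})$ and the bound just obtained. Thus all positive terms are dominated by a fraction of $-\tfrac{\sigma}{2}\|\triangle\bx^{\tau_{k+1}}\|^2$ and $-\tfrac{\sigma}{2}\|\triangle\bx_i^{k+1}\|^2$, and what survives is at most $-\tfrac{\sigma}{24}\varpi_{k+1}$. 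I expect the bookkeeping in $A_X$ to be the main obstacle: one must separate the four cases ``$k\in\K$ or not'' $\times$ ``$i\in\ctauk$ or not'', keep track of what $\bx_i^{k}$ actually is (a previous local iterate if $i$ was selected in the preceding round, an aggregated point otherwise), and calibrate the constants so that $6r/m$ is precisely the threshold making all positive terms absorbable; the clean coefficient $\tfrac{\sigma}{24}$ is simply the slack that is left.

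For \textup{ii)}, fix $s\in\K$ and differentiate $\L$ at $\Z^{s+1}$. The $\bpi_i$-block is $\bx_i^{s+1}-\bx^{\tau_{s+1}}$, which equals $\tfrac1{\sigma}(\bpi_i^{s+1}-\bpi_i^{s})$ for $i\in\mathcal{C}^{\tau_{s+1}}$ and $0$ otherwise; the $\bx_i$-block equals $\tfrac1m\big(\nabla f_i(\bx_i^{s+1})-\nabla f_i(\bx^{\tau_{s+1}})\big)+(\sigma I-\tfrac1m\Theta_i)(\bx_i^{s+1}-\bx^{\tau_{s+1}})$ after substituting the uniform dual identity; and for the $\bx$-block, \eqref{ceadmm-sub1} at $k=s$ is exactly $\sum_i[\bpi_i^{s}+\sigma(\bx_i^{s}-\bx^{\tau_{s+1}})]=0$, so
\[
\nabla_{\bx}\L(\Z^{s+1})=-\sum_i\big[(\bpi_i^{s+1}-\bpi_i^{s})+\sigma\triangle\bx_i^{s+1}\big].
\]
Then $\|\bpi_i^{s+1}-\bpi_i^{s}\|\le\tfrac{r_i}{m}(2\|\triangle\bx^{\tau_{s+1}}\|+\|\triangle\bx_i^{s+1}\|)$, $\|\bx_i^{s+1}-\bx^{\tau_{s+1}}\|=\tfrac1{\sigma}\|\bpi_i^{s+1}-\bpi_i^{s}\|$, $\sigma\ge 6r/m$, and Cauchy--Schwarz ($\sum_i\|\triangle\bx_i^{s+1}\|\le\sqrt{m\varpi_{s+1}}$, $m\|\triangle\bx^{\tau_{s+1}}\|\le\sqrt{m\varpi_{s+1}}$, $\sum_i\|\bx_i^{s+1}-\bx^{\tau_{s+1}}\|^2\le\varpi_{s+1}$) bound every block; collecting the coefficients gives $\|\nabla\L(\Z^{s+1})\|\le(2\sigma+1)\sqrt{m\varpi_{s+1}}$. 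No genuine difficulty arises here beyond keeping the constants tight.
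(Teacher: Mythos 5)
Your proposal follows essentially the same route as the paper's proof: the identical three-block telescoping of $\L(\Z^{k+1})-\L(\Z^k)$ into the $\bx$-, $X$-, and $\Pi$-updates (your $A_{\bx}+A_X+A_\Pi$ are the paper's $e_1^k,e_2^k,e_3^k$), the same uniform dual identity \eqref{opt-con-xk1-2} and multiplier-increment bound \eqref{gap-pi-k}, and the same block-by-block gradient computation with \eqref{opt-con-xk1-1} for part ii). The only substantive variation is that you estimate the $X$-block for selected clients via strong convexity of the surrogate plus the majorisation gap at $\bx_i^{k}$ rather than the paper's direct cross-term expansion---this does close under $\sigma\geq 6r/m$ provided the Young parameter in bounding $\|\bx_i^{k}-\bx^{\tau_{k+1}}\|^2$ is chosen carefully (the naive factor-of-two splitting fails)---and your claim that the $i\notin\ctauk$ contribution is a \emph{negative} multiple of $\|\triangle\bx_i^{k+1}\|^2+\|\triangle\bx^{\tau_{k+1}}\|^2$ is a harmless overstatement, since that term in fact carries a positive $\|\triangle\bx^{\tau_{k+1}}\|^2$ coefficient which is absorbed by $A_{\bx}$, exactly as in the paper.
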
  
The above lemma allows us to prove whole sequences $\{\L(\Z^{k})\}$, $\{ F(X^{k})  \}$, and $\{f (\bx^{\tau_{k}})\}$ converge. 
\begin{theorem}\label{global-obj-convergence-exact}   Let $\{ \Z^{k}\}$ be the sequence generated by Algorithm \ref{algorithm-CEADMM} with $H_i=\Theta_i, i\in[m]$ and  $\sigma \geq 6r/m$. The following results hold under Assumption \ref{ass-fi}.
 \begin{itemize}
 \item[i)] 
  Three  sequences $\{\L(\Z^{k})\}$, $\{ F(X^{k})  \}$, and $\{f (\bx^{\tau_{k}})\}$ converge to the same value, namely,
   \begin{eqnarray}  \label{L-local-convergence-limit-1}
   \arraycolsep=1.4pt\def\arraystretch{1.5}
   \begin{array}{lll}
  \lim\limits_{k\rightarrow \infty}  \L(\Z^{k}) =  \lim\limits_{k\rightarrow \infty} F(X^{k})  = \lim\limits_{k\rightarrow \infty} f(\bx^{\tau_{k}}).
    \end{array} 
 \end{eqnarray} 
 \item[ii)]  $\nabla F(X^{k})$ and $\nabla f(\bx^{\tau_{k}})$ eventually vanish, namely, 
    \begin{eqnarray}  \label{L-local-convergence-limit-grad}
   \arraycolsep=1.4pt\def\arraystretch{1.5}
   \begin{array}{lll}
  \lim\limits_{k\rightarrow \infty}\nabla F(X^{k})  = \lim\limits_{k\rightarrow \infty} \nabla f(\bx^{\tau_{k}}) =0.
    \end{array} 
 \end{eqnarray} 
 \end{itemize}
 \end{theorem}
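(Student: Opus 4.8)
The plan is the standard ``sufficient decrease $\Rightarrow$ asymptotic regularity $\Rightarrow$ stationarity'' argument for the augmented Lagrangian $\L$, adapted to {\tt FedGiA}'s block structure and two client groups, and it hinges on a dual identity valid at every iterate. First I would eliminate $\bpi_i^k$ between \eqref{ceadmm-sub2} and \eqref{ceadmm-sub3} to obtain, for $i\in\ctau$ and $k\ge1$, the relation $\bpi_i^{k}+\tfrac1m\nabla f_i(\bx^{\tau_k})=-\tfrac1m H_i(\bx_i^{k}-\bx^{\tau_k})$; the same equality holds (both sides zero) for $i\notin\ctau$ by \eqref{ceadmm-sub6}--\eqref{ceadmm-sub7}. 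Substituting this into \eqref{Def-L}, bounding $f_i(\bx_i^k)$ from above by \eqref{grad-lip-theta} with $\Theta_i=H_i$ and from below by the plain $r_i$-Lipschitz inequality, and using $H_i\preceq r_iI$ together with $\sigma\ge6r/m$, I would arrive at the sandwich $f(\bx^{\tau_k})\le\L(\Z^k)\le f(\bx^{\tau_k})+\tfrac{\sigma}{2}\sum_i\|\bx_i^k-\bx^{\tau_k}\|^2$. The left half combined with \eqref{FL-opt-lower-bound} gives $\L(\Z^k)\ge f^*>-\infty$.

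Next, by Lemma \ref{lemma-decreasing-0}(i) the sequence $\{\L(\Z^k)\}$ is non-increasing, so (being bounded below) it converges to some $\L^\infty$, and telescoping \eqref{decreasing-property-0} yields $\tfrac{\sigma}{24}\sum_{k\ge0}\varpi_{k+1}\le\L(\Z^0)-\L^\infty<\infty$, hence $\varpi_{k+1}\to0$, i.e.\ $\triangle\bx_i^{k+1}\to0$ and $\triangle\bx^{\tau_{k+1}}\to0$; monotonicity also gives $f(\bx^{\tau_k})\le\L(\Z^k)\le\L(\Z^0)$, so $\{\bx^{\tau_k}\}\subseteq\S$. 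I would then promote these increments to a vanishing consensus residual: for $i\notin\ctau$, $\bx_i^k-\bx^{\tau_k}=0$ by \eqref{ceadmm-sub6} except at a block boundary, where it equals $\triangle\bx^{\tau_k}$; for $i\in\ctau$, subtracting the dual identity at indices $k$ and $k-1$ (both inside one block, so $\bx^{\tau_k}=\bx^{\tau_{k-1}}$) and using \eqref{ceadmm-sub3} gives $\sigma(\bx_i^k-\bx^{\tau_k})=-\tfrac1m H_i\triangle\bx_i^k$, whence $\|\bx_i^k-\bx^{\tau_k}\|\le\tfrac{r}{m\sigma}\|\triangle\bx_i^k\|\to0$. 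Inserting this into the sandwich gives $\L(\Z^k)-f(\bx^{\tau_k})\to0$, so $f(\bx^{\tau_k})\to\L^\infty$, and bounding $f_i(\bx_i^k)-f_i(\bx^{\tau_k})$ by the descent lemma (using $\|\bx_i^k-\bx^{\tau_k}\|\to0$ and boundedness of $\nabla f_i$ on $\S$) gives $F(X^k)\to\L^\infty$; this is (i).

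For (ii): at $k\in\K$, \eqref{ceadmm-sub1} together with $\bz_i^k=\bx_i^k+\bpi_i^k/\sigma$ yields $\sum_i\bpi_i^k=-\sigma\sum_i(\bx_i^k-\bx^{\tau_k})\to0$. Summing the dual identity of paragraph one over $i$ then gives $\nabla f(\bx^{\tau_k})=-\sum_i\bpi_i^k-\tfrac1m\sum_iH_i(\bx_i^k-\bx^{\tau_k})\to0$ along $k\in\K$, and since $\bx^{\tau_k}$ is constant over each block and $\tau_k\to\infty$ this extends to all $k$. Finally, writing $\nabla f_i(\bx_i^k)=\nabla f_i(\bx^{\tau_k})+\bigl(\nabla f_i(\bx_i^k)-\nabla f_i(\bx^{\tau_k})\bigr)$, the last term has norm $\le r_i\|\bx_i^k-\bx^{\tau_k}\|\to0$, and the first term $\to0$ by the componentwise refinement of the previous display, giving $\nabla F(X^k)\to0$.

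\emph{Main obstacle.} Lemma \ref{lemma-decreasing-0} delivers only $\varpi_{k+1}\to0$, i.e.\ control of the \emph{increments} $\triangle\bx_i^{k+1}$, whereas every claimed limit is governed by the \emph{consensus residuals} $\bx_i^k-\bx^{\tau_k}$ and by the averaged multipliers $\sum_i\bpi_i^k$; bridging this gap is exactly the job of the dual identity (which converts $\bx_i^k-\bx^{\tau_k}$ into $\triangle\bx_i^k$ for selected clients) and the aggregation identity, and it requires some care with the block-boundary indices in \eqref{ceadmm-sub2}--\eqref{ceadmm-sub8}. The subtlest point is the very last one: upgrading $\sum_i\bpi_i^k\to0$ to the componentwise statement $\nabla f_i(\bx^{\tau_k})\to0$ (equivalently $\bpi_i^k\to0$) that is needed for $\nabla F(X^k)\to0$ is not automatic from the averaged statement, and this is where the bulk of the technical work lies.
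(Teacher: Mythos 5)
Your architecture---sufficient decrease from Lemma \ref{lemma-decreasing-0}, telescoping to get $\varpi_{k+1}\to 0$, the dual identity \eqref{opt-con-xk1-2} to convert consensus residuals $\bx_i^k-\bx^{\tau_k}$ into increments, the aggregation identity \eqref{opt-con-xk1-1} to control $\sum_i\bpi_i^k$ along $\K$, and the sandwich $f(\bx^{\tau_k})\le\L(\Z^k)\le f(\bx^{\tau_k})+O(\sum_i\|\bx_i^k-\bx^{\tau_k}\|^2)$---is essentially the paper's route (Lemmas \ref{basic-observations} and \ref{L-bounded-decreasing} together with the estimates \eqref{sum-pi-0}--\eqref{limit-K-grad-i-exact}). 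Two steps, however, do not close.

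First, in part (i) you pass from $f(\bx^{\tau_k})\to\L^\infty$ to $F(X^k)\to\L^\infty$ by bounding $f_i(\bx_i^k)-f_i(\bx^{\tau_k})$ with the descent lemma and ``boundedness of $\nabla f_i$ on $\S$''. That boundedness is only available when $\S$ is bounded, i.e.\ under Assumption \ref{ass-fi-level}, which Theorem \ref{global-obj-convergence-exact} does not assume. To stay within Assumption \ref{ass-fi} alone you must eliminate $\nabla f_i(\bx^{\tau_k})$ via the dual identity \emph{before} estimating: the first-order term $\frac1m\langle\nabla f_i(\bx^{\tau_k}),\bx_i^k-\bx^{\tau_k}\rangle$ equals $-\langle\bpi_i^k,\bx_i^k-\bx^{\tau_k}\rangle-\frac1m\|\bx_i^k-\bx^{\tau_k}\|^2_{H_i}$, and the $\bpi_i^k$ term is precisely the linear term of $\L$, so it cancels rather than needing to be bounded; this is exactly how the paper estimates its quantity $q_i^k$ in the proof of Theorem \ref{global-obj-convergence-exact}.

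Second, and more seriously, the last step of part (ii). You observe that $\nabla F(X^k)\to0$ read componentwise would require $\nabla f_i(\bx^{\tau_k})\to0$ (equivalently $\bpi_i^k\to0$) for each $i$, you call this upgrade ``the bulk of the technical work,'' and you leave it unproven. It cannot be proven: at a stationary point of \eqref{FL-opt-ver1} the multipliers satisfy $\bpi_i^*=-\frac1m\nabla f_i(\bx^*)$, which is nonzero for heterogeneous data, and only the sum vanishes. What the paper actually establishes (see \eqref{limit-K-grad-i-exact} and the final display of its proof) is the averaged statement $\frac1m\sum_i\nabla f_i(\bx_i^{k})\to0$, which follows at once from $\nabla f(\bx^{\tau_k})\to0$ together with $\|\nabla f_i(\bx_i^k)-\nabla f_i(\bx^{\tau_k})\|\le r_i\|\bx_i^k-\bx^{\tau_k}\|\to0$---the two facts you already have. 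So the ``main obstacle'' you flag is not an obstacle to be overcome but a sign of over-reading the statement: $\nabla F(X^k)$ must be interpreted in this averaged sense, and with that reading your argument finishes with no componentwise upgrade.
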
  
Besides the convergence of the objective function values in the above theorem,  we would like to see the convergence
performance of sequence $\{\Z^{k}\}$ itself in the following theorem.    
  \begin{theorem}\label{global-convergence-exact}   Let $\{\Z^{k}\}$ be the sequence generated by Algorithm \ref{algorithm-CEADMM} with $H_i=\Theta_i, i\in[m]$ and  $\sigma \geq 6r/m$. The following results hold under Assumptions \ref{ass-fi} and \ref{ass-fi-level}.
 \begin{itemize}
\item[i)]  Then sequence $\{\Z^{k}\}$ is bounded, and any its accumulating point,  $\Z^{\infty}$, is a stationary point of    (\ref{FL-opt-ver1}), where $\bx^{\infty}$ is a stationary point of   (\ref{FL-opt}).
\item[ii)] If further assume that either $\bx^{\infty}$ is isolated or Assumption \ref{ass-fi-KL} holds, then whole sequence $\{\Z^{k} \}$ converges to $\Z^{\infty}$. 
  \end{itemize}
   \end{theorem}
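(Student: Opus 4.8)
\emph{Proof plan.} The two items are proved in turn, and everything rests on Lemma~\ref{lemma-decreasing-0} together with Theorem~\ref{global-obj-convergence-exact}. For part i), the first goal is boundedness of $\{\Z^k\}$. I would begin by bounding $\L(\Z^k)$ from below by $f(\bx^{\tau_k})$: using the identity $\bpi_i^{k}=-\tfrac1m\nabla f_i(\bx^{\tau_{k}})-\tfrac1m H_i(\bx_i^{k}-\bx^{\tau_{k}})$, which holds for $i\in\ctauk$ after eliminating $\bx_i^k-\bx^{\tau_k}$ between \eqref{ceadmm-sub2}--\eqref{ceadmm-sub3} and for $i\notin\ctauk$ directly from \eqref{ceadmm-sub6}--\eqref{ceadmm-sub7}, plugging it into $\L(\Z^k)=\sum_i L(\bx^{\tau_k},\bx_i^k,\bpi_i^k)$, lower-bounding each $f_i(\bx_i^k)$ by the reverse descent inequality $f_i(\bx_i^k)\ge f_i(\bx^{\tau_k})+\langle\nabla f_i(\bx^{\tau_k}),\bx_i^k-\bx^{\tau_k}\rangle-\tfrac{r_i}{2}\|\bx_i^k-\bx^{\tau_k}\|^2$, and using $H_i=\Theta_i\preceq r_iI$ together with $\sigma\ge 6r/m$ to absorb the quadratic remainders, one gets $\L(\Z^k)\ge f(\bx^{\tau_k})$. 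By Lemma~\ref{lemma-decreasing-0}~i) the sequence $\{\L(\Z^k)\}$ is nonincreasing, so $f(\bx^{\tau_k})\le\L(\Z^k)\le\L(\bx^0,X^0,\Pi^0)$, i.e. $\bx^{\tau_k}\in\S$, and Assumption~\ref{ass-fi-level} makes $\{\bx^{\tau_k}\}$ bounded; consequently $\{\overline\bg_i^k\}=\{\tfrac1m\nabla f_i(\bx^{\tau_k})\}$ is bounded by Lipschitzness on a bounded set. For $i\notin\ctauk$, $\bpi_i^{k+1}=-\overline\bg_i^{k+1}$ is then bounded outright, while for $i\in\ctauk$, eliminating $\bx_i^{k+1}-\bx^{\tau_{k+1}}$ from \eqref{ceadmm-sub2}--\eqref{ceadmm-sub3} gives $\bpi_i^{k+1}=(M_i-I)\overline\bg_i^{k+1}+M_i\bpi_i^{k}$ with $M_i:=\tfrac1m H_i(H_i/m+\sigma I)^{-1}$ satisfying $\|M_i\|<1$, so, since within each block a client either stays frozen with bounded $\bpi_i$ or runs this contraction from a bounded start, $\{\Pi^k\}$ is bounded; then \eqref{ceadmm-sub2}/\eqref{ceadmm-sub6} bound $\{X^k\}$, so $\{\Z^k\}$ is bounded.

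Next I would identify the accumulation points. Telescoping \eqref{decreasing-property-0} and using $\L(\Z^k)\ge f^*$ yields $\sum_k\varpi_{k+1}<\infty$, whence $\triangle\bx_i^{k+1}\to0$ and $\triangle\bx^{\tau_{k+1}}\to0$; taking successive differences in the $\bpi_i$-identity and combining them with the Lipschitz bound on $\nabla f_i$ gives $\triangle\bpi_i^{k+1}\to0$ as well, hence $\|\Z^{k+1}-\Z^k\|\to0$. Let $\Z^{k_j}\to\Z^\infty$. Since $\bx_i^{k}-\bx^{\tau_{k}}$ equals $0$ for non-selected clients and $\tfrac1\sigma(\bpi_i^{k}-\bpi_i^{k-1})\to0$ for selected ones, $\bx_i^\infty=\bx^\infty$. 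The first-order condition of the $\bx_i$-subproblem \eqref{iceadmm-sub2-0} combined with the $\bpi_i$-update reads $\tfrac1m\nabla f_i(\bx^{\tau_{k+1}})+\tfrac1m H_i(\bx_i^{k+1}-\bx^{\tau_{k+1}})+\bpi_i^{k+1}=0$ for every $k$ (and holds trivially for $i\notin\ctauk$), and passing to the limit gives $\tfrac1m\nabla f_i(\bx_i^\infty)+\bpi_i^\infty=0$, so in particular $\bpi_i^\infty=-\tfrac1m\nabla f_i(\bx^\infty)$; summing over $i$ and invoking Theorem~\ref{global-obj-convergence-exact}~ii), $\sum_i\bpi_i^\infty=-\nabla f(\bx^\infty)=-\lim_j\nabla f(\bx^{\tau_{k_j}})=0$. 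Thus $\Z^\infty$ satisfies \eqref{opt-con-FL-opt-ver1} and $\bx^\infty$ satisfies \eqref{opt-con-FL-opt}.

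For part ii), observe that every accumulation point $\Z^\infty$ is completely determined by $\bx^\infty$ via $\bx_i^\infty=\bx^\infty$ and $\bpi_i^\infty=-\tfrac1m\nabla f_i(\bx^\infty)$; hence if $\bx^\infty$ is isolated among the stationary points of \eqref{FL-opt}, then $\Z^\infty$ is an isolated accumulation point, and combined with $\|\Z^{k+1}-\Z^k\|\to0$ the classical fact that a bounded sequence with vanishing successive differences and an isolated accumulation point converges gives $\Z^k\to\Z^\infty$. When instead Assumption~\ref{ass-fi-KL} holds, I would run the Attouch--Bolte--Svaiter template with $\L$ as the Lyapunov function: Lemma~\ref{lemma-decreasing-0}~i) supplies the sufficient-decrease estimate (with $\varpi_{k+1}$ controlling $\|\Z^{k+1}-\Z^k\|^2$), Lemma~\ref{lemma-decreasing-0}~ii) the relative-error estimate, and $\L$ is itself a KL function since the $f_i$ are KL and the remaining summands of $\L$ are polynomials; the standard finite-length argument then shows $\sum_k\|\Z^{k+1}-\Z^k\|<\infty$, hence convergence of $\{\Z^k\}$.

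\emph{Where the work is.} Two points need care. First, the dual bound must survive the mixed, block-structured updates: a client can remain selected across many consecutive blocks, so a one-shot estimate does not suffice and one genuinely uses the contraction $\|M_i\|<1$. Second, the relative-error estimate in Lemma~\ref{lemma-decreasing-0}~ii) is only available at the communication indices $s\in\K$, so the KL finite-length argument has to be carried on the subsequence $\{\Z^{sk_0}\}$, bounding each block jump $\|\Z^{(s+1)k_0}-\Z^{sk_0}\|$ by $\sum_{t=0}^{k_0-1}\sqrt{\varpi_{sk_0+t+1}}$ before summing; checking that $\L$, as a function of all of $(\bx,X,\Pi)$, inherits the KL property from the $f_i$ is the remaining slightly delicate ingredient.
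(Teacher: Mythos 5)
Your proposal is correct and follows essentially the same route as the paper's proof: lower-bounding $\L(\Z^k)$ by $f(\bx^{\tau_k})$ to place $\bx^{\tau_k}$ in the level set $\S$, propagating boundedness to $\{X^k\}$ and $\{\Pi^k\}$, passing the first-order identity $\overline\bg_i^{k+1}+\bpi_i^{k+1}+\frac{1}{m}H_i(\bx_i^{k+1}-\bx^{\tau_{k+1}})=0$ to the limit for stationarity, and handling part ii) via the isolated-accumulation-point lemma and the Kurdyka--Lojasiewicz finite-length argument carried out on the communication subsequence $\{\Z^{tk_0+1}\}$. The only cosmetic deviation is your contraction recursion $\bpi_i^{k+1}=M_i\bpi_i^k+(M_i-I)\overline\bg_i^{k+1}$ for bounding the multipliers; the paper instead reads boundedness directly off the same first-order identity, so the bound $\|M_i\|<1$ is not actually needed.
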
  
\begin{remark}{We give some comments on the conditions in Theorem \ref{global-convergence-exact}.
\begin{itemize}[leftmargin=10pt]
\item According to the proof, apart from the choice of $H_i=\Theta_i, i\in[m]$, any matrix $H_i$ satisfying $r_iI\succeq H_i\succeq 0$ suffices to ensure the convergence property. This means  $H_i$ can be chosen flexibly. However, for the sake of reducing the computational complexity, equation (\ref{ceadmm-sub2}) suggests that we should set $H_i$ to be a matrix enabling the easy calculation of $(H_i/m  + \sigma  I)^{-1}$. Typical choices include the gram matrix or diagonal matrix, see Table \ref{tab:choice-Hi}.
\item Condition $\sigma \geq 6r/m$ is essential.  First, since the problem  may be non-convex, a properly large $\sigma$ can guarantee the strong convexity of \eqref{iceadmm-sub2-0} so as to ensure a unique solution for the case of $H_i=0$. In addition, the big value can prevent over-updating for $\bx^{k+1}_i$, namely, avoiding it stepping too further from global parameter $\bx^{\tau_{k+1}}$. 
\item It is noted that if $f$ is locally strongly convex at $\bx^{\infty}$, then $\bx^{\infty}$ is unique and hence is isolated. However, being isolated is a weaker assumption than locally strong convexity. In addition, there are extensive functions satisfying Assumption \ref{ass-fi-KL} since KL functions are general enough. 
\end{itemize}}
\end{remark}
It is worth mentioning that the establishment of Theorem \ref{global-convergence-exact} does not require the convexity of $f_i$ or $f$, because of this, the sequence is guaranteed to converge to the stationary point of problems (\ref{FL-opt-ver1}) and (\ref{FL-opt}). In this regard, if we further assume the convexity of $f$, then the sequence is capable of converging to the optimal solution to problems (\ref{FL-opt-ver1}) and (\ref{FL-opt}), which is stated by the following corollary.
\begin{corollary}\label{L-global-convergence}Let $\{\Z^{k}\}$ be the sequence generated by Algorithm \ref{algorithm-CEADMM} with $H_i=\Theta_i, i\in[m]$ and  $\sigma \geq 6r/m$.    The following results hold under Assumptions \ref{ass-fi} and \ref{ass-fi-level}, and the convexity of $f$.
\begin{itemize}
\item[i)] Three  sequences $\{\L(\Z^{k})\}$, $\{ F(X^{k})\}$, and $\{ f (\bx^{\tau_{k}})\}$ converge to the optimal function value of (\ref{FL-opt}), namely
 \begin{eqnarray}  \label{L-global-convergence-limit}
   \arraycolsep=1.4pt\def\arraystretch{1.5}
   \begin{array}{lll}
 \lim\limits_{k\rightarrow \infty}  \L(\Z^{k})  = \lim\limits_{k\rightarrow \infty}  F(X^{k})= \lim\limits_{k\rightarrow \infty} f(\bx^{\tau_{k}}) =  f^*.
    \end{array} 
 \end{eqnarray}   
 \item[ii)] Any accumulating point  $\Z^{\infty} $ of   sequence  $\{\Z^{k}\}$ is an optimal solution to  (\ref{FL-opt-ver1}) and $\bx^{\infty}$ is an optimal solution to   (\ref{FL-opt}). 
 
\item [iii)]  If further assume $f$ is strongly convex. Then whole sequence  $\{\Z^{k}\}$ converges to unique optimal solution $\Z^*$  to  (\ref{FL-opt-ver1}) and $\bx^*$ is the unique optimal solution to  (\ref{FL-opt}).  
\end{itemize} 
\end{corollary}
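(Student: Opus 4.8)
The plan is to read Corollary~\ref{L-global-convergence} off Theorems~\ref{global-obj-convergence-exact} and~\ref{global-convergence-exact}, using only the elementary fact that a zero of the gradient of a convex function is a global minimizer. For part i), Theorem~\ref{global-obj-convergence-exact}(i) already shows that $\{\L(\Z^{k})\}$, $\{F(X^{k})\}$ and $\{f(\bx^{\tau_{k}})\}$ tend to a common limit, so it suffices to identify that limit with $f^{*}$. By Theorem~\ref{global-convergence-exact}(i), $\{\Z^{k}\}$ is bounded and has an accumulating point $\Z^{\infty}=(\bx^{\infty},X^{\infty},\Pi^{\infty})$ with $\nabla f(\bx^{\infty})=0$; convexity of $f$ then gives $f(\bx^{\infty})=f^{*}$. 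Passing to a subsequence $\Z^{k_{j}}\to\Z^{\infty}$, its first block yields $\bx^{\tau_{k_{j}}}\to\bx^{\infty}$, so continuity of $f$ forces $f(\bx^{\tau_{k_{j}}})\to f^{*}$; since $\{f(\bx^{\tau_{k}})\}$ converges, the whole sequence tends to $f^{*}$, and~\eqref{L-local-convergence-limit-1} transfers the conclusion to the remaining two sequences.

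For part ii), let $\Z^{\infty}=(\bx^{\infty},X^{\infty},\Pi^{\infty})$ be an arbitrary accumulating point of $\{\Z^{k}\}$. By Theorem~\ref{global-convergence-exact}(i) it satisfies the stationarity system~\eqref{opt-con-FL-opt-ver1}; in particular $\bx^{\infty}_{i}=\bx^{\infty}$ for all $i\in[m]$, so $(\bx^{\infty},X^{\infty})$ is feasible for~\eqref{FL-opt-ver1}. The optimal value of~\eqref{FL-opt-ver1} equals $f^{*}$, since any feasible point has $F(X)=f(\bx)\ge f^{*}$ and equality holds at $\bx=\bx^{*}$. Combining this with $F(X^{\infty})=\tfrac1m\sum_{i}f_{i}(\bx^{\infty})=f(\bx^{\infty})=f^{*}$ (from part i)) shows $\Z^{\infty}$ is optimal for~\eqref{FL-opt-ver1}, while $\bx^{\infty}$, a zero of $\nabla f$ with $f$ convex, is optimal for~\eqref{FL-opt}.

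For part iii), strong convexity of $f$ provides a unique minimizer $\bx^{*}$ of $f$ (and automatically makes every level set bounded, so Assumption~\ref{ass-fi-level} is no restriction here). By part ii) every accumulating point $\bx^{\infty}$ minimizes $f$, hence $\bx^{\infty}=\bx^{*}$; the stationarity system~\eqref{opt-con-FL-opt-ver1} then fixes $X^{\infty}=(\bx^{*},\ldots,\bx^{*})$ and $\bpi^{\infty}_{i}=-\tfrac1m\nabla f_{i}(\bx^{*})$, so $\{\Z^{k}\}$ has a single accumulating point $\Z^{*}$. A bounded sequence with a unique accumulating point converges to it (equivalently, $\bx^{\infty}$ is isolated, so Theorem~\ref{global-convergence-exact}(ii) applies), whence $\Z^{k}\to\Z^{*}$ and $\bx^{*}$ is the unique optimal solution to~\eqref{FL-opt}.

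The argument is largely bookkeeping on top of the two theorems; the one spot needing care is part ii), where only $f$ — and not each $f_{i}$ — is assumed convex, so optimality of $\Z^{\infty}$ for the split problem~\eqref{FL-opt-ver1} cannot be obtained from the stationarity-equals-optimality characterization and must instead be deduced from feasibility together with the attained value $f^{*}$.
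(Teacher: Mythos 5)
Your proposal is correct and follows the same overall plan as the paper---read the corollary off Theorems~\ref{global-obj-convergence-exact} and~\ref{global-convergence-exact}---but the limit-identification steps differ in mechanism. For part i), the paper avoids subsequences entirely: it writes the convexity inequality $f(\bx^{\tau_{k}}) \ge f(\bx^{*}) \ge f(\bx^{\tau_{k}}) + \langle \nabla f(\bx^{\tau_{k}}), \bx^{*}-\bx^{\tau_{k}}\rangle$ and squeezes, using $\nabla f(\bx^{\tau_{k}})\to 0$ from Theorem~\ref{global-obj-convergence-exact}~ii) together with boundedness of $\{\bx^{\tau_{k}}\}$; your route through an accumulation point plus continuity of $f$ is equally valid and rests on the same two ingredients. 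For part iii), the paper uses the quadratic-growth inequality $f(\bx^{\tau_{k}})-f(\bx^{*})\ge \frac{\nu}{2}\|\bx^{\tau_{k}}-\bx^{*}\|^{2}$ to obtain $\bx^{\tau_{k}}\to\bx^{*}$ directly from $f(\bx^{\tau_{k}})\to f^{*}$, and then propagates convergence to $X^{k}$ and $\Pi^{k}$ via \eqref{limit-5-term-0} and \eqref{opt-con-xk1-2}; your softer argument that a bounded sequence with a unique accumulation point converges (equivalently, that $\bx^{\infty}$ is isolated so Theorem~\ref{global-convergence-exact}~ii) applies) reaches the same conclusion. Your part ii) is in fact more careful than the paper's one-line proof: since only $f$, and not each $f_{i}$, is assumed convex, optimality of $\Z^{\infty}$ for the split problem \eqref{FL-opt-ver1} is best deduced, exactly as you do, from feasibility ($\bx_{i}^{\infty}=\bx^{\infty}$) together with the attained value $F(X^{\infty})=f(\bx^{\infty})=f^{*}$, rather than from a blanket stationarity-equals-optimality claim.
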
 
  \begin{remark}Regarding the assumption in Corollary \ref{L-global-convergence}, $f$ being strongly convex does not require the strong convexity for every $f_i,i\in[m]$. If one  $f_i$ is strongly convex and the remaining is convex, then $f=\sum_{i=1}^{m} w_if_i$ is strongly convex.    Moreover, the strongly convexity suffices to the boundedness of level set  $\S$. Therefore, under the strongly convexity, the assumption on the boundedness of $\S$ can be exempted.
  \end{remark}
  
 \subsection{Convergence rate}\label{sec:cs}
In this part, we investigate the convergence rate of Algorithm \ref{algorithm-CEADMM}. The following result states that the minimal value among $\|\nabla f (\bx^{\tau_{j}})\|^2, j\in[k]$  vanishes with rate $O(rk_0/k)$.

\begin{theorem}\label{complexity-thorem-gradient}   Let $\{(\bx^{\tau_{k}},X^{k},\Pi^{k})\}$ be the sequence generated by Algorithm \ref{algorithm-CEADMM} with $H_i=\Theta_i, i\in[m]$ and  $\sigma \geq 6r/m$. If Assumption \ref{ass-fi} holds, then  it follows
       \begin{eqnarray*}
  \arraycolsep=1.4pt\def\arraystretch{1.5}
  \begin{array}{lllll}
    {\min}_{j\in[k]}  \| \nabla f (\bx^{\tau_{j}})\|^2 
 \leq  \frac{100m\sigma k_0}{k}   (\L(\Z^0)-f^*).
   \end{array}
  \end{eqnarray*}  
 \end{theorem}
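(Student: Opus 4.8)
The plan is to bound, at every communication iteration $s\in\K$, the quantity $\|\nabla f(\bx^{\tau_{s+1}})\|^2$ by an absolute constant times $m\sigma^2\varpi_{s+1}$, and then to average this over the $\ge k/k_0$ communication iterations preceding step $k$. The first ingredient is a telescoping of Lemma~\ref{lemma-decreasing-0}(i): the sequence $\{\L(\Z^{k})\}$ is non-increasing by \eqref{decreasing-property-0}, and $\L(\Z^{K})\ge\lim_{k}\L(\Z^{k})=\lim_{k}f(\bx^{\tau_{k}})\ge f^*$ for every $K$ by Theorem~\ref{global-obj-convergence-exact}(i) and \eqref{FL-opt-lower-bound}; summing \eqref{decreasing-property-0} over all $j\ge0$ therefore gives $\sum_{j\ge1}\varpi_{j}\le\tfrac{24}{\sigma}(\L(\Z^{0})-f^*)$.

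The second, more delicate, ingredient is a sharp estimate of the local--global discrepancy: for every $k\ge1$,
\[
\sum_{i=1}^{m}\|\bx_i^{k+1}-\bx^{\tau_{k+1}}\|^2\ \le\ c_1\,\varpi_{k+1}
\]
for an absolute constant $c_1$. Indeed, for $i\in\ctauk$ the dual update \eqref{ceadmm-sub3} gives $\bx_i^{k+1}-\bx^{\tau_{k+1}}=\tfrac1\sigma(\bpi_i^{k+1}-\bpi_i^{k})$, while for $i\notin\ctauk$ this vector vanishes by \eqref{ceadmm-sub6}; using the identity $\bpi_i^{k+1}=-\tfrac1m\nabla f_i(\bx^{\tau_{k+1}})-\tfrac1m H_i(\bx_i^{k+1}-\bx^{\tau_{k+1}})$ recorded just below \eqref{ceadmm-sub8} and the Lipschitz property \eqref{Lip-r}, one obtains $\|\bpi_i^{k+1}-\bpi_i^{k}\|\le\tfrac{r_i}{m}(2\|\triangle\bx^{\tau_{k+1}}\|+\|\triangle\bx_i^{k+1}\|)$, and squaring, summing over $i$ and using $\sigma\ge6r/m$ yields the claim with a small $c_1$. (This is essentially one of the bounds already produced inside the proof of Lemma~\ref{lemma-decreasing-0}.)

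Now fix $s\in\K$. From the definition of $\L$, $\nabla_\bx\L(\Z^{s+1})=-\sum_{i}\bpi_i^{s+1}-\sigma\sum_{i}(\bx_i^{s+1}-\bx^{\tau_{s+1}})$, and summing the identity for $\bpi_i^{k+1}$ above (together with $\sum_i\tfrac1m\nabla f_i(\bx^{\tau_{s+1}})=\nabla f(\bx^{\tau_{s+1}})$) gives $\sum_{i}\bpi_i^{s+1}=-\nabla f(\bx^{\tau_{s+1}})-\tfrac1m\sum_{i}H_i(\bx_i^{s+1}-\bx^{\tau_{s+1}})$; eliminating $\sum_i\bpi_i^{s+1}$ produces the exact identity $\nabla f(\bx^{\tau_{s+1}})=\nabla_\bx\L(\Z^{s+1})+\sum_{i}(\sigma I-\tfrac1m H_i)(\bx_i^{s+1}-\bx^{\tau_{s+1}})$. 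Since $0\preceq\tfrac1m H_i\preceq\tfrac{r}{m}I\preceq\tfrac{\sigma}{6}I$, the last sum has norm at most $\sigma\sqrt{m}(c_1\varpi_{s+1})^{1/2}$ by the previous step, and $\|\nabla_\bx\L(\Z^{s+1})\|\le\|\nabla\L(\Z^{s+1})\|\le(2\sigma+1)\sqrt{m\varpi_{s+1}}$ by Lemma~\ref{lemma-decreasing-0}(ii); hence $\|\nabla f(\bx^{\tau_{s+1}})\|^2\le c_2\,m\sigma^2\varpi_{s+1}$ for an absolute $c_2$. As $s$ runs over the communication iterations in $\{0,\dots,k-1\}$ — of which there are at least $k/k_0$ — each $\bx^{\tau_{s+1}}$ coincides with some $\bx^{\tau_j}$ with $j=s+1\in[k]$, so
\[
\min_{j\in[k]}\|\nabla f(\bx^{\tau_{j}})\|^2\ \le\ \frac{k_0}{k}\sum_{s\in\K,\,s\le k-1}\|\nabla f(\bx^{\tau_{s+1}})\|^2\ \le\ \frac{k_0}{k}\,c_2 m\sigma^2\sum_{j\ge1}\varpi_{j}\ \le\ \frac{24c_2\,m\sigma k_0}{k}(\L(\Z^{0})-f^*),
\]
and carrying the sharp constants through Lemma~\ref{lemma-decreasing-0} and the bounds above delivers the advertised bound with the explicit value $100$ (the iteration $s=0$, which requires a compatible initialization, and the cases $k<k_0$, may be included or discarded without affecting the form of the estimate).

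The main obstacle is precisely this sharp local--global estimate and the way it feeds the gradient decomposition: bounding $\sum_i\|\bx_i^{s+1}-\bx^{\tau_{s+1}}\|^2$ by the trivial $\|\nabla\L(\Z^{s+1})\|^2$ would inject an extra factor $m\sigma^2$ and leave a spurious $\sigma^4 m^2$ in the final rate; exploiting instead $\bx_i^{k+1}-\bx^{\tau_{k+1}}=\tfrac1\sigma\triangle\bpi_i^{k+1}$ together with gradient Lipschitz continuity keeps only a single surplus power of $\sigma$, which is then cancelled by the $1/\sigma$ in $\sum_j\varpi_j\le\tfrac{24}{\sigma}(\L(\Z^{0})-f^*)$, leaving exactly the claimed $O(m\sigma k_0/k)$ dependence. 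Everything else is routine manipulation.
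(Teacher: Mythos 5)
Your overall architecture is the same as the paper's: bound $\|\nabla f(\bx^{\tau_{s+1}})\|^2$ at each communication step by a multiple of $m\sigma^2\varpi_{s+1}$, telescope the descent inequality \eqref{decreasing-property-0} against the lower bound $\L(\Z^k)\geq f^*$, and average over the $\geq k/k_0$ communication rounds. Your route to the per-round gradient bound differs cosmetically (you pass through the exact identity $\nabla f(\bx^{\tau_{s+1}})=\nabla_{\bx}\L(\Z^{s+1})+\sum_i(\sigma I-\tfrac1m H_i)\triangle\blx_i^{s+1}$, whereas the paper manipulates $\sum_i\bpi_i^{s+1}$ directly via \eqref{sum-pi-0} and \eqref{opt-con-xk1-2}), but these are algebraically equivalent decompositions and both rest on the same ingredients \eqref{gap-pi-k}, \eqref{opt-con-xk1-3-simple}.

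There is, however, one quantitative gap. You bound $\|\nabla_{\bx}\L(\Z^{s+1})\|\leq\|\nabla\L(\Z^{s+1})\|\leq(2\sigma+1)\sqrt{m\varpi_{s+1}}$ via Lemma \ref{lemma-decreasing-0}(ii) and then assert $\|\nabla f(\bx^{\tau_{s+1}})\|^2\leq c_2\,m\sigma^2\varpi_{s+1}$ with an \emph{absolute} $c_2$. This does not follow: $(2\sigma+1)^2\leq c_2\sigma^2$ requires $\sigma$ bounded away from $0$ by an absolute constant, while the hypothesis only gives $\sigma\geq 6r/m$, which can be arbitrarily small. The additive $+1$ in $(2\sigma+1)$ originates from the $\nabla_{\bpi_i}\L=\triangle\blx_i^{s+1}$ blocks, which scale like $\tfrac1\sigma\|\triangle\bpi_i^{s+1}\|$ and are precisely the non-$\sigma$-homogeneous part of the full gradient --- and they are irrelevant to $\nabla_{\bx}\L$. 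Carried through, your bound yields a term of order $m k_0/(\sigma k)$ rather than $m\sigma k_0/k$, so the stated estimate with the explicit factor $100 m\sigma$ is not delivered. The repair is local: bound the block directly, as in \eqref{opt-con-xk1-s1} inside the paper's proof of Lemma \ref{lemma-decreasing-0}(ii), namely $\|\nabla_{\bx}\L(\Z^{s+1})\|\leq\sum_i(\|\triangle\bpi_i^{s+1}\|+\sigma\|\triangle\bx_i^{s+1}\|)$ combined with \eqref{opt-con-xk1-3-simple}, which gives $\|\nabla_{\bx}\L(\Z^{s+1})\|\leq C\sigma\sqrt{m\varpi_{s+1}}$ for an absolute $C$; every remaining step of your argument is then $\sigma$-homogeneous and the claimed $O(m\sigma k_0/k)$ rate follows.
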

The establishment of such a convergence rate only requires  the assumption of gradient Lipschitz continuity, namely, Assumption \ref{ass-fi}. Moreover, since $\sigma = t r/m$, we have
       \begin{eqnarray*}
  \arraycolsep=1.4pt\def\arraystretch{1.5}
  \begin{array}{lllll}
  {\min}_{j\in[k]}  \| \nabla f (\bx^{\tau_{j}})\|^2  = O(\frac{rk_0}{k}).
   \end{array}
  \end{eqnarray*} 
This is what we expected. The larger $k_0$ is, the more iterations is required to converge. 
\begin{remark}\label{remark-com} Theorem \ref{complexity-thorem-gradient} hints that Algorithm \ref{algorithm-CEADMM} can be terminated if   
       \begin{eqnarray}\label{stopping-tol}
  \arraycolsep=0pt\def\arraystretch{1.5}
  \begin{array}{lllll}
\| \nabla f (\bx^{\tau_{k}})\|^2 \leq   \epsilon,
   \end{array}
  \end{eqnarray} 
  where $\epsilon$ is a given tolerance. Therefore,  after 
         \begin{eqnarray}\label{stopping-iter}
  \arraycolsep=0pt\def\arraystretch{1.5}
  \begin{array}{lllll}
k =  \left\lfloor \frac{\rho k_0(\L(\Z^0)-f^*) }{\epsilon} \right\rfloor=O(\frac{rk_0}{\epsilon})
   \end{array}
  \end{eqnarray} 
  iterations,  Algorithm \ref{algorithm-CEADMM} meets  (\ref{stopping-tol}) and the total CR are
         \begin{eqnarray}\label{stopping-iter}
  \arraycolsep=0pt\def\arraystretch{1.5}
  \begin{array}{lllll}
CR:=\left\lfloor \frac{2k}{k_0} \right\rfloor  = \left\lfloor \frac{2\rho(\L(\Z^0) -f^*) }{\epsilon}  \right\rfloor=O(\frac{r}{\epsilon}).
   \end{array}
  \end{eqnarray}  
 \end{remark}
 {If we further assume the KL property, then we can achieve a better convergence rate as follows.
\begin{theorem}\label{complexity-thorem-F-F}    Let $\{\Z^{k}\}$ be the sequence generated by Algorithm \ref{algorithm-CEADMM} with $H_i=\Theta_i, i\in[m]$ and  $\sigma \geq 6r/m$, and $\Z^\infty$ be its limit. Suppose that Assumptions \ref{ass-fi} and \ref{ass-fi-level} hold, and Assumption \ref{ass-fi-KL} hold with a desingularizing function (see Definition \ref{def-Desingularizing}) $\varphi(z)=\frac{\sqrt{c}}{1-\theta} z^{1-\theta}$, where $c>0$, $\theta\in[0,1)$, then the following rates of convergence hold. 
\begin{itemize}
\item[i)] If $\theta=0$, then there is a $k_1\in\K$ such that
       \begin{eqnarray*}
  \arraycolsep=1.4pt\def\arraystretch{1.5}
  \begin{array}{lllll}
f(\bx^{\tau_{k}})\equiv  f^{\infty}:=f(\bx^{\infty}),~~
   \end{array}
  \end{eqnarray*} 
for all $k(\in \K)\geq k_1$.
\item[ii)] If $\theta\in(0,1/2]$, then there is a $k_2{\in}\K$ and $c_2{>}0$ satisfying
       \begin{eqnarray*}
  \arraycolsep=1.4pt\def\arraystretch{1.5}
  \begin{array}{lllll}
f(\bx^{\tau_{k}})- f^\infty \leq c_2\left(\frac{\rho}{\rho+1}\right)^{\frac{k-k_2}{k_0}}, 
   \end{array}
\end{eqnarray*} 
for all $k(\in \K)\geq k_2$, where $\rho:= {24 m c(2\sigma+1) ^2}/{\sigma}$.
\item[iii)] If $\theta\in(1/2,1)$, then there is a $k_3{\in}\K$ and $c_3{>}0$ satisfying
  \begin{eqnarray*} 
   \arraycolsep=1pt\def\arraystretch{1.75}   
\begin{array}{lcl}
f(\bx^{\tau_{k}})- f^\infty 
 \leq  \left(\frac{c_3k_0}{k-k_3}\right)^{\frac{1}{2\theta-1}},
    \end{array}  
 \end{eqnarray*}
 for all $k(\in \K)\geq k_3$. 
   \end{itemize}
 \end{theorem}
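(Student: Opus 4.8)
The plan is to run the standard Kurdyka--Lojasiewicz (KL) convergence‑rate argument on the augmented Lagrangian $\L$, fed by the two estimates already available from Lemma \ref{lemma-decreasing-0}: the per‑iteration sufficient decrease $\L(\Z^{k+1})-\L(\Z^{k})\le-\tfrac{\sigma}{24}\varpi_{k+1}$, and the relative‑error bound $\|\nabla\L(\Z^{s+1})\|\le(2\sigma+1)\sqrt{m\varpi_{s+1}}$, the latter holding only at the communication rounds $s\in\K$. I would first record the bookkeeping: since $\Z^{\infty}$ is a stationary point of \eqref{FL-opt-ver1}, \eqref{opt-con-FL-opt-ver1} gives $\L(\Z^{\infty})=\tfrac1m\sum_i f_i(\bx^{\infty})=f(\bx^{\infty})=:f^{\infty}$; by Theorem \ref{global-obj-convergence-exact} the nonincreasing sequence $\L(\Z^{k})$ decreases to $f^{\infty}$, so $a_k:=\L(\Z^{k})-f^{\infty}\ge 0$, $a_k\downarrow 0$, and $\Z^{k}\to\Z^{\infty}$ by Theorem \ref{global-convergence-exact}(ii). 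Because $\L$ is $\tfrac1m\sum_i f_i(\bx_i)$ plus a polynomial in $(\bx,X,\Pi)$, it inherits the KL property at $\Z^{\infty}$ from the $f_i$; with the prescribed desingularizing function $\varphi(z)=\tfrac{\sqrt c}{1-\theta}z^{1-\theta}$ the KL inequality reads $a_k^{2\theta}\le c\,\|\nabla\L(\Z^{k})\|^{2}$ whenever $\Z^{k}$ lies in the relevant neighbourhood of $\Z^{\infty}$ and $0<a_k<\eta$; the degenerate case $a_k=0$ will be handled as finite termination.

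The key step is to compress these ingredients into a single recursion on the subsequence indexed by communication rounds. Fix $s=\ell k_0\in\K$ with $\ell\ge\ell_0$, $\ell_0$ large enough that $\Z^{s+1}$ lies in the KL neighbourhood and $a_{s+1}<\eta$ (if $a_{s+1}=0$ we branch to finite termination). Applying in turn the gradient bound at $s$, the sufficient decrease at $s$, and the KL inequality at $\Z^{s+1}$,
\[
a_{s+1}^{2\theta}\;\le\;c\,\|\nabla\L(\Z^{s+1})\|^{2}\;\le\;c(2\sigma+1)^{2}m\,\varpi_{s+1}\;\le\;\rho\,(\L(\Z^{s})-\L(\Z^{s+1})),\qquad \rho=\tfrac{24mc(2\sigma+1)^{2}}{\sigma},
\]
exactly the $\rho$ in the statement. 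Using monotonicity of $\L$ to bound $\L(\Z^{s+1})$ below by $\L(\Z^{s+k_0})$ on both sides and writing $u_\ell:=\L(\Z^{\ell k_0})-f^{\infty}$, this becomes the master inequality $u_{\ell+1}^{2\theta}\le\rho\,(u_\ell-u_{\ell+1})$ for all $\ell\ge\ell_0$, with $u_\ell\ge 0$ and $u_\ell\downarrow 0$; without loss of generality $u_\ell>0$ (otherwise it is eventually $0$ and all claims hold trivially).

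The three cases are then the textbook dichotomy. If $\theta=0$, the master inequality forces $u_\ell-u_{\ell+1}\ge 1/\rho$ while $u_\ell\to 0$, impossible beyond some index; hence $u_\ell=0$ eventually, and then $\varpi=0$ thereafter, so the iterates freeze at $\Z^{\infty}$ and $f(\bx^{\tau_k})\equiv f^{\infty}$, which is (i). If $\theta\in(0,\tfrac12]$, then $2\theta\le 1$ and, for $\ell$ large enough that $u_{\ell+1}\le 1$, $u_{\ell+1}\le u_{\ell+1}^{2\theta}\le\rho(u_\ell-u_{\ell+1})$, i.e.\ $u_{\ell+1}\le\tfrac{\rho}{\rho+1}u_\ell$; iterating and substituting $\ell=k/k_0$ yields the geometric bound in (ii) with $c_2$ absorbing the constants. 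If $\theta\in(\tfrac12,1)$, put $\alpha:=2\theta>1$ and consider $\psi(t):=t^{1-\alpha}$, decreasing and convex on $(0,\infty)$; the standard two‑case estimate---either $u_\ell^{\alpha}\le 2u_{\ell+1}^{\alpha}$, whence $\psi(u_{\ell+1})-\psi(u_\ell)\ge\tfrac{\alpha-1}{2\rho}$ by convexity together with the master inequality, or $u_\ell^{\alpha}>2u_{\ell+1}^{\alpha}$, whence $\psi(u_{\ell+1})-\psi(u_\ell)\ge(2^{(\alpha-1)/\alpha}-1)\,\psi(u_{\ell_0})>0$ since $\psi(u_\ell)$ is nondecreasing in $\ell$---gives $\psi(u_\ell)\ge\mu(\ell-\ell_0)$ for a constant $\mu>0$, hence $u_\ell\le(\mu(\ell-\ell_0))^{-1/(2\theta-1)}$; with $\ell=k/k_0$ and $c_3=1/\mu$ this is the bound in (iii).

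What has been bounded so far is $\L(\Z^{k})-f^{\infty}$ for $k\in\K$; the last move is to pass to $f(\bx^{\tau_k})-f^{\infty}$. For this I would use the one‑step relation $f(\bx^{\tau_{k+1}})\le\L(\Z^{k})+O(\varpi_{k+1})$---a consequence of the gradient‑Lipschitz inequality \eqref{grad-lip-theta} together with $\sigma\ge 6r/m$, and already implicit in the proof that $f(\bx^{\tau_k})\to f^{\infty}$ in Theorem \ref{global-obj-convergence-exact}---combined with sufficient decrease to absorb the $O(\varpi_{k+1})$ term, so that $f(\bx^{\tau_k})-f^{\infty}\le C(\L(\Z^{k-k_0})-f^{\infty})$ for $k\in\K$; the rates of (ii)--(iii) then transfer with adjusted $c_2,c_3$ and shifted $k_2,k_3$, and (i) transfers verbatim. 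The main obstacles are the soft but fussy ones: verifying that the KL inequality may legitimately be invoked at every $\Z^{\ell k_0+1}$ past some round (needing both $\Z^{k}\to\Z^{\infty}$ and $a_k\downarrow 0$, plus the separate finite‑termination branch when some $a_k=0$), and the monotonicity/telescoping bookkeeping that transfers the per‑iteration decrease and the per‑round gradient bound onto the common communication‑round scale and then bridges from $\L$ to $f(\bx^{\tau_k})$ without degrading the exponent. The recursion‑solving itself in the three regimes is routine.
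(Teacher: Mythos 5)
Your proposal is correct and follows essentially the same route as the paper: the KL inequality for $\L$ along the communication-round subsequence is combined with the sufficient decrease and the gradient bound of Lemma \ref{lemma-decreasing-0} to produce the master recursion $(\F^{t}-\F^\infty)^{2\theta}\le\rho\,(\F^{t-1}-\F^{t})$ with the same $\rho$, which is then solved by the standard three-regime dichotomy (finite termination, geometric contraction via $z^{2\theta}\ge z$ for small $z$, and the two-case estimate on $z^{1-2\theta}$). The only simplification available is at the very end: no $O(\varpi_{k+1})$ correction is needed to pass from $\L(\Z^{k})-f^\infty$ to $f(\bx^{\tau_{k}})-f^\infty$, since Lemma \ref{L-bounded-decreasing} ii) already gives $f(\bx^{\tau_{k}})\le\L(\Z^{k})$ directly.
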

 Note that if $\theta=0$, the convergence result means that the algorithm can terminate within finitely many steps. In such a case we write $$f(\bx^{\tau_{k}})- f^\infty = O(0).$$  If $\theta\in(0,1/2]$, then the convergence rate is linear. If $\theta\in(1/2,1)$, then $\frac{1}{2\theta-1}\in(1,\infty)$ and thus the convergence rate is at least sub-linear. It is worth mentioning that all semi-algebraic functions (see \cite[Definition 5]{bolte2014proximal}) satisfy KL property with  $\varphi(z)=\frac{\sqrt{c}}{1-\theta} z^{1-\theta}$. Typical semi-algebraic functions include  $\|\bx\|_p:=\sum(|x_i|^p)^{1/p}, p\geq0$, real polynomial functions, and those functions in \cite[Example 2-4]{bolte2014proximal}. }
 
{In particular, if every $f_i$ is strongly convex, then Assumptions \ref{ass-fi-level} and \ref{ass-fi-KL} hold with $\varphi(z)=2\sqrt{cz}$ (see \cite[Example 6]{bolte2014proximal}), namely $\theta=1/2$ in Theorem \ref{complexity-thorem-F-F}, thereby leading to the linear convergence rate. 
\begin{remark}\label{remark-com-new} Based on Theorem \ref{complexity-thorem-F-F}, after  
         \begin{eqnarray}\label{stopping-iter}
  \arraycolsep=1pt\def\arraystretch{1.5}
 k =  \left\{ \begin{array}{lll }
O(k_0),& \theta=0,\\
 O\left(k_0\log_{\frac{\rho+1}{\rho}}(\frac{1}{\epsilon})\right),~~ & \theta\in(0,\frac{1}{2}],\\
 O\left( \frac{k_0}{\epsilon^{2\theta-1}} \right),~~ & \theta\in(\frac{1}{2},1),
   \end{array}\right.
  \end{eqnarray} 
iterations,  Algorithm \ref{algorithm-CEADMM}  meets  \begin{eqnarray}\label{stopping-acc}
  \arraycolsep=0pt\def\arraystretch{1.5}
  \begin{array}{lllll}
f(\bx^{\tau_{k}})- f^\infty  \leq   \epsilon.
   \end{array}
  \end{eqnarray} 
Hence, to achieve such an accuracy, the total CR are
         \begin{eqnarray}\label{stopping-iter}
           \arraycolsep=1pt\def\arraystretch{1.5}
CR:=\left\lfloor \frac{2k}{k_0} \right\rfloor  =  \left\{ \begin{array}{lll }
O(1),& \theta=0,\\
 O\left( \log_{\frac{\rho+1}{\rho}}(\frac{1}{\epsilon})\right),~~ & \theta\in(0,\frac{1}{2}],\\
 O\left( \frac{1}{\epsilon^{2\theta-1}} \right),~~ & \theta\in(\frac{1}{2},1).
   \end{array}\right.
  \end{eqnarray}  
 \end{remark}}
 \begin{remark}\label{remark:com}{We summarize several state-of-the-art algorithms and compare their performance in Table \ref{tab:com-algs}. Here, for $i\in[m]$, let $c_i^1$, $c_i^2$, and $c_i^3$  be  computational complexity of computing the gradient of $f_i$, the stochastic gradient of $f_i$, and an optimization problem associated with $f_i$, respectively. Then $\beta_t:=\max_{i\in[m]} c_i^t, t=1,2,3$.
 For type-I convergence, although  {\tt FedGiA} and {\tt FedPD}   have the best convergence rate $O(k_0/k)$ under the weakest assumption.   For the type-II convergence since the strong convexity implies the bounded level set and KL property, {\tt FedGiA}  converges with a rate better than sub-linear rate $O(k_0/k)$ for all other algorithms under the weakest assumptions. Moreover, it converges linearly if the strong convexity holds.}
 
{ 
 In addition to the best convergence results, {\tt FedGiA} also has a low computational complexity. In Table \ref{tab:com-algs}, we present the computational complexity for all clients to update their parameters $k_0$ times, that is, the computational complexity of local computation for consecutive $k_0$ steps between two communications. Note that the complexity for {\tt FedGiA} presented in the table is under the choice of $H_i$ being chosen as a diagonal matrix. From (\ref{ceadmm-sub2}) and (\ref{ceadmm-sub7}), {\tt FedGiA} only needs to calculate the gradient once for $k_0$ steps so it has relatively low computational complexity. By contrast, all other algorithms need to calculate the gradient $k_0$ times.}
 \end{remark}

\section{Numerical Experiments}\label{sec:num}
In this section, we conduct some numerical experiments to demonstrate the performance of \GIA\ in Algorithm \ref{algorithm-CEADMM}. All numerical experiments are implemented through MATLAB (R2020b) on a laptop with 32GB memory and 2.3Ghz CPU. The source codes are available at \href{https://github.com/ShenglongZhou/FedGiA}{https://github.com/ShenglongZhou/FedGiA}.

 \subsection{Testing example}
 We use Example  \ref{ex:lr} with synthetic data and Example \ref{ex:lg} with real data to conduct the numerical experiments.
 \begin{example}[Linear regression with non-i.i.d. data]\label{ex-linear} For this problem, local clients have their objective functions as (\ref{least-squares}). We randomly generate $d$ samples $(\ba, b)$  from three distributions: the standard normal distribution, the Student's $t$ distribution with degree $5$, and the uniform distribution in $[-5,5]$. Then we shuffle all samples and  divide them into $m$ parts $(A_i, \bb_i)$ for $m$ clients, where $A_i{=}(\ba_1^i,\ldots,\ba_{d_i}^i)^\top$ and $\bb_i{=}(b_1^i,\ldots,b_{d_i}^i)^\top$. Therefore, $d{=}d_1{+}\ldots{+} d_m$.  The data size of each part, $d_i$, is randomly chosen from  $[50,150]$. For simplicity, we fix $n{=}100$ but choose $m{\in}\{64,96,128,196,256\}$. In this regard, each client has non-i.i.d. data $(A_i, \bb_i)$. 
\end{example}
\begin{example}[Logistic regression]\label{ex-logist} For this problem,  local clients have their objective functions as (\ref{logist-loss}), where $\mu=0.001$ is fixed in the numerical experiments.  We use two real datasets described in Table \ref{tab:datasets}  to generate $(\ba, b)$. We  randomly split $d$ samples into $m$ groups corresponding to $m$ clients. 
\end{example}
\begin{table}[!th]
	\renewcommand{\arraystretch}{1.25}\addtolength{\tabcolsep}{0pt}
	\caption{Descriptions of  two real datasets.}\vspace{-3mm}
	\label{tab:datasets}
	\begin{center}
		\begin{tabular}{lllrrrr }
			\hline
Data&Datasets&	Source	&	$n$	&	$d$\\\hline
\texttt{qot}&	Qsar oral toxicity	&	uci	&	1024 	&	8992 	\\
 \texttt{sct}&	Santander customer transaction	&	kaggle	&	200 	&	200000 	\\
\hline
 		\end{tabular}
	\end{center}
	\vspace{-5mm}
\end{table} 
\begin{example}[Non-convex regularized logistic regression]\label{ex-logist-nonconvex} For this example, we aim at solving a non-convex problem \cite{antoniadis2011penalized, zhang2020fedpd}, where client $i\in[m]$ has the objective function  as 
\begin{eqnarray*} \label{logist-loss-reg}
 \arraycolsep=0pt\def\arraystretch{1.5}
\begin{array}{llll}
f_i(\bx){=} 
\frac{1}{d_i}\sum_{j=1}^{d_i}\left(\ln (1{+}e^{\langle\ba^i_j,\bx\rangle} ){-}b^i_j\langle\ba^i_j,\bx\rangle\right){+}\frac{\mu}{2d_i}\sum_{\ell=1}^n\frac{x_\ell^2}{1+x_\ell^2}. 
\end{array} 
\end{eqnarray*}
We fix $\mu=0.01$ in the sequel.  Samples $(\ba, b)$ are generated the same as Example \ref{ex-logist}.
\end{example}

\subsection{Implementations}
As mentioned in Remark \ref{remark-com}, we terminate  \GIA\ if $k\geq 10^4$ or solution
 $\bx^{\tau_{k}}$ satisfies 
 \begin{eqnarray} \label{stopping}
 \arraycolsep=1.0pt\def\arraystretch{1.5}
\begin{array}{r}
{\tt Error}:=\|\nabla f(\bx^{\tau_{k}})\|^2
 \leq {\tt tol}, 
\end{array} 
\end{eqnarray}
and initialize $\bx_i^0=\bpi_i^0=0,i\in[m]$, where ${\tt tol}=10^{-7}$ for Example \ref{ex-linear} and ${\tt tol}=(5/d)\times10^{-6}$ for Examples \ref{ex-logist} and \ref{ex-logist-nonconvex}. For every $k\in\K$, we randomly select $\alpha m$ clients to form $\ctauk$, namely, $|\ctauk|=\alpha m$ and $\alpha\in(0,1]$. Here, $\alpha=1$ means all clients are chosen.   Theorem  \ref{L-global-convergence}   suggests that $\sigma $ should be chosen to satisfy $\sigma=tr/m$, where $t$ is given in Table \ref{tab:choice-Hi}. Finally, $H_i$ is chosen as Table \ref{tab:choice-Hi}, where \GIAG\ and \GIAD\ represent \GIA\ under $H_i$ opted as a Gram and Diagonal matrix, respectively.
\begin{table}[H]
	\renewcommand{\arraystretch}{1.75}\addtolength{\tabcolsep}{1pt}
	\caption{Choices of $t$ and $H_i$, where $B_i:=A_i^\top A_i$.}\vspace{-3mm}
	\label{tab:choice-Hi}
	\begin{center}
		\begin{tabular}{lllr } \hline
  &  &	\GIAG	&	\GIAD \\ 
   &$t$ &	 $H_i$ &	 $H_i$ \\\hline
Example \ref{ex-linear}&$0.15$&	$ \frac{B_i}{d_i}$&	$ \frac{\|B_i\|}{d_i}I $	\\
Example \ref{ex-logist}&$\max\left\{0.025,\frac{4{\rm ln}(d)}{n}\right\}$	&	$\frac{B_i}{4d_i} $&	$\frac{\|B_i\|}{4d_i}I$\\
Example \ref{ex-logist-nonconvex}&$\max\left\{0.025,\frac{4{\rm ln}(d)}{n}\right\}$	&	$\frac{B_i}{4d_i}+\frac{\mu I}{d_i}$&	$\frac{\|B_i\|+4\mu}{4d_i}I$\\
\hline
 		\end{tabular}
	\end{center} \vspace{-5mm}
\end{table}  

\subsection{Numerical performance}
In this part, we conduct some simulation to demonstrate the performance of \GIA\  including  global convergence,   convergence rate, and  effect of  $k_0$ and $\ctauk$. To measure the performance, we report the following factors: $f(\bx^{\tau_{k}})$,  error  $\|\nabla f(\bx^{\tau_{k}})\|^2$, CR, and computational time (in second). We only report results of \GIA\ solving Example \ref{ex-linear}  and omit ones for  Examples \ref{ex-logist} and \ref{ex-logist-nonconvex} as the observations are similar.

\subsubsection{Global convergence with rate $O(k_0/k)$} 
We   fix  $m=128,~\alpha=0.5$, and $k_0\in\{1,5,10,15,20\}$ and present the results in Fig. \ref{fig:iterations-objective}. From the left sub-figure, as expected, all lines eventually tend to the same objective function value, well testifying Theorem \ref{L-global-convergence}.  It is clear that the bigger  $k_0>1$ (i.e., the more steps between two global aggregations), the more iterations required to reach the optimal function value.  From the right sub-figure,  the trends show that all errors vanish gradually as the rising of the number of iterations. Apparently, the bigger $k_0$, the more iterations used to converge, which well justifies Theorem \ref{complexity-thorem-gradient} that convergence rate relies on $k_0$.  
 \begin{figure}[!th]
	\centering
	\includegraphics[width=.99\linewidth]{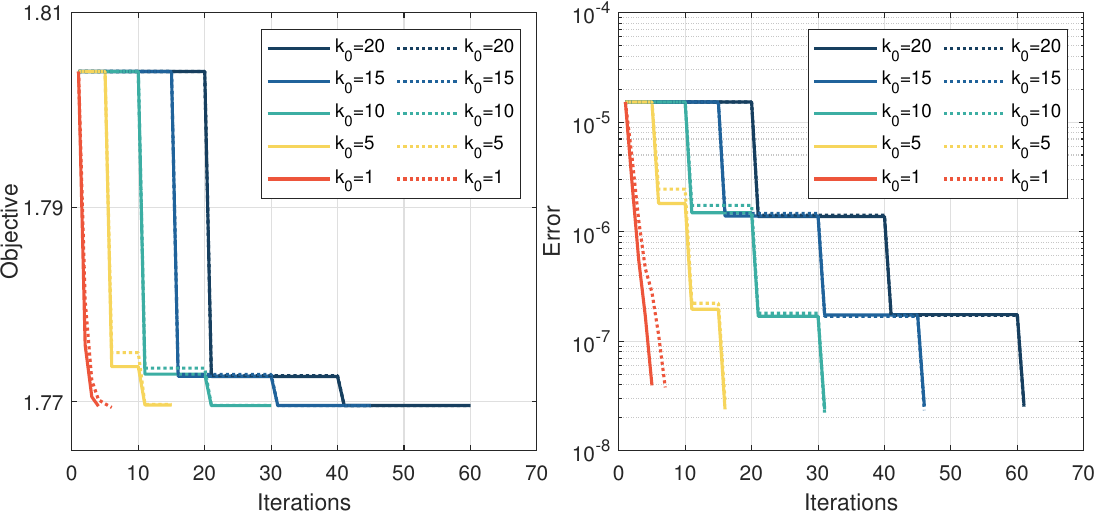}
\caption{Objective function values and errors v.s. iterations. \GIAG\ (solid lines) and \GIAD\ (dashed lines) solve Example \ref{ex-linear} with $m=128$ and $\alpha=0.5$.\label{fig:iterations-objective}}
\end{figure}

\subsubsection{Effect of $k_0$}
Next, we would like to see how the choices of $k_0$ impact the performance of \GIA. To proceed with that, for each dimension $(m,d_1,\ldots,d_m)$ of the dataset, we generate 20 instances of  Example \ref{ex-linear}. Each instance is solved by \GIA\ with fixing $\alpha=0.5$ and $k_0\in[20]$. The average results are reported in  Fig. \ref{fig:effect-k0-diff}. It can be clearly seen that CR decline first and then stabilizes at a certain level with the rising of $k_0$.  To this end, it is efficient to save the communication cost if we set a proper  $k_0$. However, it is unnecessary to set a very big value as the larger $k_0$ the longer computational time, as shown in the right sub-figure.  In general, \GIAD\ used more CR  but always ran faster than \GIAG.

\begin{figure}[!th]
\begin{subfigure}{.24\textwidth}
	\centering
	\includegraphics[width=1.02\linewidth]{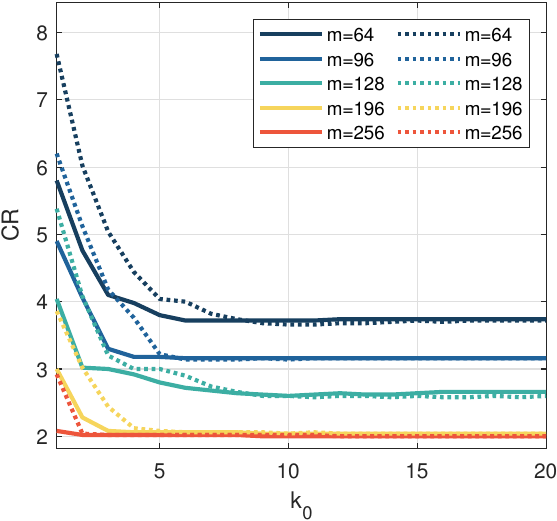}
	\label{fig:k0-aggr-1}
\end{subfigure}  
\begin{subfigure}{.24\textwidth}
	\centering
	\includegraphics[width=1.02\linewidth]{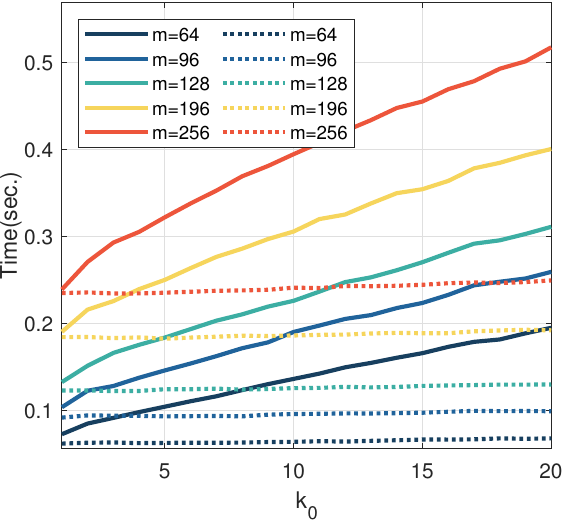}
	\label{fig:k0-time-1}
\end{subfigure} 
 \vspace{-5mm}
\caption{Effect of $k_0$ for \GIAG\ (solid lines) and \GIAD\ (dashed lines) solving Example \ref{ex-linear} with $\alpha=0.5$.\label{fig:effect-k0-diff}}\vspace{-3mm}
\end{figure}

\subsubsection{Effect of $\ctauk$} Finally, we aim to see how choices of $\ctauk$ impact the performance of \GIA\ by altering $\alpha\in(0.1,1]$. The average results are presented in Fig. \ref{fig:effect-s}. We observe that $\alpha$ would not have a big influence on CR when $k_0>5$. As expected, $\alpha$ impacts \GIAG\ greatly in terms of computational time. From the algorithmic framework, the larger $\alpha$ the more clients are selected to calculate \eqref{ceadmm-sub2}, leading to more expensive computations.  However, when $H_i$ is chosen as a diagonal matrix, computing  \eqref{ceadmm-sub2} is relatively cheap, which explains that $\alpha$ has little influence on the computational time for \GIAD. In the sequel, we fix $\alpha=0.5$ for simplicity.

 \begin{figure}[!th]
	\centering
	\begin{subfigure}{.24\textwidth}
	\centering
	\includegraphics[width=1.01\linewidth]{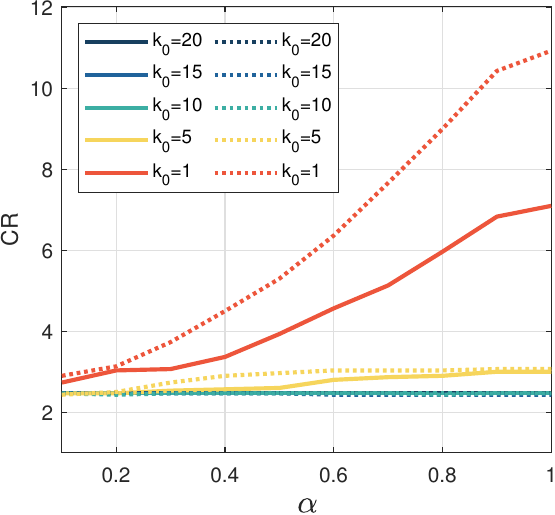}
	\label{fig:k0-aggr-1}
\end{subfigure}   
\begin{subfigure}{.24\textwidth}
	\centering
	\includegraphics[width=1.01\linewidth]{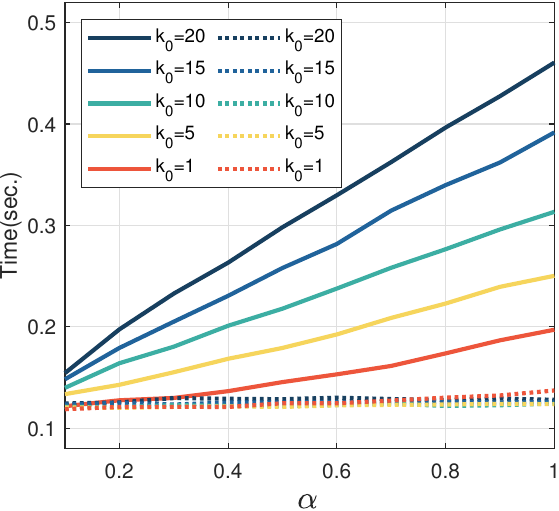}
	\label{fig:k0-time-1}
\end{subfigure}\vspace{-5mm}
\caption{Effect of $\alpha$. \GIAG\ (solid lines) and \GIAD\ (dashed lines) solve Example \ref{ex-linear} with   $m=128$.\label{fig:effect-s}}
\vspace{-5mm}
\end{figure}
 \subsection{Numerical comparison}
{In this part, we will compare our proposed method with \GD\ \cite{mcmahan2017communication},  {\tt FedPD} \cite{zhang2020fedpd}, and {\tt FedProx} \cite{li2020federatedprox}. For fair comparison, we initialize all algorithms with same starting point $\bx_i^0=0,i\in[m]$ and terminate them if condition \eqref{stopping} is satisfied or CR  are over $1000$. Since all clients participate in the training for consecutive $k_0$ steps in {\tt FedGiA}, we apply the full device participation into the other algorithms, namely, all clients are chosen to update their parameters at every step. In addition to these settings, we also set up them as follows.
 \begin{itemize}[leftmargin=10pt]
 \item For {\tt FedAvg}, we use its non-stochastic version, that is, we use full data to calculate the gradient for every client. Learning rate (i.e., the step size) is set as $\gamma_k(a):=a/{\rm log_2}(k+2)$ with $a=0.01$ for Example \ref{ex-linear} and $a=0.5d/m$ for Examples \ref{ex-logist} and \ref{ex-logist-nonconvex}.
  \item For {\tt FedProx}, each client needs to solve a  regularized optimization sub-problem at every step.  The regularized penalty constant is given as $\mu=10^{-4}$. We also employ the GD method to solve the sub-problem.   The maximal number of iterations is set as  $5$, and the learning rate is set as $\gamma_k(a)$ with $a=0.001$ for Example \ref{ex-linear} and $a=0.5d/m$ for Examples \ref{ex-logist} and \ref{ex-logist-nonconvex}.  
   \item  For {\tt FedPD},  we adopt the version with oracle choice I and option I, where the maximal number of iterations for the GD method is set as $5$. This algorithm involves two parameters  $\eta$ and $\eta_1$. The former has a similar   role of $1/\sigma$ in \eqref{Def-L} and the latter is the learning rate for solving a sub-problem.  For Example \ref{ex-linear}, we set $\eta=1$ and   $\eta_1=\gamma_k(0.05)$. For Examples \ref{ex-logist} and \ref{ex-logist-nonconvex}, we set $\eta=\max\{400, d/50\}$ and  $\eta_1=\gamma_k(0.5d/m)$. Moreover, instead of conducting the global aggregation with a probability, we use the same scheme as the other algorithms, namely,  aggregating all parameters when $k\in\K$.
\end{itemize}}

\begin{figure*}
	\centering
	\includegraphics[width=1.01\linewidth]{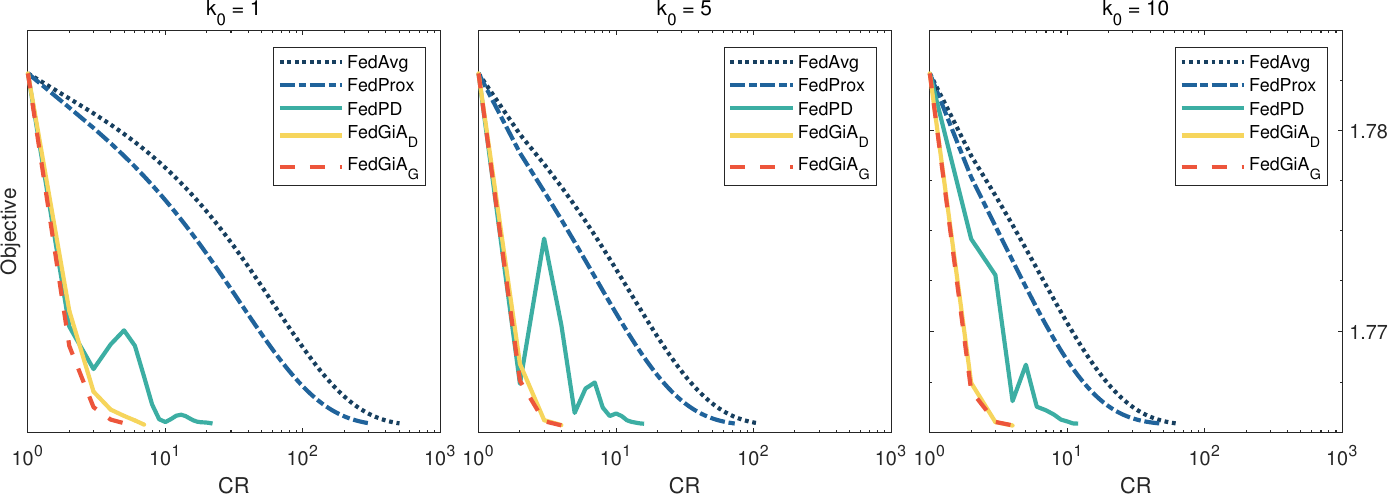}
\caption{$f(\bx^{\tau_{k}}) $  v.s. CR for Example \ref{ex-linear}.\label{fig:com-grad-cr}}
\end{figure*} 

\begin{figure*}
	\centering
	\includegraphics[width=1.01\linewidth]{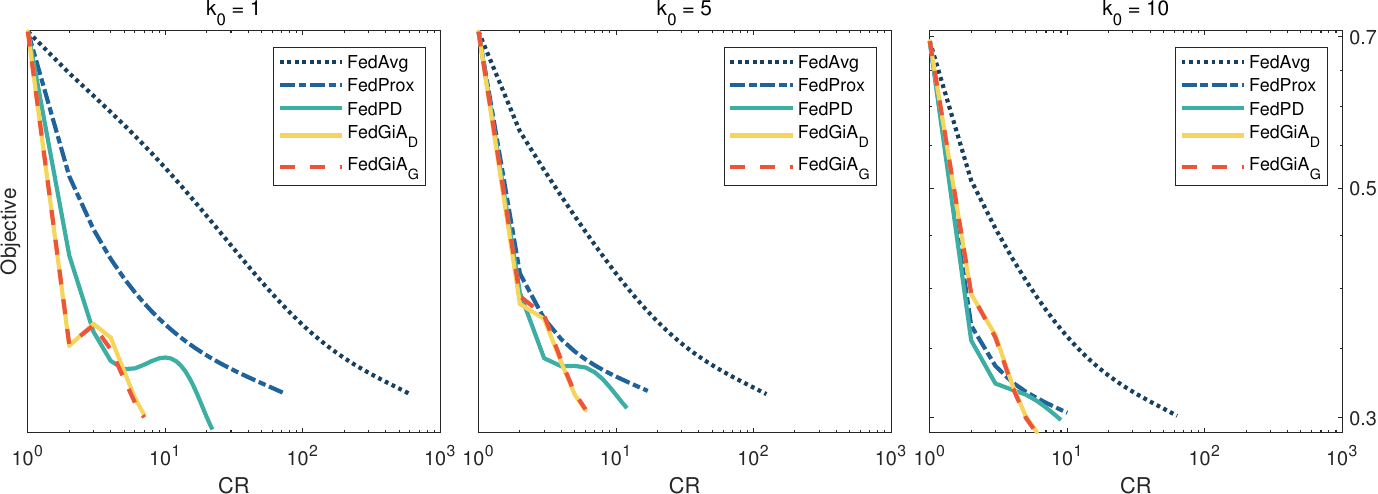}
\caption{$f(\bx^{\tau_{k}}) $  v.s. CR for Example \ref{ex-logist} with {\tt qot}.\label{fig:f-cf-ex2}}
\end{figure*} 

\begin{figure*}
	\centering
	\includegraphics[width=1.01\linewidth]{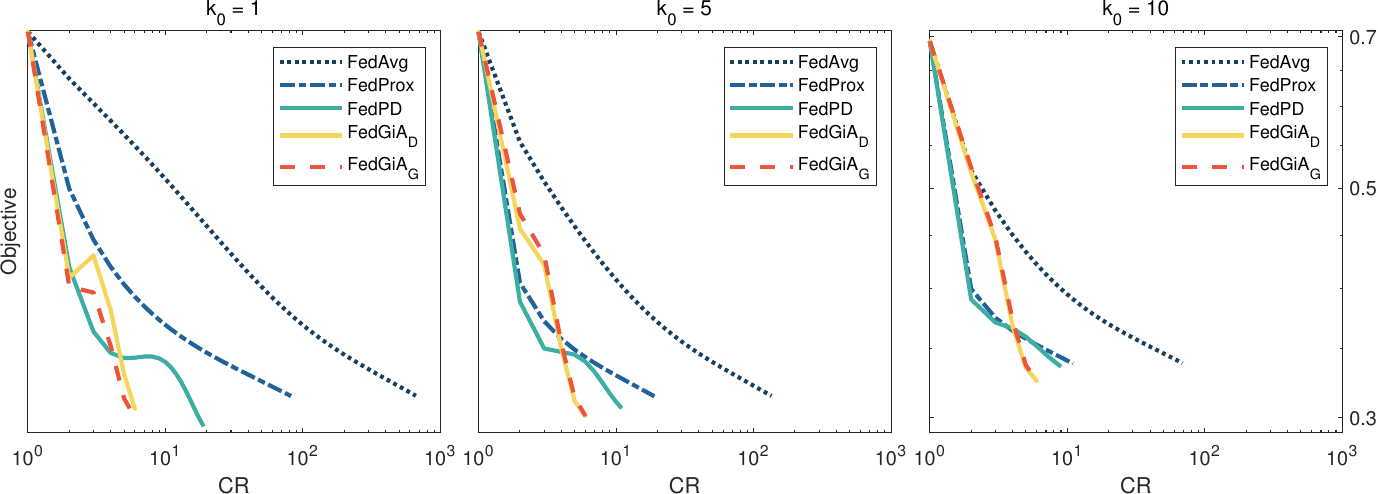}
\caption{$f(\bx^{\tau_{k}}) $  v.s. CR for Example \ref{ex-logist-nonconvex} with {\tt qot}.\label{fig:f-cf-ex3}}
\end{figure*} 

\begin{table}[!th]
	\renewcommand{\arraystretch}{1.0}\addtolength{\tabcolsep}{-3.5pt}
	\caption{Comparison for four algorithms.}\vspace{-3mm}
	\label{tab:com-linear}
	\begin{center}
		\begin{tabular}{llccccccccccc }
			\hline
			&&\multicolumn{3}{c}{$k_0=1$}&&\multicolumn{3}{c}{$k_0=5$}&&\multicolumn{3}{c}{$k_0=10$}\\\cline{3-5}\cline{7-9}\cline{11-13}
Algs.	&	&	Obj.	&	CR	&	Time	&	&	Obj.	&	CR	&	Time	&	&	Obj.	&	CR	&	Time	\\\hline
&&\multicolumn{11}{c}{Example \ref{ex-linear}}\\\cline{3-13}
{\tt FedAvg}	&	&	1.744 	&	515.3 	&	1.18 	&	&	1.744 	&	112.0 	&	0.70 	&	&	1.744 	&	63.7 	&	0.64 	\\
{\tt FedProx}	&	&	1.744 	&	315.7 	&	1.64 	&	&	1.744 	&	74.3 	&	1.41 	&	&	1.744 	&	49.9 	&	1.83 	\\
{\tt FedPD}	&	&	1.744 	&	21.9 	&	0.15 	&	&	1.744 	&	15.1 	&	0.29 	&	&	1.744 	&	11.2 	&	0.42 	\\
{\tt FedGiA$_{\tt D}$}	&	&	1.744 	&	6.1 	&	0.11 	&	&	1.744 	&	3.0 	&	0.11 	&	&	1.744 	&	3.0 	&	0.11 	\\
{\tt FedGiA$_{\tt G}$}	&	&	1.744 	&	4.5 	&	0.13 	&	&	1.744 	&	3.0 	&	0.16 	&	&	1.744 	&	3.0 	&	0.19 	\\
\hline
&&\multicolumn{11}{c}{Example \ref{ex-logist} with {\tt qot}}\\\cline{3-13}
{\tt FedAvg}	&	&	0.301 	&	597.4 	&	6.64 	&	&	0.301 	&	120.9 	&	3.83 	&	&	0.301 	&	61.2 	&	3.53 	\\
{\tt FedProx}	&	&	0.301 	&	71.7 	&	2.01 	&	&	0.301 	&	15.8 	&	1.81 	&	&	0.301 	&	8.7 	&	1.93 	\\
{\tt FedPD}	&	&	0.288 	&	20.5 	&	0.68 	&	&	0.292 	&	10.8 	&	1.31 	&	&	0.299 	&	7.6 	&	1.74 	\\
{\tt FedGiA$_{\tt D}$}	&	&	0.285 	&	5.7 	&	0.33 	&	&	0.285 	&	5.1 	&	0.34 	&	&	0.285 	&	5.0 	&	0.35 	\\
{\tt FedGiA$_{\tt G}$}	&	&	0.287 	&	5.2 	&	0.39 	&	&	0.285 	&	5.0 	&	0.60 	&	&	0.285 	&	4.9 	&	0.79 	\\
\hline
&&\multicolumn{11}{c}{Example \ref{ex-logist} with {\tt sct}}\\\cline{3-13}
{\tt FedAvg}	&	&	0.332 	&	190.4 	&	8.51 	&	&	0.332 	&	38.7 	&	5.02 	&	&	0.332 	&	19.7 	&	4.65 	\\
{\tt FedProx}	&	&	0.332 	&	21.0 	&	1.99 	&	&	0.332 	&	7.0 	&	2.51 	&	&	0.332 	&	5.0 	&	3.40 	\\
{\tt FedPD}	&	&	0.331 	&	4.0 	&	0.46 	&	&	0.332 	&	4.0 	&	1.49 	&	&	0.331 	&	5.0 	&	3.40 	\\
{\tt FedGiA$_{\tt D}$}	&	&	0.329 	&	5.0 	&	0.41 	&	&	0.328 	&	4.0 	&	0.39 	&	&	0.328 	&	4.0 	&	0.39 	\\
{\tt FedGiA$_{\tt G}$}	&	&	0.329 	&	4.4 	&	0.46 	&	&	0.328 	&	4.0 	&	0.62 	&	&	0.328 	&	4.0 	&	0.77 	\\
\hline
&&\multicolumn{11}{c}{Example \ref{ex-logist-nonconvex} with {\tt qot}}\\\cline{3-13}
{\tt FedAvg}	&	&	0.331 	&	682.7 	&	7.90 	&	&	0.331 	&	138.3 	&	4.53 	&	&	0.331 	&	70.3 	&	4.15 	\\
{\tt FedProx}	&	&	0.331 	&	83.7 	&	2.43 	&	&	0.331 	&	19.1 	&	2.27 	&	&	0.330 	&	11.0 	&	2.51 	\\
{\tt FedPD}	&	&	0.322 	&	18.5 	&	0.66 	&	&	0.324 	&	10.0 	&	1.29 	&	&	0.330 	&	8.0 	&	1.89 	\\
{\tt FedGiA$_{\tt D}$}	&	&	0.318 	&	5.7 	&	0.34 	&	&	0.322 	&	5.1 	&	0.35 	&	&	0.322 	&	4.9 	&	0.36 	\\
{\tt FedGiA$_{\tt G}$}	&	&	0.320 	&	5.2 	&	0.40 	&	&	0.321 	&	5.0 	&	0.61 	&	&	0.321 	&	4.9 	&	0.81 	\\

\hline
&&\multicolumn{11}{c}{Example \ref{ex-logist-nonconvex} with {\tt sct}}\\\cline{3-13}
{\tt FedAvg}	&	&	0.327 	&	200.0 	&	8.66 	&	&	0.327 	&	40.3 	&	4.99 	&	&	0.327 	&	20.3 	&	4.59 	\\
{\tt FedProx}	&	&	0.327 	&	22.2 	&	2.03 	&	&	0.327 	&	7.2 	&	2.50 	&	&	0.326 	&	6.0 	&	3.98 	\\
{\tt FedPD}	&	&	0.327 	&	4.5 	&	0.44 	&	&	0.327 	&	4.2 	&	1.50 	&	&	0.327 	&	5.0 	&	3.34 	\\
{\tt FedGiA$_{\tt D}$}	&	&	0.326 	&	4.2 	&	0.37 	&	&	0.324 	&	4.0 	&	0.37 	&	&	0.324 	&	4.0 	&	0.37 	\\
{\tt FedGiA$_{\tt G}$}	&	&	0.325 	&	4.1 	&	0.43 	&	&	0.324 	&	4.0 	&	0.58 	&	&	0.324 	&	4.0 	&	0.77 	\\

\hline
		\end{tabular}
	\end{center}
\end{table}
{
\subsubsection{Solving Example \ref{ex-linear}} For simplicity, we fix $m=128$ and $n=100$. From Fig. \ref{fig:com-grad-cr}, the objective function values for all methods eventually tend to the same one. Basically, the larger $k_0$ the fewer CR used to converge. Moreover, \GIAG\ and \GIAD\ outperform the others as they use the fewest CR.  We then run 20 independent trials and report the average results in Table \ref{tab:com-linear}. Clearly, there is no difference on the objective function values among these algorithms. However, in terms of using CR, \GIAG\ performs the best, followed by \GIAD\ and {\tt FedPD}. For the computational speed,  \GIAD\ always runs the fastest.}

{
\subsubsection{Solving Examples \ref{ex-logist} and \ref{ex-logist-nonconvex}} Again, we fix $m=128$ for simplicity. From Fig. \ref{fig:f-cf-ex2} and Fig. \ref{fig:f-cf-ex3}, when $k_0=5$ and $10$, although {\tt FedProx} and {\tt FedPD} can decline very quickly at the beginning, they still need higher CR than \GIA.  The objective function values of {\tt FedAvg} always decrease the slowest. Then we report the average results over 20 independent trials in Table \ref{tab:com-linear}, from which we can conclude several conclusions. First,  \GIA\ obtains the smallest objective function values and consumes the lowest CR almost for all scenarios, followed by {\tt FedPD}. Evidently, {\tt FedAvg} comes the last. In addition, both \GIAD\ and \GIAG\ run the fastest. By contrast, {\tt FedAvg} takes the longest time since it uses the highest CR.}

\section{Conclusion}
We developed a new FL algorithm and managed to address three critical issues in FL, including saving communication resources, reducing computational complexity, and establishing convergence property under mild assumptions. These advantages hint that the proposed algorithm might be practical to deal with many real applications, such as mobile edge computing \cite{mao2017survey,mach2017mobile,liu2022distributed}, over-the-air computation \cite{zhu2018mimo,yang2020federated}, vehicular
communications \cite{samarakoon2019distributed}, unmanned aerial vehicle online path control \cite{shiri2020communication}, immersive virtual reality video streaming \cite{chen2019federated, zheng2020mec},  and so forth.  Moreover, we feel that the algorithmic schemes and techniques used to build the convergence theory could be also valid for tackling decentralized FL  \cite{elgabli2020fgadmm,ye2022decentralized}. We leave these for future research.


\bibliographystyle{IEEEtran}
\bibliography{ref}

\appendices
\section{Some Basics}
For notational simplicity, hereafter, we denote
\begin{eqnarray*} 
 \arraycolsep=1.0pt\def\arraystretch{1.5}
\qquad \begin{array}{llllll}
\triangle\bx_i^{k+1}&:=&\bx_i^{k+1}-\bx_i^{k},~~
&\triangle\bpi_i^{k+1}&:=&\bpi_i^{k+1}-\bpi_i^{k},\\
\triangle\bx^{\tau_{k+1}}&:=&\bx^{\tau_{k+1}}-\bx^{\tau_{k}},~~&\triangle \blx_i^{k+1}&:=&\bx_i^{k+1}-\bx^{\tau_{k+1}},\\
 \bg_i^{k}&:=&\frac{1}{m}\nabla f_i(\bx_i^{k}),~~&\overline\bg_i^{k+1}&:=&\frac{1}{m}\nabla f_i(\bx^{\tau_{k+1}}).
    \end{array}  
 \end{eqnarray*} 
For any  vectors $\ba,\bb, \ba_i$, matrix $H\succeq0$, and $t>0$, we have
\begin{eqnarray} \label{two-vecs}
 \arraycolsep=1.4pt\def\arraystretch{1.5}
 \begin{array}{rcl}
 -  \|\bb\|^2 &=&2\langle\ba,\bb\rangle +   \|\ba\|^2-  \|\ba+\bb\|^2,\\
\|\ba+\bb\|^2 &\leq& (1+t)\|\ba\|^2+(1+1/t)\|\bb\|^2,\\
\|\sum_{i=1}^m\ba_i\|^2 &\leq& m\sum_{i=1}^m\|\ba_i\|^2,\\
2\langle H\ba,\bb\rangle &\leq& t\|\ba\|_H^2+(1/t) \|\bb\|_H^2.
 \end{array}
\end{eqnarray}
By the Mean Value Theorem, the gradient Lipschitz continuity indicates that for any $\bx, \bz$ and $\bw\in\{\bx, \bz\}$,
\begin{eqnarray}  \label{H-Lip-continuity-fxy}
 \arraycolsep=1.4pt\def\arraystretch{1.5}
\qquad \begin{array}{llll}
&&f(\bx) - f(\bz  ) -\langle \nabla  f(\bw  ), \bx-\bz   \rangle\\
&=&  \int_0^1 \langle \nabla  f(\bz+t(\bx-\bz)) - \nabla  f(\bw  ), \bx-\bz   \rangle dt  \\
&\leq&  \int_0^1 r\|\bz+t (\bx-\bz)  -  \bw \|\| \bx-\bz   \| dt\\
&=&  \frac{r}{2}\| \bx-\bz   \|^2.
 \end{array}
\end{eqnarray} 
{\begin{definition}[Desingularizing Function]\label{def-Desingularizing}  A function $ \varphi:[0,\eta)\to (0,+\infty)$ satisfying the following conditions is called a desingularizing function:
\begin{itemize}
\item[i)] $ \varphi$ is concave and continuously differentiable on $(0, \eta)$;
\item[ii)]$ \varphi$ is continuous at 0, $\varphi(0) = 0$, and
\item[iii)] $ \varphi'(x)> 0, \forall x \in (0, \eta)$.
\end{itemize}
Let $\Phi_{\eta}$ be the set of desingularizing functions defined on $[0,\eta)$. 
\end{definition}
For an extended-real-valued function $f:\R^n\to[-\infty,+\infty]$, the domain is defined as ${\rm dom} f = \{\bx : f(\bx) < \infty\}$. A function is called proper if it never reaches $-\infty$ and its domain is nonempty, and is called closed if it is lower semicontinuous. Denote $\partial f $  the (limiting) subdifferential of a proper function $f$ \cite{rockafellar2009variational}. If $f$ is continuously differentiable, then subdifferential $\partial f $   reduces to the gradient of $f$, namely $\partial f=\{\nabla f\}.$ Function ${\rm dist}$ is defined as
$${\rm dist}(\bx,\Omega):=\inf_{\bz\in\Omega}\|\bx-\bz\|.$$ Based on these definitions,   we introduce the KL functions \cite{bolte2014proximal}.
\begin{definition}[KL Property]\label{def-KL-func}  A proper closed function $f:\R^n\to(-\infty,+\infty]$ is said to have the Kurdyka-Lojasiewicz (KL) property at $\bx^* \in
{\rm dom} (\partial f) :=\{\bx \in\R^n: \partial f (\bx) \neq \emptyset \}$ if there exists $\eta \in (0,+\infty]$, a neighborhood
$U$ of $\bx^*$, and a desingularizing function $\varphi \in \Phi_{\eta}$, such that for all
$$\bx \in U \cap \{\bx \in\R^n: f (\bx^*) < f(\bx) < f(\bx^*) + \eta\},$$
the following inequality holds:
$$\varphi'(f (\bx) - f (\bx^*)){\rm dist}(0, \partial f (\bx)) > 1.$$
If $f$ satisfies the KL property at each point of ${\rm dom} \partial f $, then $f$ is called a KL function.
\end{definition}}

\section{Proofs of all theorems}

\subsection{Key lemmas}
\begin{lemma}\label{basic-observations}  Let $\{\Z^{k} \}$ be the sequence generated by Algorithm \ref{algorithm-CEADMM} with $H_i=\Theta_i,i\in[m]$. The following results hold under Assumption \ref{ass-fi}.
\begin{itemize}
\item[a)]$\forall~k\in\K$, 
\begin{eqnarray}
 \label{opt-con-xk1-1}
\begin{array}{rcll}
\sum_{i=1}^m  ( \frac{\bpi_i^{k}}{\sigma }   +\bx_i^{k}-\bx^{\tau_{k+1}})=0.
\end{array} \end{eqnarray} 
\item[b)]$\forall~k\geq 0, \forall~i\in[m]$,
\begin{eqnarray}
 \label{opt-con-xk1-2}
\begin{array}{rcll} 
\overline\bg_i^{k+1}  +  \bpi_i^{k+1} + \frac{1}{m}    H_i    \triangle\blx_i^{k+1}=0.
\end{array} \end{eqnarray} 
\item[c)] $\forall~ k\geq 0, \forall~i\in[m]$,  
\begin{eqnarray}
 \label{opt-con-xk1-3}
\begin{array}{rcll}
 \|\triangle \bpi_i^{k+1} \|^2  \leq       \frac{6r_i^2}{m^2}  \|\triangle\bx^{\tau_{k+1}}\|^2+\frac{3r_i^2}{m^2}    \|\triangle \bx_i^{k+1}\|^2.
\end{array} \end{eqnarray} 
\end{itemize}
\end{lemma}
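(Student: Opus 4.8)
The plan is to verify all three statements by direct substitution into the update rules of Algorithm~\ref{algorithm-CEADMM}, splitting into the branches $i\in\ctauk$ and $i\notin\ctauk$ whenever those rules differ; no tools beyond the algorithm's definitions, Assumption~\ref{ass-fi}, and the elementary inequalities collected in \eqref{two-vecs} are needed. For part~a), the key preliminary fact is that the auxiliary variable satisfies $\bz_i^{k}=\bx_i^{k}+\bpi_i^{k}/\sigma$ for every $k\ge0$ and every $i\in[m]$: this holds at $k=0$ by the prescribed initialization, and it is preserved at each step, since on the selected branch it is exactly \eqref{ceadmm-sub4}, while on the unselected branch combining \eqref{ceadmm-sub6}--\eqref{ceadmm-sub8} gives $\bz_i^{k+1}=\bx^{\tau_{k+1}}-\overline\bg_i^{k+1}/\sigma=\bx_i^{k+1}+\bpi_i^{k+1}/\sigma$. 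For $k\in\K$ one then plugs this identity into the aggregation rule \eqref{ceadmm-sub1}, multiplies through by $m$, and rearranges to obtain \eqref{opt-con-xk1-1}.

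For part~b) I treat the two branches separately. If $i\in\ctauk$, rewrite \eqref{ceadmm-sub2} as $(H_i/m+\sigma I)\triangle\blx_i^{k+1}=-(\overline\bg_i^{k+1}+\bpi_i^{k})$ and \eqref{ceadmm-sub3} as $\sigma\triangle\blx_i^{k+1}=\bpi_i^{k+1}-\bpi_i^{k}$; subtracting these two relations eliminates $\sigma\triangle\blx_i^{k+1}$ together with $\bpi_i^{k}$ and leaves $\frac{1}{m}H_i\triangle\blx_i^{k+1}=-\overline\bg_i^{k+1}-\bpi_i^{k+1}$, which is \eqref{opt-con-xk1-2}. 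If $i\notin\ctauk$, then \eqref{ceadmm-sub6} gives $\triangle\blx_i^{k+1}=0$ and \eqref{ceadmm-sub7} gives $\bpi_i^{k+1}=-\overline\bg_i^{k+1}$, so \eqref{opt-con-xk1-2} holds trivially.

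For part~c) I invoke part~b) at the two consecutive indices $k$ and $k-1$, which gives $\bpi_i^{k+1}=-\overline\bg_i^{k+1}-\frac{1}{m}H_i\triangle\blx_i^{k+1}$ and $\bpi_i^{k}=-\overline\bg_i^{k}-\frac{1}{m}H_i\triangle\blx_i^{k}$. Subtracting, and using $\triangle\blx_i^{k+1}-\triangle\blx_i^{k}=\triangle\bx_i^{k+1}-\triangle\bx^{\tau_{k+1}}$, expresses $\triangle\bpi_i^{k+1}$ as the sum of the three vectors $-(\overline\bg_i^{k+1}-\overline\bg_i^{k})$, $-\frac{1}{m}H_i\triangle\bx_i^{k+1}$ and $\frac{1}{m}H_i\triangle\bx^{\tau_{k+1}}$. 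Applying the bound $\|\sum_{j=1}^{3}\ba_j\|^2\le 3\sum_{j=1}^{3}\|\ba_j\|^2$ from \eqref{two-vecs}, then bounding $\|\overline\bg_i^{k+1}-\overline\bg_i^{k}\|=\frac{1}{m}\|\nabla f_i(\bx^{\tau_{k+1}})-\nabla f_i(\bx^{\tau_{k}})\|\le\frac{r_i}{m}\|\triangle\bx^{\tau_{k+1}}\|$ by Assumption~\ref{ass-fi} (that is, \eqref{Lip-r}) and the two $H_i$-terms by $\frac{r_i}{m}\|\triangle\bx_i^{k+1}\|$ and $\frac{r_i}{m}\|\triangle\bx^{\tau_{k+1}}\|$ respectively (using $\|H_i\|=\|\Theta_i\|\le r_i$, since $r_iI\succeq\Theta_i\succeq0$), and collecting the two copies of $\frac{r_i^2}{m^2}\|\triangle\bx^{\tau_{k+1}}\|^2$ yields exactly \eqref{opt-con-xk1-3}.

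I do not expect a genuine obstacle: the whole argument is bookkeeping with the update rules. The two spots needing a little care are propagating the identity $\bz_i^k=\bx_i^k+\bpi_i^k/\sigma$ consistently across both branches in part~a), and the boundary index $k=0$ in part~c), where ``part~b) at step $k-1$'' has to be read off the initialization of $(\bx_i^0,\bpi_i^0,\bz_i^0)$ rather than a genuine previous iteration. The only mild ``trick'' is to split $\triangle\bpi_i^{k+1}$ into \emph{three} summands before squaring, which is what makes the constants come out as $6$ and $3$.
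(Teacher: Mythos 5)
Your proposal is correct and follows essentially the same route as the paper's proof: part a) via the invariant $\bz_i^k=\bx_i^k+\bpi_i^k/\sigma$ propagated through both branches and substituted into \eqref{ceadmm-sub1}, part b) by combining \eqref{ceadmm-sub2} with \eqref{ceadmm-sub3} on the selected branch and reading it off \eqref{ceadmm-sub6}--\eqref{ceadmm-sub8} on the other, and part c) by differencing part b) at consecutive indices and using $r_iI\succeq H_i\succeq0$ with the Lipschitz bound. The only cosmetic difference is the last step of c), where you square a three-term sum via $\|\sum_{j=1}^{3}\ba_j\|^2\le 3\sum_{j}\|\ba_j\|^2$ while the paper first merges the two $\|\triangle\bx^{\tau_{k+1}}\|$ contributions and then applies the weighted Young inequality from \eqref{two-vecs}; both yield the same constants $6$ and $3$.
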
  
\begin{proof} 
a) For any $i\notin\ctauk$,  we have from \eqref{ceadmm-sub4} that
 \begin{eqnarray}\label{z-pi-x}
\arraycolsep=1.4pt\def\arraystretch{1.5}
\begin{array}{lcl}
\bz^{k+1}_i = \frac{\bpi_i^{k+1}}{\sigma }  +\bx_i^{k+1}.
\end{array} \end{eqnarray} 
For any $i\notin\ctauk$, it follows from \eqref{ceadmm-sub6}-\eqref{ceadmm-sub8} that the above relation is still valid.  Hence,   we have \eqref{z-pi-x} for any $i\in[m]$ and for any $k\geq0$. As a result,  for any $k\in\K$,  
 \begin{eqnarray*}
\arraycolsep=1.4pt\def\arraystretch{1.5}
~~\begin{array}{lcl}
\sum_{i=1}^m  (  \frac{\bpi_i^{k}}{\sigma }  +\bx_i^{k}-\bx^{\tau_{k+1}}) \overset{\eqref{z-pi-x}}{=} \sum_{i=1}^m ( \bz_i^{k} -\bx^{\tau_{k+1}})   \overset{\eqref{ceadmm-sub1}}{=} 0.
\end{array} \end{eqnarray*} 
b) For $i\in\ctauk$,  solution $\bx_i^{k+1}$ in \eqref{ceadmm-sub2}  satisfies  \eqref{iceadmm-sub2-0}, thereby contributing to, 
 \begin{eqnarray}
 \label{opt-con-xik1-10}
 \arraycolsep=1.5pt\def\arraystretch{1.5}
\begin{array}{lcl}
 0 &=&  \overline\bg_i^{k+1} + \bpi_i^{k} + (\frac{1}{m}    H_i  +\sigma I)   \triangle\blx_i^{k+1}\\
 &\overset{\eqref{ceadmm-sub3}}{=}&  \overline\bg_i^{k+1}  +  \bpi_i^{k+1} + \frac{1}{m}    H_i \triangle\blx_i^{k+1}.
   \end{array}
  \end{eqnarray} 
  For any $i\notin\ctauk$, the second equation in \eqref{opt-con-xik1-10} is still valid due to $\bpi_i^{k+1} =- \blg_i^{k+1}  $ and $\triangle\blx_i^{k+1}=0$ from \eqref{ceadmm-sub6}-\eqref{ceadmm-sub8}.   Hence, it is true for any $i\in[m]$ and any $k\in\K$.

c) It follows from  \eqref{opt-con-xk1-2} and $r_iI\succeq H_i=\Theta_i\succeq 0$ that 
 \begin{eqnarray}\label{gap-pi-k}
 \arraycolsep=1.5pt\def\arraystretch{1.5}
 \begin{array}{lcl}
&& \|\triangle \bpi_i^{k+1} \|\\
  &=&\|\overline\bg_i^{k+1}-\overline\bg_i^{k}  + \frac{1}{m}    H_i (\triangle\bx_i^{k+1}-\triangle\bx_i^{\tau_{k+1}}) \|\\
&{\leq}&  \| \overline\bg_i^{k+1}  -\overline\bg_i^{k}\|+ \frac{r_i}{m}    \|\triangle \bx_i^{k+1}\|+ \frac{r_i}{m}  \|\triangle\bx^{\tau_{k+1}}\| \\
 &\overset{\eqref{Lip-r} }{\leq}&  \frac{2r_i}{m}  \|\triangle\bx^{\tau_{k+1}}\|+\frac{r_i}{m}    \|\triangle \bx_i^{k+1}\|,
   \end{array}
  \end{eqnarray} 
  which by \eqref{two-vecs} derives the result.
\end{proof}
\subsection{Proof of Lemma \ref{lemma-decreasing-0} }
\begin{proof} i) It is easy to see
  \begin{eqnarray} 
\label{ri-m-sigma}  
 \arraycolsep=1.4pt\def\arraystretch{1.5}
  \begin{array}{llllll}
 \frac{r_i}{m} \leq  \frac{r}{m}\leq \frac{\sigma}{6},
    \end{array} 
 \end{eqnarray} 
 which recalling \eqref{opt-con-xk1-3} gives rise to
 \begin{eqnarray} \label{opt-con-xk1-3-simple}
   \arraycolsep=1.4pt\def\arraystretch{1.5}
  \begin{array}{lcl}
 \|\triangle \bpi_i^{k+1}  \|^2 \leq  \frac{\sigma^2}{6 } \|\triangle\bx^{\tau_{k+1}}\|^2+\frac{ \sigma^2}{12}    \|\triangle \bx_i^{k+1}\|^2. 
    \end{array} 
 \end{eqnarray}
We note that gap  $(\L^{k+1}-\L^{k})$ can be decomposed as 
  \begin{eqnarray} 
\label{three-cases}  
 \arraycolsep=1.4pt\def\arraystretch{1.5}
 \qquad   \begin{array}{llllll}
 {\L}(\Z^{k+1}) -{\L} (\Z^{k}) =: e_1^k+e_2^k+e_3^k,
    \end{array} 
 \end{eqnarray} 
 with
   \begin{eqnarray}  \label{three-cases-sub}  
 \arraycolsep=1.4pt\def\arraystretch{1.5}
 \begin{array}{llllll}
e_1^k&:=&\L(\bx^{\tau_{k+1}},X^{k},\Pi^{k})-\L(\Z^{k}),\\
 e_2^k&:=& \L(\bx^{\tau_{k+1}},X^{k+1},\Pi^{k})-\L(\bx^{\tau_{k+1}},X^{k},\Pi^{k}), \\
 e_3^k&:=& \L(\Z^{k+1})-\L(\bx^{\tau_{k+1}},X^{k+1},\Pi^{k}) .
    \end{array} 
 \end{eqnarray} 
\underline{Estimating  $e_1^k$.} If  $k\notin\K$, then  $\bx^{\tau_{k+1}}=\bx^{\tau_k}$, yielding
   \begin{eqnarray*} 
 \arraycolsep=1.4pt\def\arraystretch{1.5}
 \begin{array}{llllll}
e_1^k&=&0
= - \frac{ \sigma m}{2} \|\triangle\bx^{\tau_{k+1}} \|^2. 
    \end{array} 
 \end{eqnarray*} 
For  $k\in\K$, multiplying both sides of the first equation in \eqref{opt-con-xk1-1} by $\triangle\bx^{\tau_{k+1}}$ yields
   \begin{eqnarray}
 \label{opt-con-xk2}
    \arraycolsep=1.4pt\def\arraystretch{1.5}
   \begin{array}{lll} \sum_{i=1}^{m} \langle \triangle \bx^{\tau_{k+1}},\bpi_i^k  \rangle = \sum_{i=1}^{m} \langle \triangle \bx^{\tau_{k+1}},  \sigma  (\bx^{\tau_{k+1}}-\bx_i^k)\rangle. 
\end{array}
  \end{eqnarray} 
The fact allows  us to derive that
\begin{eqnarray*} 
   \arraycolsep=1.4pt\def\arraystretch{1.5}
\begin{array}{lcl}
  e_1^k 
   &\overset{\eqref{Def-L}}{=}& \sum_{i=1}^{m} ( L(\bx^{\tau_{k+1}},\bx_i^{k},\bpi_i^{k})  - L(\bx^{\tau_{k}},\bx_i^{k},\bpi_i^{k}))\\
    &\overset{\eqref{Def-L}}{=}&    \sum_{i=1}^{m} ( \langle \triangle\bx^{\tau_{k+1}}, -\bpi_i^k\rangle \\
    & +&   \frac{\sigma }{2}\|\bx_i^k-\bx^{\tau_{k+1}}\|^2 -\frac{\sigma }{2}\|\bx_i^k- \bx^{\tau_{k}}\|^2) \\
&\overset{\eqref{opt-con-xk2}}{=}& \sum_{i=1}^{m} (  \langle  \triangle\bx^{\tau_{k+1}},  \sigma (\bx_i^k-\bx^{\tau_{k+1}}) \rangle\\
&+&   \frac{\sigma }{2}\|\bx_i^k-\bx^{\tau_{k+1}}\|^2 -\frac{\sigma }{2}\|\bx_i^k- \bx^{\tau_{k}}\|^2) \\
&\overset{\eqref{two-vecs}}{=}&   -\frac{\sigma}{2}\sum_{i=1}^{m} \| \triangle\bx^{\tau_{k+1}}\|^2= -\frac{\sigma m}{2} \| \triangle\bx^{\tau_{k+1}}\|^2.
    \end{array} 
 \end{eqnarray*} 
 Overall, for both scenarios, we obtained
\begin{eqnarray} \label{two-cases}  
   \arraycolsep=1.4pt\def\arraystretch{1.5}
\begin{array}{lcl}
  e_1^k = -\frac{\sigma m}{2} \| \triangle\bx^{\tau_{k+1}}\|^2.
    \end{array} 
 \end{eqnarray}  
\underline{Estimating  $e_2^k$.} We denote
\begin{eqnarray}  \label{def-pi}
   \arraycolsep=0.5pt\def\arraystretch{1.5}
  \begin{array}{lcl}
p_{i}^k&:=&  L(\bx^{\tau_{k+1}},\bx_i^{k+1},\bpi_i^{k})  - L(\bx^{\tau_{k+1}},\bx_i^{k},\bpi_i^{k})\\
&\overset{\eqref{Def-L}}{=}&  \frac{1}{m}  f_i(\bx_i^{k+1})- \frac{1}{m}  f_i(\bx_i^k)   +  \langle \triangle \bx_i^{k+1} , \bpi_i^k\rangle \\
     &+ &    \frac{\sigma }{2}\|\triangle\blx_i^{k+1} \|^2 -\frac{\sigma }{2}\|\bx_i^{k}-\bx^{\tau_{k+1}}\|^2. 
    \end{array} 
 \end{eqnarray}
 We will consider  two cases: $i\notin\ctauk$ and  $i\in\ctauk$. For $i\notin\ctauk$, if $k\in\K$, then  $\bx_i^{k+1}\equiv\bx^{\tau_{k+1}}$  (namely $\triangle \blx_i^{k+1}=0$) suffices to
 \begin{eqnarray*}  
   \arraycolsep=0.5pt\def\arraystretch{1.5}
  \begin{array}{lcl}
p_{i}^k
&\overset{\eqref{def-pi}}{=}&  \frac{1}{m}  f_i(\bx_i^{k+1}) - \frac{1}{m}  f_i(\bx_i^k)   + \langle \triangle \bx_i^{k+1} , \bpi_i^k\rangle -    \frac{\sigma }{2}\|\triangle \bx_i^{k+1} \|^2\\
&\overset{\eqref{H-Lip-continuity-fxy}}{\leq}&  \langle \triangle \bx_i^{k+1} , \bg_i^{k+1}+\bpi_i^k\rangle + \frac{r_i}{2m} \|\triangle \bx_i^{k+1} \|^2 -    \frac{\sigma }{2}\|\triangle \bx_i^{k+1} \|^2\\
&\overset{\eqref{ri-m-sigma}}{\leq}&  \langle \triangle \bx_i^{k+1} , \blg_i^{k+1}+\bpi_i^k\rangle - \frac{5\sigma }{12} \|\triangle \bx_i^{k+1} \|^2\\
& \overset{\eqref{ceadmm-sub7}  }{=}&  \langle \triangle \bx_i^{k+1} , -\triangle\bpi_i^{k+1}\rangle -  \frac{5\sigma }{12} \|\triangle \bx_i^{k+1} \|^2\\
&  \overset{\eqref{two-vecs}}{\leq}&  \frac{r_i}{m} \| \triangle \bx_i^{k+1} \|^2+ \frac{m}{4r_i} \|\triangle\bpi_i^{k+1}\|^2 -  \frac{5\sigma }{12}\|\triangle \bx_i^{k+1} \|^2\\
&  \overset{\eqref{opt-con-xk1-3}}{\leq}&     \frac{3r_i}{2m}  \|\triangle\bx^{\tau_{k+1}}\|^2 +    \frac{7r_i}{4m}  \|\triangle\bx_i^{{k+1}}\|^2 -  \frac{5\sigma }{12}\|\triangle \bx_i^{k+1} \|^2\\
&  \overset{\eqref{ri-m-sigma}}{\leq}&     \frac{\sigma}{4}  \|\triangle\bx^{\tau_{k+1}}\|^2-  \frac{\sigma }{8}\|\triangle \bx_i^{k+1} \|^2.
    \end{array} 
 \end{eqnarray*}
If $k\notin\K$, then   \eqref{ceadmm-sub6} indicates $\bx_i^{k+1}=\bx^{\tau_{k+1}}=\bx^{\tau_{k}}=\bx^{k}_i$. This immediately results in  $p_{i}^k\overset{}{=} 0 $ from \eqref{def-pi} and thus the above condition also holds.  
 Therefore, for any $i\notin\ctauk$, we showed 
 \begin{eqnarray*}  
   \arraycolsep=1.5pt\def\arraystretch{1.5}
  \begin{array}{lcl}
p_{i}^k &\leq&  \frac{\sigma}{4}  \|\triangle\bx^{\tau_{k+1}}\|^2-  \frac{\sigma }{8}\|\triangle \bx_i^{k+1} \|^2.
    \end{array} 
 \end{eqnarray*} 
For any $i\in\ctauk$, direct calculation yields that 
\begin{eqnarray} \label{i-c-k1}
   \arraycolsep=1.5pt\def\arraystretch{1.5}
  \begin{array}{lcl}
 &&\langle  \triangle \bx_i^{k+1} , \bg_i^{k+1} + \bpi_i^k+\sigma \triangle\blx_i^{k+1}\rangle \\
 &=&     \langle  \triangle \bx_i^{k+1} , \bg_i^{k+1}-\blg_i^{k+1}+\blg_i^{k+1} + \bpi_i^k+\sigma \triangle\blx_i^{k+1}\rangle \\
   &\overset{ \eqref{opt-con-xik1-10}}{=}&  \langle \triangle\bx_i^{k+1}, \bg_i^{k+1}-\blg_i^{k+1}-  \frac{1}{m}H_i   \triangle\blx_i^{k+1}  \rangle\\
    &=&  \langle \triangle \bx_i^{k+1}, \bg_i^{k+1}-\blg_i^{k+1}-\frac{1}{m}H_i   \triangle\blx_i^{k+1} 
    \rangle\\
     &\overset{\eqref{two-vecs}}{\leq}&  \frac{r_i}{4m} \|\triangle \bx_i^{k+1}\|^2+    \frac{m}{ r_i} \|\bg_i^{k+1}-\blg_i^{k+1}-\frac{1}{m}H_i   \triangle\blx_i^{k+1}\|^2\\
       &\leq&  \frac{r_i}{4m} \|\triangle \bx_i^{k+1}\|^2+    \frac{4r_i}{m} \|\triangle\blx_i^{k+1}\|^2,
    \end{array} 
 \end{eqnarray}
 where the last inequality is from $r_iI\succeq H_i=\Theta_i\succeq 0$ and the gradient Lipschitz continuity of $f_i$.  Moreover, it follows from \eqref{two-vecs} that
 \begin{eqnarray*} 
   \arraycolsep=1.5pt\def\arraystretch{1.5}
  \begin{array}{lcl}
 && \frac{\sigma }{2}\|\triangle\blx_i^{k+1} \|^2 -\frac{\sigma }{2}\|\bx_i^{k}-\bx^{\tau_{k+1}}\|^2\\
  &=& \langle \triangle\bx_i^{k+1},   \sigma \triangle\blx_i^{k+1}\rangle-  \frac{\sigma}{2} \| \triangle\bx_i^{k+1}\|^2.
    \end{array} 
 \end{eqnarray*}
 Using the above two facts derives 
\begin{eqnarray*} 
   \arraycolsep=1.5pt\def\arraystretch{1.5}
  \begin{array}{lcl}
p_i^k
     &\overset{\eqref{def-pi}}{=}&   \frac{1}{m}  f_i(\bx_i^{k+1})- \frac{1}{m}  f_i(\bx_i^k)   +  \langle \triangle \bx_i^{k+1} , \bpi_i^k\rangle \\
     &+ &       \langle \triangle\bx_i^{k+1},   \sigma \triangle\blx_i^{k+1}\rangle-  \frac{\sigma}{2} \| \triangle\bx_i^{k+1}\|^2\\ 
          &\overset{ \eqref{H-Lip-continuity-fxy}}{\leq}&     \langle  \triangle \bx_i^{k+1} , \bg_i^{k+1} {+} \bpi_i^k{+}\sigma \triangle\blx_i^{k+1}\rangle +  ( \frac{ r_i}{2m}- \frac{\sigma}{2}  ) \| \triangle\bx_i^{k+1}\|^2\\ 
  &\overset{ \eqref{i-c-k1}}{\leq}&       \frac{4r_i}{m} \|\triangle\blx_i^{k+1}\|^2 +   ( \frac{3r_i}{4m}- \frac{\sigma}{2}  )  \| \triangle\bx_i^{k+1}\|^2\\
&\overset{ \eqref{ri-m-sigma}}{\leq}&    \frac{2\sigma}{3} \|\triangle\blx_i^{k+1}\|^2  - \frac{3\sigma}{8}    \| \triangle\bx_i^{k+1}\|^2\\
&\overset{ \eqref{ceadmm-sub3}}{=}&    \frac{2}{3\sigma} \|\triangle\bpi_i^{k+1}\|^2  -  \frac{3\sigma}{8}  \| \triangle\bx_i^{k+1}\|^2\\
&\overset{ \eqref{opt-con-xk1-3-simple}}{\leq}&             \frac{\sigma}{9}  \|\triangle\bx^{\tau_{k+1}}\|^2   -  \frac{23\sigma }{72} \| \triangle\bx_i^{k+1}\|^2\\ 
&\leq&  \frac{\sigma}{4}  \|\triangle\bx^{\tau_{k+1}}\|^2-  \frac{\sigma }{8}\|\triangle \bx_i^{k+1} \|^2,
    \end{array} 
 \end{eqnarray*} Overall, for both cases: $i\notin\ctauk$ and  $i\in\ctauk$, we have
 \begin{eqnarray} \label{gap-2}  
   \arraycolsep=1.5pt\def\arraystretch{1.5}
  \begin{array}{lcl}
e_2^k&=&\sum_{i=1}^{m} p_{i}^k = \sum_{\in\ctauk}  p_{i}^k + \sum_{i\notin\ctauk} p_{i}^k \\
 & \leq&    \sum_{i=1}^{m} ( \frac{\sigma}{4}  \|\triangle\bx^{\tau_{k+1}}\|^2-  \frac{\sigma }{8}\|\triangle \bx_i^{k+1} \|^2). 
    \end{array} 
 \end{eqnarray} 
\underline{Estimating  $e_3^k$.} Again, we have two cases. For client $i\in\ctauk $, it has the following inequalities,
\begin{eqnarray*}  
   \arraycolsep=1.4pt\def\arraystretch{1.5}
  \begin{array}{lcl}
q_i^k &:=&L(\bx^{\tau_{k+1}},\bx_i^{k+1},\bpi_i^{k+1})  - L(\bx^{\tau_{k+1}},\bx_i^{k+1},\bpi_i^{k})\\
&\overset{\eqref{Def-L}}{=}&    \langle \triangle \blx_i^{k+1}, \triangle \bpi_i^{k+1} \rangle  \overset{ \eqref{ceadmm-sub3}}{=}     \frac{1}{\sigma }\|\triangle \bpi_i^{k+1}  \|^2\\
       &\overset{(\ref{opt-con-xk1-3-simple})}{\leq}&\frac{ \sigma}{12}    \|\triangle \bx_i^{k+1}\|^2+   \frac{\sigma}{6 } \|\triangle\bx^{\tau_{k+1}}\|^2.
    \end{array} 
 \end{eqnarray*}
 For client $i\notin\ctauk $, since $\bx_i^{k+1} = \bx^{\tau_{k+1}}$ by \eqref{ceadmm-sub6}, it follows $\triangle \blx_i^{k+1}=0$ and thus  the above condition also holds. Overall
 \begin{eqnarray} \label{gap-3} 
   \arraycolsep=0pt\def\arraystretch{1.5}
  \begin{array}{lcl}
e_{3}^k =\sum_{i=1}^m q_i^k  \leq \sum_{i=1}^m  (\frac{ \sigma}{12}    \|\triangle \bx_i^{k+1}\|^2+   \frac{\sigma}{6 } \|\triangle\bx^{\tau_{k+1}}\|^2). 
    \end{array} 
 \end{eqnarray}
Combining \eqref{three-cases}, \eqref{two-cases}, \eqref{gap-2}, \eqref{gap-3}  shows the result.

ii) We focus on $s\in\K$. It follows from \eqref{Def-L} that
\begin{eqnarray}  \label{opt-con-xk1-3-s}
\arraycolsep=1pt\def\arraystretch{1.5}
\begin{array}{rll}
 \nabla_{\bx} \L(\Z) &=& - \sum_{i=1}^{m} (\bpi_i + \sigma(\bx_i-\bx)),\\
  \nabla_{\bx_i} \L(\Z) &=& \frac{1}{m}\nabla  f_i(\bx_i)  + \bpi_i + \sigma(\bx_i-\bx), \\
  \nabla_{\bpi_i} \L(\Z) &=&\bx_i-\bx.
\end{array}
\end{eqnarray}
Using the first condition in \eqref{opt-con-xk1-3-s} results in
\begin{eqnarray}
 \label{opt-con-xk1-s1}
 \arraycolsep=0pt\def\arraystretch{1.5}
\begin{array}{lll}
&&\|\nabla_{\bx} \L(\Z^{s+1})\|\\
&=&\|\sum_{i=1}^{m} (\bpi_i^{s+1} + \sigma \triangle\blx_i^{s+1}  )\|\\
&=&\|\sum_{i=1}^{m} (\bpi_i^{s} + \sigma(\bx_i^{s}-\bx^{\tau_{s+1}})+\triangle\bpi_i^{s+1}+\sigma\triangle \bx_i^{s+1})\|\\
&\overset{\eqref{opt-con-xk1-1}}{=}&\|\sum_{i=1}^{m} ( \triangle\bpi_i^{s+1}+\sigma\triangle \bx_i^{s+1})\|\\
&\leq&\sum_{i=1}^{m} (\| \triangle\bpi_i^{s+1}\|+\sigma\|\triangle \bx_i^{s+1}\|).
\end{array} \end{eqnarray} 
Using the second condition in \eqref{opt-con-xk1-3-s} results in
\begin{eqnarray}
 \label{opt-con-xk1-s2}
 \arraycolsep=1pt\def\arraystretch{1.5}
\begin{array}{lll}
&&\|\nabla_{\bx_i} \L(\Z^{s+1})\|\\
&=&\| \bg_i^{s+1} + \bpi_i^{s+1} + \sigma \triangle\blx_i^{s+1} \|\\
&\overset{\eqref{opt-con-xk1-2}}{=}&\| \bg_i^{s+1} -\overline\bg_i^{s+1}  + \sigma \triangle\blx_i^{s+1}  -\frac{1}{m}    H_i    \triangle\blx_i^{s+1} \|\\
&\leq&(\frac{r_i}{m}+\|\sigma I- \frac{1}{m}    H_i\|)\|\triangle\blx_i^{s+1}\| \\
&\overset{\eqref{ri-m-sigma}}{\leq}&\frac{7}{6}\|\sigma\triangle\blx_i^{s+1}\|  \\
&\leq&\frac{7}{6}\| \triangle\bpi_i^{s+1}\|,  
\end{array} \end{eqnarray} 
where the first two inequalities are from the gradient Lipschitz continuity of $f_i$ and $\sigma I\succeq \sigma I- \frac{1}{m}    H_i \succeq 0$, respectively, and the last inequality is due to \eqref{ceadmm-sub3}  if $i\in\ctauk$ and $\triangle\blx_i^{s+1}=0$ by \eqref{ceadmm-sub6} if $i\notin\ctauk$. Similarly,  using the third condition in \eqref{opt-con-xk1-3-s} results in
\begin{eqnarray}
 \label{opt-con-xk1-s3}
 \arraycolsep=0pt\def\arraystretch{1.5}
\begin{array}{lll}
 \|\nabla_{\bpi_i} \L(\Z^{s+1})\| =  \| \triangle\blx_i^{s+1}\|   \leq \frac{1}{\sigma}\| \triangle\bpi_i^{s+1}\|. 
\end{array} \end{eqnarray} 
Combining  facts \eqref{opt-con-xk1-s1}-\eqref{opt-con-xk1-s3}   
allows us to obtain
\begin{eqnarray*}
 \label{opt-con-xk1-sss}
 \arraycolsep=0pt\def\arraystretch{1.5}
\begin{array}{lll}
&&\|\nabla \L(\Z^{s+1})\| \leq \|\nabla_{\bx}  \L(\Z^{s+1})\|\\
& +&  \sum_{i=1}^{m} (\|\nabla_{\bx_i} \L(\Z^{s+1})\| + \|\nabla_{\bpi_i} \L(\Z^{s+1})\|) \\
&\leq&\sum_{i=1}^{m} ((\frac{13}{6}{+}   \frac{1}{\sigma})\| \triangle\bpi_i^{s+1}\|+\sigma\|\triangle \bx_i^{s+1}\|)\\
&\overset{\eqref{gap-pi-k}}{\leq}&\sum_{i=1}^{m} ((\frac{13}{6}{+}   \frac{1}{\sigma})(\frac{2r_i}{m}  \|\triangle\bx^{\tau_{s+1}}\|{+}\frac{r_i}{m}    \|\triangle \bx_i^{s+1}\|){+}\sigma\|\triangle \bx_i^{s+1}\|)\\
&\overset{\eqref{ri-m-sigma}}{\leq}&\sum_{i=1}^{m} ((\frac{13}{6}{+}   \frac{1}{\sigma})(\frac{\sigma}{3}  \|\triangle\bx^{\tau_{s+1}}\|{+}\frac{\sigma}{6}    \|\triangle \bx_i^{s+1}\|){+}\sigma\|\triangle \bx_i^{s+1}\|)\\
&{\leq}&\sum_{i=1}^{m}   \frac{49\sigma+12}{36} ( \|\triangle\bx^{\tau_{s+1}}\|{+} \|\triangle \bx_i^{s+1}\|)\\
&{\leq}& (2\sigma+1)  \sqrt{ m \varpi_{s+1}},
\end{array} \end{eqnarray*} 
showing the desired result.
\end{proof}
  \begin{lemma}\label{L-bounded-decreasing}   Let $\{\Z^{k} \}$ be the sequence generated by Algorithm \ref{algorithm-CEADMM} with $H_i=\Theta_i,i\in[m]$ and  $\sigma\geq6r/m$. The following results hold under Assumption \ref{ass-fi}.
 \begin{itemize}
 \item[i)] $\{\L(\Z^{k})\}$ is non-increasing. 
 \item[ii)] $\L(\Z^{k}) \geq f (\bx^{\tau_{k}}) \geq f^* >-\infty$ for any integer $ k\geq0$.
 \item[iii)] For any $i\in[m]$,   
\begin{eqnarray} \label{limit-5-term-0}
   \arraycolsep=0pt\def\arraystretch{1.5}
 \begin{array}{lll}
&&\underset{k \rightarrow \infty}{\lim} \triangle \bx^{\tau_{k+1}} = \underset{k \rightarrow \infty}{\lim}  \triangle \bx^{k+1}_i=\\
&&\underset{k \rightarrow \infty}{\lim}  \triangle \bpi^{k+1}_i  =\underset{k \rightarrow \infty}{\lim}  \triangle \blx^{k+1}_i =0.
    \end{array} 
 \end{eqnarray}
 \end{itemize}
 
\end{lemma}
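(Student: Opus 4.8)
The plan is to prove the three items in order, with items i) and ii) furnishing exactly the two ingredients (monotonicity and a uniform lower bound) needed to telescope the descent estimate of Lemma~\ref{lemma-decreasing-0} in item iii). Item i) is immediate: the quantity $\varpi_{k+1}$ in \eqref{def-Y-k1} is a sum of squared norms, hence $\varpi_{k+1}\ge0$, so \eqref{decreasing-property-0} gives $\L(\Z^{k+1})-\L(\Z^{k})\le-\tfrac{\sigma}{24}\varpi_{k+1}\le0$ for all $k\ge0$.

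For item ii), the estimate $f(\bx^{\tau_{k}})\ge f^*>-\infty$ is just optimality of $\bx^*$ for \eqref{FL-opt} together with \eqref{FL-opt-lower-bound}, so the real content is $\L(\Z^{k})\ge f(\bx^{\tau_{k}})$. I would expand $\L(\Z^{k})=\sum_{i=1}^{m}L(\bx^{\tau_{k}},\bx_i^{k},\bpi_i^{k})$ via \eqref{Def-L} and eliminate the multiplier using Lemma~\ref{basic-observations}b), which for $k\ge1$ reads $\bpi_i^{k}=-\tfrac1m\nabla f_i(\bx^{\tau_{k}})-\tfrac1m H_i(\bx_i^{k}-\bx^{\tau_{k}})$; substituting rewrites each summand as
\[
\tfrac1m f_i(\bx_i^{k})-\tfrac1m\langle\nabla f_i(\bx^{\tau_{k}}),\bx_i^{k}-\bx^{\tau_{k}}\rangle-\tfrac1m\|\bx_i^{k}-\bx^{\tau_{k}}\|_{H_i}^2+\tfrac{\sigma}{2}\|\bx_i^{k}-\bx^{\tau_{k}}\|^2 .
\]
I would then apply the lower-bound companion of \eqref{H-Lip-continuity-fxy}, namely $f_i(\bx_i^{k})-f_i(\bx^{\tau_{k}})-\langle\nabla f_i(\bx^{\tau_{k}}),\bx_i^{k}-\bx^{\tau_{k}}\rangle\ge-\tfrac{r_i}{2}\|\bx_i^{k}-\bx^{\tau_{k}}\|^2$ (the same integral argument with the inequality reversed), together with $H_i=\Theta_i\preceq r_iI$; this reduces the desired $L(\bx^{\tau_{k}},\bx_i^{k},\bpi_i^{k})\ge\tfrac1m f_i(\bx^{\tau_{k}})$ to the scalar inequality $m\sigma\ge3r_i$, which holds since $\sigma\ge6r/m\ge6r_i/m$. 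Summing over $i$ then gives the claim for $k\ge1$, and the base case $k=0$ follows directly from the initialization. I expect this item — and in particular the $k=0$ check — to be the only delicate point of the lemma, since it is where the precise size of $\sigma$ has to compensate for the lack of convexity of the $f_i$.

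Item iii) is then a telescoping consequence. Summing \eqref{decreasing-property-0} over $k=0,\dots,N-1$ and using item ii) in the form $\L(\Z^{N})\ge f^*$ yields $\tfrac{\sigma}{24}\sum_{k=1}^{N}\varpi_k\le\L(\Z^{0})-f^*$ for every $N$; since each $\varpi_k\ge0$, the series $\sum_{k\ge1}\varpi_k$ converges, hence $\varpi_{k+1}\to0$. By \eqref{def-Y-k1} this forces $\triangle\bx^{\tau_{k+1}}\to0$ and $\triangle\bx_i^{k+1}\to0$ for each $i\in[m]$; feeding these into \eqref{opt-con-xk1-3} gives $\triangle\bpi_i^{k+1}\to0$; and finally $\triangle\blx_i^{k+1}=\tfrac1\sigma\triangle\bpi_i^{k+1}\to0$ when $i\in\ctauk$ by \eqref{ceadmm-sub3}, while $\triangle\blx_i^{k+1}=0$ when $i\notin\ctauk$ by \eqref{ceadmm-sub6}. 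Once Lemmas~\ref{lemma-decreasing-0} and~\ref{basic-observations} are in hand, everything in item iii) is routine.
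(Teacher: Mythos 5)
Your proposal is correct and follows essentially the same route as the paper: i) from the sign of $\varpi_{k+1}$, ii) by eliminating the multiplier via \eqref{opt-con-xk1-2} and combining the quadratic bound from gradient Lipschitz continuity (\eqref{H-Lip-continuity-fxy}) with $H_i\preceq r_iI$ and $\sigma\geq 6r/m$, and iii) by telescoping \eqref{decreasing-property-0} and then propagating the limits through \eqref{opt-con-xk1-3}, \eqref{ceadmm-sub3}, and \eqref{ceadmm-sub6}. The only cosmetic difference is that you substitute the multiplier into $L$ and bound from below at index $k$, whereas the paper bounds $\tfrac1m f_i(\bx^{\tau_{k+1}})-\tfrac1m f_i(\bx_i^{k+1})$ from above at index $k+1$; the algebra and the required inequality $m\sigma\geq 3r_i$ are the same.
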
  
\begin{proof} i) The conclusion follows from \eqref{decreasing-property-0} immediately .

 ii) From $r_iI\succeq H_i=\Theta_i\succeq 0$ and \eqref{grad-lip-theta}, we have
 \begin{eqnarray} \label{grad-lip-theta-yx}
 \arraycolsep=1.5pt\def\arraystretch{1.5}
 \begin{array}{lcl}
&&\frac{1}{m} f_i(\bx^{\tau_{k+1}})-  \frac{1}{m}f_i(\bx_i^{k+1}) \\
&\overset{\eqref{H-Lip-continuity-fxy}}{\leq} & \langle\triangle \blx^{k+1}_i  , - \blg_i^{k+1}  \rangle +\frac{r_i}{2m}\|\triangle \blx_i^{k+1}\|^2   \\
 &\overset{\eqref{opt-con-xk1-2}}{=}&       \langle \triangle \blx_i^{k+1},   \bpi_i^{k+1}  + \frac{1}{m}    H_i    \triangle\blx_i^{k+1}\rangle+\frac{r_i}{2m}\|\triangle \blx_i^{k+1}\|^2\\
  &\overset{\eqref{ri-m-sigma}}{\leq}&       \langle \triangle \blx_i^{k+1},   \bpi_i^{k+1}  \rangle+\frac{\sigma}{4}\|\triangle \blx_i^{k+1}\|^2, 
    \end{array} 
 \end{eqnarray} 
which   allows us to obtain
\begin{eqnarray*}  
  \arraycolsep=1.4pt\def\arraystretch{1.5}
 \begin{array}{lll}   
p_i^k&:=&\frac{1}{m} f_i(\bx_i^{k+1})+      \langle \triangle \blx_i^{k+1}, \bpi_i^{k+1}\rangle+  \frac{\sigma }{2}\|\triangle \blx_i^{k+1}\|^2  \\
     &\overset{\eqref{grad-lip-theta-yx}}{\geq}&    \frac{1}{m}  f_i(\bx^{\tau_{k+1}})        +\frac{\sigma  }{4} \|\triangle \blx_i^{k+1}\|^2 \geq    \frac{1}{m} f_i(\bx^{\tau_{k+1}}).   
    \end{array} 
 \end{eqnarray*}
 Using the above condition, we obtain
 \begin{eqnarray}  \label{lower-bound-L} 
  \arraycolsep=1.4pt\def\arraystretch{1.5}
 \begin{array}{lll}   
 \L(\Z^{k+1} )  &=&  \sum_{i=1}^{m} p_i^k 
 \geq   \sum_{i=1}^{m}    \frac{1}{m} f_i(\bx^{\tau_{k+1}}) \\ &=&f(\bx^{\tau_{k+1}})   \geq  f^*   \overset{\eqref{FL-opt-lower-bound}}{ >} -\infty.
    \end{array} 
 \end{eqnarray}
 iii)  From \eqref{decreasing-property-0}, we conclude that
 \begin{eqnarray*} 
    \arraycolsep=1.4pt\def\arraystretch{1.5}
  \begin{array}{lll}
 {\sum}_{k\geq0}  \frac{\sigma }{24}  \varpi_{k+1} 
&\leq&{\sum}_{k\geq0}  (  \L(\Z^{k} )  - \L(\Z^{k+1} )) \\
&=&
  \L(\Z^{0} )- {\lim}_{k\rightarrow\infty} \L(\Z^{k+1} )\overset{\eqref{lower-bound-L}}{<}+\infty. 
 \end{array}  
 \end{eqnarray*} 
The above condition means $ \triangle \bx^{\tau_{k+1}} \to0 $ and $  \triangle \bx^{k+1}_i\to 0$, yielding  $ \| \triangle \bpi_i^{k+1}  \| \rightarrow0$ by \eqref{opt-con-xk1-3} for any $i\in[m]$.  Finally, we note that $\triangle \blx^{k+1}_i=\triangle \bpi_i^{k+1}/\sigma\rightarrow0$ from \eqref{ceadmm-sub3}  if $i\in\ctauk$ and $\triangle \blx^{k+1}_i=0$ from \eqref{ceadmm-sub6}  if $i\notin\ctauk$. Overall, $\triangle \blx^{k+1}_i  \rightarrow0$, which completes the whole proof is finished.
\end{proof}

 \subsection{Proof of Theorem \ref{global-obj-convergence-exact}}
 \begin{proof} i) It follows from Lemma \ref{L-bounded-decreasing} that $\{\L(\Z^k)\}$ is non-increasing and bounded from below. Therefore,  whole sequence $\{\L(\Z^k)\}$  converges.  For $i\notin \ctauk$, we have $\blx_i^{k+1}=0$ from \eqref{ceadmm-sub6}, thereby leading to
 \begin{eqnarray*} 
   \arraycolsep=0pt\def\arraystretch{1.5}
\begin{array}{lcl}
L(\bx^{\tau_{k+1}},\bx_i^{k+1},\bpi_i^{k+1}) \overset{\eqref{Def-L}}{=} \frac{1}{m}  f_i(\bx_i^{k+1}).   
    \end{array} 
 \end{eqnarray*}  
 For $i\in \ctauk$, it follows 
\begin{eqnarray*}  
   \arraycolsep=0pt\def\arraystretch{1.5}
\begin{array}{lcl}
&& L(\bx^{\tau_{k+1}},\bx_i^{k+1},\bpi_i^{k+1}) - \frac{1}{m}  f_i(\bx_i^{k+1})\\ 
&\overset{\eqref{Def-L}}{=} &  \langle \triangle\blx_i^{k+1},    \bpi_i^{k+1}\rangle+  \frac{\sigma}{2} \| \triangle\blx_i^{k+1}\|^2  \\
      & \overset{ \eqref{ceadmm-sub3}}{=}&   \frac{1}{\sigma }  \langle \triangle \bpi_i^{k+1}, \bpi_i^{k+1}\rangle +\frac{1}{2\sigma }\|\triangle \bpi_i^{k+1} \|^2   \\
         & {=}&  \frac{1}{2\sigma } \|\bpi_i^{k+1}\|^2  - \frac{1}{2\sigma }\| \bpi_i^{k}\|^2+\frac{1}{\sigma }\|\triangle \bpi_i^{k+1} \|^2. 
    \end{array} 
 \end{eqnarray*} 
 Using the above two conditions, we can conclude that
\begin{eqnarray*} 
   \arraycolsep=1.4pt\def\arraystretch{1.5}
 \begin{array}{lll}
&& |\L(\Z^{k+1})- F(X^{k+1})|  \\ 
&=& |\sum_{i=1}^m L(\bx^{\tau_{k+1}},\bx_i^{k+1},\bpi_i^{k+1}) - \frac{1}{m}  f_i(\bx_i^{k+1})|\\  
&=& |\sum_{i\in \ctauk} \frac{1}{2\sigma } \|\bpi_i^{k+1}\|^2  - \frac{1}{2\sigma }\| \bpi_i^{k}\|^2+\frac{1}{\sigma }\|\triangle \bpi_i^{k+1} \|^2|\\ 
&\leq& \sum_{i=1}^m| \frac{1}{2\sigma } \|\bpi_i^{k+1}\|^2  - \frac{1}{2\sigma }\| \bpi_i^{k}\|^2|+\frac{1}{\sigma }\|\triangle \bpi_i^{k+1} \|^2 \overset{\eqref{limit-5-term-0}}{\rightarrow}  0. 
 \end{array} 
 \end{eqnarray*} 
In addition, same reasoning to show \eqref{grad-lip-theta-yx} enables to derive
\begin{eqnarray*} 
   \arraycolsep=1.4pt\def\arraystretch{1.5}
  \begin{array}{lcl}
  \frac{1}{m}f_i(\bx_i^{k+1})&-&\frac{1}{m}f_i(\bx^{\tau_{k+1}})\\
    &\leq&       \langle \triangle \blx_i^{k+1},  - \bpi_i^{k+1}  \rangle+\frac{\sigma}{4}\|\triangle \blx_i^{k+1}\|^2, 
    \end{array} 
 \end{eqnarray*}
 which by \eqref{grad-lip-theta-yx} yields that $|q_i^k |\leq \frac{\sigma}{4}\|\triangle \blx_i^{k+1}\|^2$, where
 \begin{eqnarray*}
   \arraycolsep=1.4pt\def\arraystretch{1.5}
  \begin{array}{lcl}q_i^k &:=&\frac{1}{m}f_i(\bx_i^{k+1})-\frac{1}{m}f_i(\bx^{\tau_{k+1}})-\langle\triangle\blx_i^{k+1} , \bpi^{k+1}_i\rangle .
    \end{array} 
 \end{eqnarray*}
Therefore, the above fact  brings out
 \begin{eqnarray*} 
   \arraycolsep=1.4pt\def\arraystretch{1.5}
 \begin{array}{lll}
 |\L(\Z^{k+1})- f(\bx^{\tau_{k+1}})| 
&=&|\sum_{i=1}^m (q_i^k + \frac{\sigma}{2}  \|\triangle\blx_i^{k+1}\|^2)|\\
& {\leq}& \sum_{i=1}^m  \frac{3\sigma}{4}  \|\triangle\blx_i^{k+1}\|^2 \rightarrow 0.
 \end{array} 
 \end{eqnarray*} 
ii) By \eqref{opt-con-xk1-1}, we derive that, for any
$\forall~k\in\K$, 
\begin{eqnarray}\label{sum-pi-0}
 \arraycolsep=1.4pt\def\arraystretch{1.5}
 \begin{array}{rcll} 
0&=&\sum_{i=1}^m  (  \bpi_i^{k} +\sigma(\bx_i^{k}-\bx^{\tau_{k+1}}))\\
&\overset{\eqref{ceadmm-sub6}}{=}&\sum_{i\in\ctauk} (  \bpi_i^{k} +\sigma  \triangle\blx_i^{k+1}-\sigma\triangle\bx_i^{k+1})\\
&+&\sum_{i\notin\ctauk} (   \bpi_i^{k+1}- \sigma\triangle\bx_i^{k+1} - \triangle \bpi_i^{k+1})\\
&\overset{\eqref{ceadmm-sub3}}{=}&
  \sum_{i=1}^m  (  \bpi_i^{k+1}- \sigma\triangle\bx_i^{k+1} ) - \sum_{i\notin\ctauk}  \triangle \bpi_i^{k+1},
\end{array} \end{eqnarray} 
which together with  \eqref{limit-5-term-0} implies $\lim_{k(\in\K)\rightarrow\infty} \sum_{i=1}^m \bpi_i^{k+1}=0.$  Let $s:=(\tau_{k+1}-1)k_0 \in\K$. Then
\begin{eqnarray}\label{limit-pi-s}
 \arraycolsep=1.4pt\def\arraystretch{1.5}
 \begin{array}{rcll} 
\lim_{s(\in\K)\rightarrow\infty} \sum_{i=1}^m \bpi_i^{s+1}=0.
\end{array} \end{eqnarray}
Moreover, for any $k$, it  is easy to show that 
\begin{eqnarray}\label{tau-tau-s}
 \arraycolsep=1.4pt\def\arraystretch{1.5}
\begin{array}{rcll} 
s+1&=& (\tau_{k+1}-1)k_0+1 \leq k+1 \leq \tau_{k+1}k_0,\\ 
\tau_{s+1}&=&\lfloor (s+1)/k_0 \rfloor = \lfloor  \tau_{k+1}-1-1/k_0 \rfloor= \tau_{k+1}.
\end{array} \end{eqnarray}
Based on this, we now estimate $\bpi_i^{k+1}-\bpi_i^{s+1}$ for any $k$.  For any $i\in\ctauk$, we can show that  and hence
\begin{eqnarray*}
 \arraycolsep=1.4pt\def\arraystretch{1.5}
 \begin{array}{rcll} 
&&\|\bpi_i^{k+1}-\bpi_i^{s+1}\| \\
&\overset{\eqref{opt-con-xk1-2}}{=}&
\|\overline\bg_i^{k+1}  - \overline\bg_i^{s+1} +\frac{1}{m}    H_i    (\triangle\blx_i^{k+1} -   \triangle\blx_i^{s+1})\|\\
&\leq&
\frac{r_i}{m}   (\|\overline\bx^{\tau_{k+1}}  -  \overline\bx^{\tau_{s+1}}\| + \|\triangle\blx_i^{k+1}\|+\|  \triangle\blx_i^{s+1}\|)\\
&\overset{\eqref{tau-tau-s}}{=}&
\frac{r_i}{m}   (  \|\triangle\blx_i^{k+1}\|+\|  \triangle\blx_i^{s+1}\|).
\end{array} \end{eqnarray*}
For any $i\notin\ctauk$, $\bpi_i^{k+1}=\bpi_i^{s+1}=- \overline\bg_i^{s+1}$ by \eqref{ceadmm-sub7}. So, the above condition is still valid. Overall, we show that
\begin{eqnarray}\label{gap-pik-pis}
 \arraycolsep=1.4pt\def\arraystretch{1.5}
 \begin{array}{rcll} 
&&\|\bpi_i^{k+1}-\bpi_i^{s+1}\|  \leq
\frac{r_i}{m}   (  \|\triangle\blx_i^{k+1}\|+\|  \triangle\blx_i^{s+1}\|),
\end{array} \end{eqnarray}
for any $i\in[m]$ and $k\geq0$, which by \eqref{limit-5-term-0} allows us to show $\bpi_i^{k+1}-\bpi_i^{s+1}\rightarrow 0$, thereby recalling \eqref{limit-pi-s} suffices to 
\begin{eqnarray}\label{limit-pi-k}
 \arraycolsep=1.4pt\def\arraystretch{1.5}
 \begin{array}{rcll} 
\lim_{k\rightarrow\infty} \sum_{i=1}^m \bpi_i^{k+1}=0.
\end{array} \end{eqnarray}
This together with \eqref{opt-con-xk1-2} and  \eqref{limit-5-term-0} immediately gives us
 \begin{eqnarray} \label{limit-K-grad-i-exact}
\arraycolsep=0pt\def\arraystretch{1.5}
  \begin{array}{lcl}
 {\lim}_{k\rightarrow \infty} \nabla f(\bx^{\tau_{k+1}}) = {\lim}_{k\rightarrow \infty} \sum_{i=1}^{m}  \blg_i^{k+1} = 0. 
   \end{array}
  \end{eqnarray} 
Finally, the above condition together with $ \triangle \blx_i^{k+1} =(\bx_i^{k+1}-\bx^{\tau_{k+1}})\rightarrow0$ and the gradient Lipschitz continuity yields that
 \begin{eqnarray*} 
 \arraycolsep=8pt\def\arraystretch{1.5}
  \begin{array}{llllllll}
 {\lim}_{k \rightarrow \infty} \sum_{i=1}^{m}  w_i \nabla f_i(\bx^{\tau_{k+1}}) =0,
   \end{array}
  \end{eqnarray*} 
 completing the whole proof. \end{proof}
 
 \subsection{Proof of Theorem \ref{global-convergence-exact} }  
 \begin{proof} 
i) It follows from Lemma \ref{L-bounded-decreasing}  i) and \eqref{lower-bound-L} that
\begin{eqnarray} 
   \arraycolsep=1.4pt\def\arraystretch{1.5}
  \begin{array}{lll}
\L(\Z^0)  \geq  \L(\Z^{k+1})  \geq   \sum_{i=1}^{m}    w_i  f_i(\bx^{\tau_{k+1}}) = f(\bx^{\tau_{k+1}}), 
    \end{array} 
 \end{eqnarray}
which  implies $\bx^{\tau_{k+1}}\in\S$ and hence  $\{\bx^{\tau_{k+1}}\}$ is bounded due to  the boundedness of $\S$.  This calls forth the boundedness of $\{\bx_i^{k+1}\}$ as $\triangle\blx_i^{k+1}\rightarrow0$ from \eqref{limit-5-term-0}. Then the boundedness of sequence $\{\bpi_i^{k+1}\}$ can be ensured because of
  \begin{eqnarray*} 
\arraycolsep=1.4pt\def\arraystretch{1.5}
 \begin{array}{lcl}
 \|\bpi_i^{k+1}\|  &\overset{\eqref{opt-con-xk1-2}}{=}&
\|\overline\bg_i^{k+1}  +\frac{1}{m}    H_i   \triangle\blx_i^{k+1} \|  \\  &\leq&  \|\blg_i^{k+1} - \bg_i^{0}\|+ \|\bg_i^{0}\| +\frac{r_i}{m}    \| \triangle\blx_i^{k+1} \|   \\ 
  &\overset{\eqref{Lip-r}}{\leq}&\frac{r_i}{m}    \|  \bx^{\tau_{k+1}}-\bx^{0}_i\|+ \|\bg_i^{0}\| +\frac{r_i}{m}    \| \triangle\blx_i^{k+1} \|<+\infty,  
     \end{array} 
 \end{eqnarray*}
where `$<$' is from the boundedness of $\{\bx^{\tau_{k+1}}\}$. Overall, $\{\Z^{k+1}\}$ is bounded. Let $\Z^{\infty}$ be any accumulating point of the sequence, 
 it follows from \eqref{opt-con-xk1-2}  and $ \triangle \blx_i^{k+1} \rightarrow0$ that 
 \begin{eqnarray*}
 \arraycolsep=1.4pt\def\arraystretch{1.5}
  \begin{array}{lll}
0&=& \overline\bg_i^{k+1}  +  \bpi_i^{k+1} + \frac{1}{m}    H_i    \triangle\blx_i^{k+1}\\
&=& \bg_i^{k+1}  +  \bpi_i^{k+1} + \overline\bg_i^{k+1}- \bg_i^{k+1}+ \frac{1}{m}    H_i    \triangle\blx_i^{k+1}\\
&\rightarrow& \frac{1}{m} \nabla f_i(\bx^{\infty}_i) + \bpi_i^{\infty}.
   \end{array}
  \end{eqnarray*} 
Moreover,   \eqref{limit-pi-k} and \eqref{limit-5-term-0}  suffice to  $\sum_{i=1}^{m} \bpi_i^{\infty}=0$ and $ \bx^{\infty}_i- \bx^{\infty}=0$. 
 By recalling \eqref{opt-con-FL-opt-ver1},  $\Z^{\infty}$  is a stationary point of (\ref{FL-opt-ver1}) and $\bx^{\infty}$ is a stationary point of   (\ref{FL-opt}). 
  
ii) We first prove the result if $\bx^{\infty}$ is isolated.  Since $\triangle \bx^{\tau_{k+1}} \rightarrow0$ and $\bx^{\infty}$ being isolated,  whole sequence $\{\bx^{\tau_{k+1}}\}$ converges to $\bx^{\infty}$ by \cite[Lemma 4.10]{more1983computing}. This together with $ \triangle \blx_i^{k+1}\rightarrow0$ and \eqref{opt-con-xk1-2} implies that $\{X^{k}\}$ and $\{\Pi^k\}$ converge to $X^{\infty}$ and $\Pi^\infty$.

{Now we show the result if every $f_i$ has KL property. For any $t\in{\mathbb N}:=\{0,1,2,\cdots\}$, denote  
   \begin{eqnarray*}
 \arraycolsep=1.4pt\def\arraystretch{1.5}
  \begin{array}{l}
 \F^t =: \L(\Z^{tk_0+1}),~ \nabla \F^t := \nabla\L(\Z^{tk_0+1}). 
   \end{array}
  \end{eqnarray*} 
Let $\Omega$ be the set of all accumulating points of sequence   $\{\Z^{tk_0+1}:t\in{\mathbb N}\}$. Then  $\L(\Z ),\forall \Z\in\Omega$ have the same value since  whole sequence $\{\L(\Z^{tk_0+1})\}$  converges. Denote
   \begin{eqnarray*}
 \arraycolsep=1.4pt\def\arraystretch{1.5}
  \begin{array}{l}
\F^\infty := \L(\Z ),~ \forall \Z\in\Omega.
   \end{array}
  \end{eqnarray*} 
By the non-increasing property of $\{\L(\Z^k)\}$, we have
  \begin{eqnarray} 
 \label{decreasing-property-0-tk0} 
  \arraycolsep=1.4pt\def\arraystretch{1.5}  
\begin{array}{lcl}
\F^{t+1} &=&  \L(\Z^{(t+1)k_0+1})\\
& \overset{ \eqref{decreasing-property-0}}{\leq}&  {\L} (\Z^{(t+1)k_0}) - \frac{\sigma }{24}\varpi_{(t+1)k_0+1}\\
&  {\leq}&  {\L} (\Z^{tk_0+1}) - \frac{\sigma }{24}\varpi_{(t+1)k_0+1}\\
& =& \F^{t}  - \frac{\sigma }{24}\varpi_{(t+1)k_0+1}\\
    \end{array}  
 \end{eqnarray} 
 and 
 \begin{eqnarray} 
 \label{grad-decreasing-property-0-tk0}   
\begin{array}{lll}
\|\nabla\F^t\|= \|\nabla \L(\Z^{tk_0+1})\|\overset{\eqref{grad-decreasing-property-0}  }{\leq}  (2\sigma+1)  \sqrt{ m  \varpi_{tk_0+1}}.
    \end{array}  
 \end{eqnarray}  
Now for any $\eta>0$  and $\delta>0$, there exists $t_0\in\N$ such that
 \begin{eqnarray} \label{exist-t0}
   \arraycolsep=1.0pt\def\arraystretch{1.5}  
\begin{array}{lll}
\Z^{t k_0+1}&\in& \{\Z: {\rm dist}(\Z,\Omega)\leq\delta\} \cap\\
&& \{\Z:  \F^\infty\leq \L(\Z)\leq \F^\infty + \eta\},~~t\geq t_0, 
    \end{array}  
 \end{eqnarray} 
due to $\lim_{t\to\infty}\Z^{t k_0+1}\in\Omega$ and $\F^t\to\F^\infty$. Since every $f_i$ has KL property, so has $\L$. Then from the KL property, there exists a desingularizing function $\varphi$ such that 
  \begin{eqnarray*} 
   \arraycolsep=1.0pt\def\arraystretch{1.5}  
\begin{array}{lll}
 \varphi'(\F^t- \F^\infty) \|\nabla\F^t\| \geq 1,
    \end{array}  
 \end{eqnarray*} 
 namely,
   \begin{eqnarray} 
   \label{desingularizing}  
   \arraycolsep=1.0pt\def\arraystretch{1.5}     
\begin{array}{lll}
 \varphi'(\F^t- \F^\infty) \geq \frac{1}{\|\nabla\F^t\|} \overset{\eqref{grad-decreasing-property-0-tk0} }{\geq} \frac{1}{ (2\sigma+1)  \sqrt{  m \varpi_{tk_0+1}}}.
    \end{array}  
 \end{eqnarray} 
The above condition and $ \varphi$ being concave bring out
    \begin{eqnarray} 
   \arraycolsep=1.0pt\def\arraystretch{1.5}  
 \label{desingularizing-2}   
\begin{array}{lcl}
&&\varphi(\F^{t+1}- \F^\infty)-\varphi(\F^t- \F^\infty)\\
 &\leq&  \varphi'(\F^t- \F^\infty)  (\F^{t+1}- \F^t)
  \overset{(\ref{decreasing-property-0-tk0})  }{\leq} \frac{-    \varpi_{(t+1)k_0+1} }{ a \sqrt{ \varpi_{tk_0+1}}},
    \end{array}  
 \end{eqnarray} 
 where $a:= {24(2\sigma+1)  \sqrt{  m}}/{ \sigma}$, which suffices to
     \begin{eqnarray*} 
   \arraycolsep=1.0pt\def\arraystretch{1.5}  
 \label{desingularizing-2}   
\begin{array}{lcl}
&&\sqrt{ \varpi_{(t+1)k_0+1}} \\ &\leq & 
 \sqrt{ \sqrt{\varpi_{tk_0+1}}a (\varphi(\F^t- \F^\infty)-\varphi(\F^{t+1}- \F^\infty)) } \\
 &\leq & \frac{1}{2}  \sqrt{\varpi_{tk_0+1}} + \frac{a}{2}(\varphi(\F^t- \F^\infty)-\varphi(\F^{t+1}- \F^\infty)).
    \end{array}  
 \end{eqnarray*} 
 Summing the both sides of the above condition yields 
      \begin{eqnarray} 
   \arraycolsep=1.0pt\def\arraystretch{1.5}  
 \label{desingularizing-3}   
\begin{array}{lcl}
 \sum_{t\geq0} \sqrt{ \varpi_{(t+1)k_0+1}}  \leq  \sqrt{ \varpi_{1}}  +a \varphi(\F^0- \F^\infty)<\infty.
    \end{array}  
 \end{eqnarray} 
 We note from \eqref{def-Y-k1} that
 \begin{eqnarray*} 
  \arraycolsep=1.4pt\def\arraystretch{1.5}  
\begin{array}{lll}
\varpi_{k+1}{=} \sum_{i=1}^m ( \| \triangle\bx^{\tau_{k+1}}\|^2  {+}  \| \triangle\bx_i^{k+1}\|^2){=}\|\Y^{k+1}-\Y^{k}\|^2, 
    \end{array}  
 \end{eqnarray*} 
 where $\Y^{k}{:=}(\bx^{\tau_k},\cdots,\bx^{\tau_k},\bx_1^k,\cdots,\bx_m^k)$. This and \eqref{desingularizing-3} mean that $\{\Y^{tk_0+1}\}$ is a Cauchy sequence and hence is convergent, resulting in the convergence of sequences $\{\bx^{\tau_{tk_0}}:t\in\N\}$ and $\{X^{tk_0+1}:t\in\N\}$. These by \eqref{opt-con-xk1-2} and $\triangle\blx_i^{k+1}\to 0$ indicate $\{\Pi^{tk_0+1}\}$ also converges. Overall, $\{\Z^{tk_0+1}\}$ converges. Recalling \eqref{limit-5-term-0} and $k_0$ is a finite number, we can conclude $\{\Z^{k}\}$ has the same convergence behaviour of $\{\Z^{tk_0+1}\}$.} 
  \end{proof}

 \subsection{Proof of Corollary \ref{L-global-convergence}}
 
\begin{proof} i) The convexity of $f$ and the optimality of $\bx^*$ lead to
    \begin{eqnarray}  \label{convexity-optimality}
   \arraycolsep=1.4pt\def\arraystretch{1.5}
   \begin{array}{lll}
f(\bx^{\tau_{k}}) \geq  f(\bx^{*}) \geq  f(\bx^{\tau_{k}}) + \langle \nabla f(\bx^{\tau_{k}}), \bx^{*}-\bx^{\tau_{k}} \rangle. 
    \end{array} 
 \end{eqnarray} 
Theorem \ref{global-obj-convergence-exact} ii) states that   
   \begin{eqnarray*}  
   \arraycolsep=1.4pt\def\arraystretch{1.5}
   \begin{array}{lll}
 {\lim}_{k \rightarrow \infty} \nabla F(X^{k})  ={\lim}_{k \rightarrow\infty} \nabla f(\bx^{\tau_{k}}) =0.
    \end{array} 
 \end{eqnarray*} 
 Using this and the boundedness of $\{\bx^{\tau_{k}}\}$ from Theorem \ref{global-convergence-exact}, we take the limit of both sides of \eqref{convexity-optimality} to derive that  $ f(\bx^{\tau_{k}})\rightarrow f(\bx^{*})$, which recalling Theorem  \ref{global-obj-convergence-exact} i) yields \eqref{L-global-convergence-limit}. 
 
 ii) The conclusion follows from Theorem  \ref{global-convergence-exact} ii) and the fact that  the stationary points are equivalent to optimal solutions if $f$ is convex.
 
 iii)   The strong convexity of $f$ means that
 there is a positive constance $\nu$ such that 
 \begin{eqnarray}  \label{strong-convexity-nu}
   \arraycolsep=1.4pt\def\arraystretch{1.5}
  \begin{array}{llll}
  f( \bx^{\tau_{k}}) -f( \bx^*)
 &\geq& \langle \nabla f( \bx^*), \bx^{\tau_{k}}-\bx^*\rangle + \frac{\nu}{2}\|\bx^{\tau_{k}}-\bx^*\|^2\\
 & =&  \frac{\nu}{2}\|\bx^{\tau_{k}}-\bx^*\|^2, 
    \end{array} 
 \end{eqnarray}
where the equality is due to \eqref{opt-con-FL-opt-ver1}. Taking limit of both sides of the above inequality immediately shows $ \bx^{\tau_{k}}\rightarrow\bx^*$ since $ f( \bx^{\tau_{k}}) \rightarrow f( \bx^*)$. This together with \eqref{limit-5-term-0} yields $ \bx_i^k\rightarrow\bx^*$. Finally, $ \bpi_i^k\rightarrow\bpi_i^*$ because of   
 \begin{eqnarray*}  
   \arraycolsep=1.4pt\def\arraystretch{1.5}
  \begin{array}{lcl}
  \|\bpi_i^{k}-\bpi_i^{*}\|&\overset{\eqref{opt-con-xk1-2},\eqref{opt-con-FL-opt-ver1}}{=}&\|\overline\bg_i^{k} + \frac{1}{m}    H_i    \triangle\blx_i^{k}-\frac{1}{m}  \nabla f_i(\bx^*)\|\\
  & \overset{\eqref{Lip-r}}{\leq}&   \frac{r_i}{m}(\| \bx^{\tau_k}-\bx^*\|+\|\triangle\blx_i^{k}\|) \rightarrow 0,
      \end{array} 
 \end{eqnarray*}
 displaying the desired result.
\end{proof}
 \subsection{Proof of Theorem \ref{complexity-thorem-gradient}}
\begin{proof} For any $j\geq1$ and \eqref{opt-con-xk1-3-simple},  there is
  \begin{eqnarray}\label{Lip-dpi-dxi} 
   \arraycolsep=1.4pt\def\arraystretch{1.5}
  \begin{array}{lllll}\| \triangle \bpi_i^{j+1} \|^2   
  & \leq& \frac{\sigma ^2}{6}(\|\triangle \bx_i^{j+1}\|^2+     \|\triangle\bx^{\tau_{j+1}}\|^2).
\end{array}   \end{eqnarray}
We note that $\triangle \blx^{k+1}_i=\triangle \bpi_i^{k+1}/\sigma\rightarrow0$ from \eqref{ceadmm-sub3}  if $i\in\ctauk$ and $\triangle \blx^{k+1}_i=0$ from \eqref{ceadmm-sub6}  if $i\notin\ctauk$. Therefore,
\begin{eqnarray} \label{gradient-of-xk-xtuak-0}
   \arraycolsep=1.4pt\def\arraystretch{1.5}
  \begin{array}{lcl}
  \| \triangle \blx^{k+1}_i \| 
  \leq \|\triangle \bpi_i^{k+1}/\sigma \|, ~~\forall~ i\in[m].   
    \end{array}  
 \end{eqnarray}
Now we focus on  $s\in\K$. This by \eqref{sum-pi-0} results in 
\begin{eqnarray*}
 \arraycolsep=1.4pt\def\arraystretch{1.5}
 \begin{array}{rcll} 
\sum_{i=1}^m  \bpi_i^{s+1} = \sum_{i=1}^m  \sigma\triangle\bx_i^{s+1} + \sum_{i\notin\mathcal C^{\tau_{s+1}}}  \triangle \bpi_i^{s+1},
\end{array} \end{eqnarray*} 
which leads to
\begin{eqnarray*}
 \arraycolsep=1.4pt\def\arraystretch{1.5}
 \begin{array}{rcll} 
\|\sum_{i=1}^m  \bpi_i^{s+1}\|^2
\leq m\sum_{i=1}^m 2(\|  \sigma \triangle\bx_i^{s+1}\|^2+\| \triangle \bpi_i^{s+1} \|^2).
\end{array} \end{eqnarray*}
Using this condition generates  
  \begin{eqnarray}\label{fact-max}
  \arraycolsep=1.5pt\def\arraystretch{1.5}
  \begin{array}{lcl}
&&\|\nabla f(\bx^{\tau_{s+1}})\|^2 =\|\sum_{i=1}^m  \blg_i^{{s+1}}\|^2\\
&\overset{\eqref{opt-con-xk1-2}}{=}& \|\sum_{i=1}^m  (\bpi_i^{s+1} + \frac{1}{m}    H_i    \triangle\blx_i^{s+1} ) \|^2\\
&{\leq}& 2\|\sum_{i=1}^m  \bpi_i^{s+1}\|^2 + 2m\sum_{i=1}^m\frac{r_i^2}{m^2}   \| \triangle\blx_i^{s+1}  \|^2\\
&\overset{\eqref{ri-m-sigma},\eqref{gradient-of-xk-xtuak-0}}{\leq}& 2\|\sum_{i=1}^m  \bpi_i^{s+1}\|^2 + \frac{m}{18}\sum_{i=1}^m    \| \triangle\bpi_i^{s+1}  \|^2\\
&\leq&  m\sum_{i=1}^m (4\|  \sigma \triangle\bx_i^{s+1}\|^2+5\| \triangle \bpi_i^{s+1} \|^2)\\
&\overset{\eqref{Lip-dpi-dxi} }{\leq}&  5m\sigma ^2 \sum_{i=1}^m   (\|\triangle \bx_i^{s+1}\|^2+     \|\triangle\bx^{\tau_{s+1}}\|^2)\\
&\overset{\eqref{decreasing-property-0} }{\leq}&100m\sigma  (\L (\Z^s)-\L(\Z^{s+1})).
   \end{array}
  \end{eqnarray}
Since sequence $\{\L(\Z^{k})\}$ is non-increasing from Lemma \ref{L-bounded-decreasing}, it has  $\L(\Z^{tk_0+1}) \geq \L (\Z^{(t+1)k_0}) \geq f^*$ by Lemma \ref{L-bounded-decreasing} for any $t\geq0$, thereby resulting in
   \begin{eqnarray} \label{fact-sum-0K}
   \arraycolsep=1.4pt\def\arraystretch{1.5}
  \begin{array}{lllll}
 &&\sum_{t=0}^{\tau_{k+1}-1}( \L(\Z^{tk_0})-\L(\Z^{tk_0+1}))\\
&=& \L(\Z^{0})-\sum_{t=0}^{\tau_{k+1}-2}(\L(\Z^{tk_0  +1})-\L (\Z^{(t+1)k_0}) )\\
&-&\L(\Z^{(\tau_{k+1}-1) k_0  +1}) \\
&\leq& \L(\Z^{0})- \L(\Z^{(\tau_{k+1}-1) k_0  +1}) \leq \L(\Z^{0})- f ^*.  
    \end{array}  
 \end{eqnarray}
 We note that for $j=0,1,2,\ldots,\tau_{k+1} k_0-1$,
  \begin{eqnarray*}  
   \arraycolsep=1.5pt\def\arraystretch{1.0}
\tau_{j+1}=\left\{  \begin{array}{lll}
 1,& j=0,1,\ldots,k_0-1,\\
 2,& j=k_0,k_0+1,\ldots,2k_0-1,\\
 ~ \vdots & ~~~\vdots\\
 \tau_{k+1},& j=(\tau_{k+1}-1) k_0,\ldots,k,\ldots,\tau_{k+1}k_0-1.
    \end{array}  \right.
 \end{eqnarray*} 
 Using the  above three facts and $k<\tau_{k+1}k_0-1$, we derive
   \begin{eqnarray*} 
   \arraycolsep=1.5pt\def\arraystretch{1.5}
  \begin{array}{lcl}
&&\min_{j\in[k]}   \|\nabla  f(\bx^{\tau_{j}})\|^2\\
 &\leq&\frac{1}{k}\sum_{j=0}^{k-1}\|\nabla  f(\bx^{\tau_{j+1}})\|^2\\
 &\leq&\frac{1}{k} \sum_{j=0}^{\tau_{k+1}k_0-1}\|\nabla  f(\bx^{\tau_{j+1}})\|^2\\
 &=& \frac{k_0}{k} \sum_{t=0}^{\tau_{k+1}-1}   \|\nabla  f(\bx^{\tau_{tk_0+1}})\|^2\\
&\overset{\eqref{fact-max}}{\leq}&    \frac{100m\sigma k_0}{k} \sum_{t=0}^{\tau_{k+1}-1 }\left( \L(\Z^{tk_0})-\L(\Z^{tk_0+1})\right)\\
&\overset{\eqref{fact-sum-0K}}{\leq}& \frac{100m\sigma k_0}{k}   (\L(\Z^0)-f^*), 
    \end{array}  
 \end{eqnarray*}
completing the proof.
 \end{proof}

  \subsection{Proof of Theorem \ref{complexity-thorem-F-F}}
 \begin{proof}  Again, for any $t\in{\mathbb N}:=\{0,1,2,\cdots\}$, denote  
   \begin{eqnarray*}
 \arraycolsep=1.4pt\def\arraystretch{1.5}
  \begin{array}{l}
 \F^t =: \L(\Z^{tk_0+1}),~ \triangle \F^t:=\F^t- \F^\infty. 
   \end{array}
  \end{eqnarray*} 
Throughout the proof, assume $\F^t \neq \F^\infty$.  Recalling  $\varphi(z)=\frac{\sqrt{c}}{1-\theta} z^{1-\theta}$, we obtain
    \begin{eqnarray*} 
   \arraycolsep=1.0pt\def\arraystretch{1.5}   
\begin{array}{lcl}
1&\overset{\eqref{desingularizing}}{\leq}&  (\varphi'(\triangle \F^t))^2 (2\sigma+1) ^2 m  \varpi_{tk_0+1}\\
&=& c(\triangle\F^t)^{-2\theta} (2\sigma+1) ^2 m  \varpi_{tk_0+1}\\
&\overset{\eqref{decreasing-property-0-tk0} }{\leq}& c(\triangle\F^t)^{-2\theta}   \frac{24 m(2\sigma+1) ^2}{\sigma}(\F^{t-1}- \F^t).
    \end{array}  
 \end{eqnarray*} 
  By letting $\rho: = \frac{24 m c(2\sigma+1) ^2}{\sigma}$, we have
      \begin{eqnarray} \label{rho-F-F-F}
   \arraycolsep=1.0pt\def\arraystretch{1.5}   
\begin{array}{lcl}
 \rho(\triangle \F^{t-1} - \triangle\F^t) \geq  (\triangle \F^t)^{2\theta}.
    \end{array}  
 \end{eqnarray} 
 Now we prove the results by three cases:
 \begin{itemize}[leftmargin=10pt]
 \item If $\theta=0$, then $\rho(\triangle \F^{t-1} - \triangle\F^t)\geq1$. However, $\triangle \F^{t} \to 0$, which leads to a contradiction. Therefore, we have that $\F^t = \F^\infty$ when $t$ is over a threshold $t_1\geq t_0>0$.
 \item If $\theta\in(0,1/2]$, then there is a $t_2\geq t_0>0$ such as $\triangle \F^t\in[0,1]$ for any $t\geq  t_2 $ as $\triangle \F^{t} \to 0$, which by \eqref{rho-F-F-F} yields
       \begin{eqnarray*} 
   \arraycolsep=1.0pt\def\arraystretch{1.5}   
\begin{array}{lcl}
 \rho(\triangle \F^{t-1} - \triangle\F^t) \geq  (\triangle \F^t)^{2\theta} \geq \triangle \F^t.
    \end{array}  
 \end{eqnarray*} 
This allows us to derive that
        \begin{eqnarray*} 
   \arraycolsep=0pt\def\arraystretch{1.5}   
\begin{array}{lcl}
  \triangle \F^t {\leq}  \frac{\rho}{\rho+1}\triangle \F^{t-1} 
    {\leq}  (\frac{\rho}{\rho+1})^2\triangle \F^{t-2} 
    \cdots {\leq} (\frac{\rho}{\rho+1})^{t-t_2}\triangle \F^{t_2}.
    \end{array}  
 \end{eqnarray*}  
 \item If $\theta\in(1/2,1)$, then $\phi(z):= \frac{\rho}{1-2\theta} z^{1-2\theta}$ is an increasing function. If $(\triangle \F^{t-1})^{-2\theta}\geq (\triangle \F^{t})^{-2\theta}/2$, then 
         \begin{eqnarray*} 
   \arraycolsep=1pt\def\arraystretch{1.75}   
\begin{array}{lcl}
&& \phi(\triangle \F^{t-1}) - \phi(\triangle \F^t)\\
 &=& \int^{\triangle \F^{t-1}}_{\triangle \F^{t}} \phi'(z) dz = \int^{\triangle \F^{t-1}}_{\triangle \F^{t}} \rho z^{-2\theta} dz \\
  &\geq&  \rho(\triangle \F^{t-1}-\triangle \F^{t})  (\triangle \F^{t-1})^{-2\theta} \\ 
  &\geq&  \frac{1}{2}\rho(\triangle \F^{t-1}-\triangle \F^{t})  (\triangle \F^{t})^{-2\theta} \overset{\eqref{rho-F-F-F} }{\geq}  \frac{1}{2}.
    \end{array}  
 \end{eqnarray*}  
 If $(\triangle \F^{t-1})^{-2\theta}\leq (\triangle \F^{t})^{-2\theta}/2$, then 
          \begin{eqnarray*} 
   \arraycolsep=1pt\def\arraystretch{1.5}   
\begin{array}{lcl}
&& (\triangle \F^{t-1})^{-2\theta}\leq (\triangle \F^{t})^{-2\theta}/2\\
&\Longleftrightarrow& 2^{\frac{1}{2\theta}}(\triangle \F^{t})  \leq (\triangle \F^{t-1}) \\ 
&\Longleftrightarrow& 2^{\frac{1-2\theta}{2\theta}}(\triangle \F^{t})^{1-2\theta}  \geq (\triangle \F^{t-1})^{1-2\theta}, 
    \end{array}  
 \end{eqnarray*}
 which by the non-increasing property of $\{\triangle\F^{t}\}$ suffices to 
          \begin{eqnarray*} 
   \arraycolsep=1pt\def\arraystretch{1.5}   
\begin{array}{lcl}
 &&(1-2\theta) (\phi(\triangle \F^{t-1}) - \phi(\triangle \F^t)) \\
 &=& \rho  \left((\triangle \F^{t-1})^{1-2\theta}-(\triangle \F^{t})^{1-2\theta}\right)\\
 &\leq& \rho(2^{\frac{1-2\theta}{2\theta}}-1 ) (\triangle \F^{t})^{1-2\theta}\\
  &\leq& \rho(2^{\frac{1-2\theta}{2\theta}}-1 ) (\triangle \F^{0})^{1-2\theta}=:c.
    \end{array}  
 \end{eqnarray*}
 Let $a{:=}\min\{\frac{1}{2},\frac{c}{1-2\theta}\}$. Both cases lead to
           \begin{eqnarray*}  
\begin{array}{lcl}
 \phi(\triangle \F^{t-1}) - \phi(\triangle \F^t) \geq  a.
    \end{array}  
 \end{eqnarray*}
 We note from \eqref{exist-t0}, the KL property holds when $t\geq t_0$. Therefore, the above inequality holds for $t\geq t_3=t_0$. Summing the both sides of the above inequality for $j=t_3,t_3+1,\cdots,t$ derives that
            \begin{eqnarray*} 
   \arraycolsep=1pt\def\arraystretch{1.5}   
\begin{array}{lcl}
 \phi(\triangle \F^{t_3-1}) - \phi(\triangle \F^t) &\geq&\sum_{j=t_3}^t a = (t-t_3)a.
    \end{array}  
 \end{eqnarray*}
 This condition indicates
             \begin{eqnarray*} 
   \arraycolsep=1pt\def\arraystretch{1.75}   
\begin{array}{lcl}
\frac{\rho}{1-2\theta} (\triangle \F^t)^{1-2\theta}& =&  \phi(\triangle \F^t)\\
& \leq & -(t-t_3)a +  \frac{\rho}{1-2\theta} (\triangle \F^{t_3-1})^{1-2\theta}\\
& \leq & -(t-t_3)a.
    \end{array}  
 \end{eqnarray*}
 Henceforth, we obtain
              \begin{eqnarray*} 
   \arraycolsep=1pt\def\arraystretch{1.75}   
\begin{array}{lcl}
\triangle \F^t  
 \leq  (\frac{(2\theta-1)a}{\rho}(t-t_3))^{\frac{1}{1-2\theta}}.
    \end{array}  
 \end{eqnarray*} 
  \end{itemize} 
  Finally, the above three cases,   $f(\bx^{\tau_k}){\leq} \L(\Z^k)$ for any $k{\geq}0$ from lemma \ref{L-bounded-decreasing} ii), and \eqref{L-local-convergence-limit-1}  can derive the conclusion.
  \end{proof}

\appendices


\ifCLASSOPTIONcaptionsoff
  \newpage
\fi

\end{document}